\documentclass{article}

\usepackage{microtype}
\usepackage{graphicx}
\usepackage{subcaption}
\usepackage{booktabs} 

\usepackage{hyperref}

\usepackage[accepted]{icml2026}

\usepackage{amsmath}
\usepackage{amssymb}
\usepackage{mathtools}
\usepackage{amsthm}

\usepackage[capitalize,noabbrev]{cleveref}

\theoremstyle{plain}
\newtheorem{theorem}{Theorem}[section]
\newtheorem{proposition}[theorem]{Proposition}
\newtheorem{lemma}[theorem]{Lemma}
\newtheorem{corollary}[theorem]{Corollary}
\theoremstyle{definition}
\newtheorem{definition}[theorem]{Definition}

\theoremstyle{remark}
\newtheorem{remark}[theorem]{Remark}

\usepackage[textsize=tiny]{todonotes}

\icmltitlerunning{Mind Dreamer: Untethering Imagination via Active Causal Intervention on Latent Manifolds}

\begin{document}

\twocolumn[
  \icmltitle{Mind Dreamer: Untethering Imagination via Active Causal Intervention on Latent Manifolds}


  \icmlsetsymbol{equal}{*}

  \begin{icmlauthorlist}
    \icmlauthor{Shaojun Xu}{thu}
    \icmlauthor{Xiaoling Zhou }{zju}
    \icmlauthor{Yihan Lin}{xmu}
    \icmlauthor{Yapeng Meng}{thu,xijian}
    \icmlauthor{Xinglong Ji}{thu}
    \icmlauthor{Luping Shi}{thu}
    \icmlauthor{Rong Zhao}{thu}
  \end{icmlauthorlist}

      \icmlaffiliation{thu}{Center for Brain-Inspired Computing Research, Department of Precision Instrument, Tsinghua University, Beijing, China}
  \icmlaffiliation{zju}{College of Computer Science and Technology, Zhejiang University, Hangzhou, China}
  \icmlaffiliation{xijian}{Primevision Technology, Shanghai, China}
  \icmlaffiliation{xmu}{Pen-Tung Sah Institute of Micro-Nano Science and Technology, Xiamen University, Xiamen, China}

  \icmlcorrespondingauthor{Rong Zhao}{r\_zhao@tsinghua.edu.cn}

  \icmlkeywords{Machine Learning, ICML}

  \vskip 0.3in
]



\printAffiliationsAndNotice{}  

\begin{abstract}
Model-Based Reinforcement Learning yields sample efficiency via latent imagination, yet remains constrained by \textbf{Historical Tethering}: imagination is typically initialized from observed states. This creates a learning asymmetry, where the world model's manifold discovery outpaces the policy's sparse-reward optimization. We propose \textbf{Mind Dreamer (MD)}, a framework that instantiates \textbf{Active Causal Intervention} to transcend Markovian continuity. MD reformulates discovery as the minimization of a global Relay Expected Free Energy. 
Instead of initializing from historical data, it draws initial states from an adversarial generator $s_0 \sim p_{gen}(\cdot)$, creating non-continuous \textbf{latent jumps} to epistemic blind spots that are physically plausible yet cognitively challenging.
We derive \textbf{Relay Value Function} and \textbf{Relay Uncertainty Function} to resolve the credit assignment paradox across these spatial ruptures. 
Treating synthesized anchors as interventional intermediary states, these potentials propagate pragmatic and epistemic value through Bellman-style backups.
Notably, we prove that uncertainty propagation across discontinuities necessitates a quadratic discount $\gamma^2$, establishing a formal epistemic horizon. Theoretically, MD approximates a variance-minimizing importance sampler that expands the manifold's spectral gap, reducing the hitting time to critical bottleneck states. Empirically, MD achieves a \textbf{1.67$\times$ average speedup} over DreamerV3 on DeepMind Control Suite, reaching \textbf{8.8$\times$} in sparse-reward tasks.

\end{abstract}

\begin{figure}[t]
  \begin{center}
    \centerline{\includegraphics[width=1.0\columnwidth]{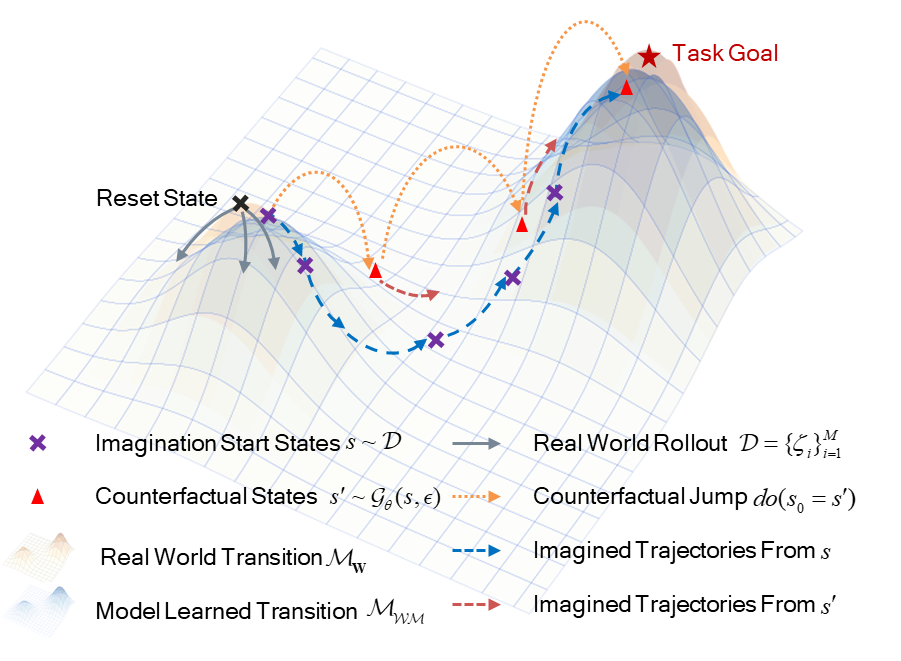}} 
    \caption{
\textbf{Untethering Imagination via Active Causal Intervention}. Unlike standard MBRL (blue) which is \textbf{tethered} to historical observations ($s \sim \mathcal{D}$), Mind Dreamer enables proactive latent intervention beyond historical support. An adversarial generator $\mathcal{G}$ performs non-continuous \textbf{latent jumps} via generated interventional states (orange) to synthesize anchors (red). These anchors bridge OOD regions, enabling goal-directed imagination on the learned manifold.
    }
    \label{fig:fig1-manifold}
  \end{center}
\end{figure}

\section{Introduction}

Model-Based Reinforcement Learning (MBRL) has emerged as a cornerstone of sample-efficient AI, predicated on the power of \textit{latent imagination}—the ability to simulate future outcomes within a learned world model~\cite{ha2018world, sutton1991dyna, hafner2025mastering}. This paradigm is conceptually anchored by the \textbf{Manifold Hypothesis}: while sensory observations are high-dimensional, they reside on a low-dimensional manifold $\mathcal{M}$ governed by the environment's underlying physics~\cite{bengio2013representation}. Under this view, MBRL is a dual-process of geometric discovery: learning the tangent space of $\mathcal{M}$ (dynamics) and regressing a value functional over its topology.

However, a fundamental bottleneck remains: \textbf{Historical Tethering}. While modern world models rapidly recover the global structure of $\mathcal{M}$ via dense self-supervised objectives~\cite{lecun2022path,bardes2021vicreg}, imagination remains a ``prisoner of history''—typically initialized only from observed states in a replay buffer~\cite{ecoffet2019go, dabney2021value}. This creates a critical \textbf{Learning Asymmetry}: 
the agent's internal atlas of the environment's physics (the manifold) expands far more rapidly than its policy can optimize for sparse rewards~\cite{baker2022video,sekar2020planning}. 
Consequently, imagination is confined to previously traversed regions, forcing the agent into a costly random walk when encountering out-of-distribution (OOD) regions, even if its world model already possesses the structural knowledge to bridge them.

Prior attempts to mitigate exploration bottlenecks have largely relied on intrinsic motivation~\cite{sekar2020planning, houthooft2016vime} or goal-relabeling~\cite{andrychowicz2017hindsight}. While these methods incentivize the discovery of novel regions, they remain  trajectory-bound.
In this work, we operationalize this asymmetry via \textbf{Active Causal Intervention (ACI)}. We re-envision the world model as an adversarial generator capable of targeted latent intervention rather than a passive simulator
By sampling initial imagination states from a learned generator $s_0 \sim p_{gen}(\cdot)$ rather than the historical buffer $s_0 \sim \mathcal{D}$, ACI transcends the constraints of Markovian continuity, enabling the agent to perform non-continuous \textbf{latent jumps} to synthesized anchors at the frontiers of its knowledge. 

We introduce Mind Dreamer (MD), a framework that transforms MBRL from a passive replay mechanism into an information-theoretic stress test. MD utilizes an adversarial state generator $\mathcal{G}$ that lifts the principle of Expected Free Energy (EFE)~\cite{friston2009free, tschantz2020reinforcement} from a local policy-selection metric to a global discovery functional.
This allows the agent to identify intermediary states—regions where pragmatic goal-alignment and epistemic uncertainty are optimally balanced~\cite{friston2017active}.
However, evaluating a non-local jump presents a \textit{credit assignment paradox}: temporal consistency is broken. To resolve this, we derive two recursive functionals: the \textbf{Relay Value Function (RVF)} and \textbf{Relay Uncertainty Function (RUF)}. Unlike standard value functions that treat states as destinations, these Relay Potentials treat synthesized anchors as \textbf{intermediary states}, providing a Bellman-style formulation to evaluate the multi-step utility of a jump across spatial ruptures.

Our theoretical analysis demonstrates that Mind Dreamer approximates a \textbf{variance-minimizing importance sampler} for manifold refinement. We prove that minimizing the R-EFE objective is mathematically equivalent to minimizing the variance of the world model's gradients, effectively performing a "Manifold Repair" curriculum. Crucially, we show that integrating informational shocks requires a \textbf{quadratic discount} $\gamma^2$, establishing a formal \textit{Epistemic Horizon} that prevents the generator from chasing distal hallucinations. From a topological perspective, we prove that MD increases the manifold's \textit{conductance}, effectively expanding the manifold's spectral gap, reducing the hitting time to critical states.

\begin{figure}[t]
  \begin{center}
    \centerline{\includegraphics[width=1.0\columnwidth]{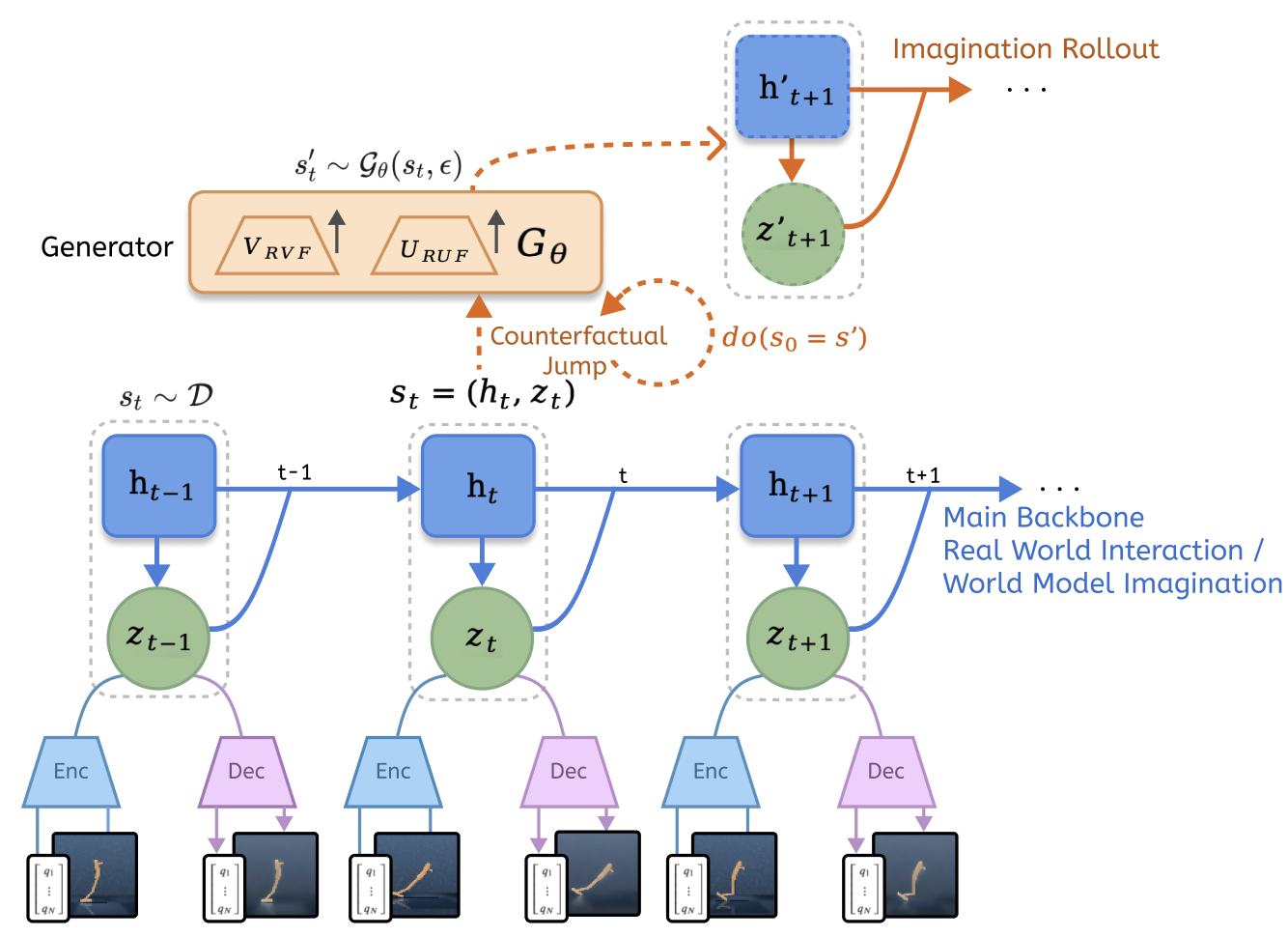}} 
    \caption{\textbf{The Mind Dreamer Paradigm.} 
Unlike standard MBRL tethered to historical observations, MD enables proactive discovery via:  (1) \textbf{Adversarial Synthesis}, where a generator $\mathcal{G}_\theta$ uses generated initial states to create interventional anchors $s'$ at manifold frontiers. (2) \textbf{Relay Guidance}, which leverages the Relay Value ($V_{RVF}$) and Uncertainty ($V_{RUF}$) functions to assign potential across spatial ruptures. This untethers imagination from past trajectories, targeting physically plausible yet epistemically rich regions for global manifold repair.
}
    \label{fig:fig2-pipeline}
  \end{center}
\end{figure}

Our contributions are summarized as follows:

\textbf{A New Paradigm: Active Causal Intervention (ACI).} We identify the ``Historical Tethering'' limitation and introduce the latent generator distribution $p_{gen}$ to enable goal-directed, non-continuous intervention.

\textbf{Bridging Discontinuity: Relay Potential Fields.} We derive RVF and RUF as recursive functionals as credit assignment to bridge spatial ruptures, introducing the $\gamma^2$ discount for stable uncertainty propagation across \textbf{latent jumps}.

\textbf{Theoretical Optimality and Speedup.} 

We prove that MD performs optimal importance sampling on manifold residuals, yielding a convergence speedup proportional to the expansion of the manifold's spectral gap.

\textbf{Empirical Validation.} 

On the DeepMind Control Suite, Mind Dreamer achieves a \textbf{1.67$\times$ average speedup} in sample efficiency over DreamerV3, with acceleration reaching \textbf{up to 8.8$\times$} in challenging bottleneck tasks.

\newpage
\section{Related Works}
The evolution of MBRL has transitioned from simple Dyna-style planning~\cite{sutton1991dyna} to complex latent imagination~\cite{ha2018world, hafner2023mastering}. However, the reliance on historical trajectory support remains a bottleneck. Mind Dreamer sits at the intersection of Active Inference, Causal Intervention, and Manifold Topology.

\subsection{From Passive Curiosity to Active Manifold Repair}
Traditional exploration, such as prediction error~\cite{pathak2017curiosity} or information gain~\cite{houthooft2016vime} often relies on intrinsic bonuses to steer the agent's random walk. While recent active inference works~\cite{tschantz2020reinforcement, mazzaglia2021contrastive, millidge2020deep, friston2017active, ccatal2020deep} treat control as inference, unified exploration and exploitation under Free Energy Principle~\cite{friston2009free}, they remain \textit{trajectory-bound}: Expected Free Energy (EFE) is typically evaluated along continuous Markovian paths. Plan2Explore~\cite{sekar2020planning} optimizes for global curiosity but is still constrained by initializing rollouts from the replay buffer. Mind Dreamer (MD) transcends this by lifting EFE into a global curriculum for \textbf{Manifold Repair}, using an adversarial generator to synthesize interventional anchors in regions where the world model's geometry is most ``brittle.''

\subsection{Beyond Trajectory Exploitation: Latent $do$-Intervention}
Standard techniques like Hindsight Experience Replay (HER)~\cite{andrychowicz2017hindsight} and Hierarchical RL~\cite{nachum2018data, lisub-hippo} excel at trajectory exploitation—maximizing the utility of \textit{collected} data through relabeling or abstraction. However, they remain ``prisoners of the buffer,'' unable to reason beyond the convex hull of past experiences. Our approach shifts the paradigm toward \textbf{Manifold Intervention}~\cite{pearl2009causality}. By sampling from a learned intervention  distribution $p_{gen}$, MD decouples environmental physics from historical policy bias. Unlike Go-Explore~\cite{ecoffet2021first}, which requires explicit environment resets to known states, MD performs mental teleportation to synthesized frontiers, performing intervention-based exploration in unvisited latent regions.  

\subsection{Credit Assignment Across Spatial Ruptures}
Breaking Markovian continuity presents a significant credit assignment paradox. Standard temporal difference methods like TD($\lambda$)~\cite{sutton1988learning} or Eligibility Traces~\cite{sutton2018reinforcement, seijen2014true} fail when trajectories are non-continuous. While Successor Representations (SR)~\cite{dayan1993improving, machado2018eigenoption} decouple dynamics from rewards, they typically require continuous discovery to build the occupancy map. MD extends this logic via \textbf{Relay Potential Fields}. Unlike Goal-Conditioned RL~\cite{schaul2015universal} where targets are terminal sinks, our RVF and RUF reformulate synthesized states as \textbf{interventional intermediary states}. This allows the agent to propagate pragmatic and epistemic value across spatial ruptures, identifying high-leverage via-points that connect disjoint regions of the manifold.

\section{Methodology: Mind Dreamer}
The core philosophy of MD is that efficient learning requires the agent to transcend its historical trajectories and proactively explore unmastered regions of the latent manifold $\mathcal{M}$ through structured \textbf{latent jumps}.

\subsection{Problem Formulation: MBRL on Latent Manifolds}
We reformulate the standard POMDP through the lens of the Manifold Hypothesis~\cite{cayton2005algorithms}, by assuming that observations $o \in \mathcal{X}$ are realizations of an underlying low-dimensional latent manifold $\mathcal{M} \subset \mathbb{R}^d$, where $d \ll \dim(\mathcal{X})$, and an encoder $e_\psi: \mathcal{X} \to \mathcal{M}$ maps observations to latent states $s \in \mathcal{M}$. This perspective allows us to decompose MBRL into two distinct learning processes:

\textbf{Learning the Transition Structure (World Model)}: The world model $p_\psi(s_{t+1}|s_t, a_t)$ approximates the transition dynamics. By minimizing reconstruction and transition errors, the model learns the local transition structure of the manifold.

\textbf{Learning the Value Field (Value Function)}: The optimal value function $V^*_\phi(s)$ represents a scalar field over $\mathcal{M}$ that quantifies task-optimal potential.

\subsection{Active Causal Intervention}
To transcend the ``Historical Tethering'' of standard MBRL, we propose ACI. This mechanism shifts the world model's role from a passive simulator of past experiences to an active adversarial generator for directed manifold discovery.
The core of ACI is leveraging the Learning Asymmetry: the world model captures the transition structure of the latent manifold $\mathcal{M}$ via dense self-supervised signals far more rapidly than the policy resolves sparse rewards. We exploit this by introducing a learned intervention distribution $p_{gen}$.

\begin{definition} \label{def:interventional_jump}
\textbf{Causal Intervention Jump}. 
Sampling $s_0 \sim p_{gen}(\cdot)$ allows the agent to initialize imagination at any anchor $s' \in \mathcal{M}$, provided $s'$ satisfies the manifold consistency $\mathcal{L}_{mf}$ (see Sec.~\ref{sec:adversarial_curriculum}). This decouples discovery from historical buffer $\mathcal{D}$ while remaining anchored to the Markovian flow $s_t \sim p_\psi(\cdot | s_{t-1}, a_{t-1})$ from learned physics of the transition structure.
\end{definition}

\pagebreak
Unlike standard rollouts initialized from the replay buffer $s_0 \sim \mathcal{D}$, ACI allows the agent to sample $s_0 \sim p_{gen}(\cdot)$, where $s' \in \mathcal{M}$ is a intervention anchor synthesized by an adversarial generator $\mathcal{G}_\theta(s, \epsilon)$. 
This allows the imagination to ``teleport'' across spatial ruptures, investigating regions that are physically consistent with the environment's laws but have not been mastered by the current policy. 
The generator $\mathcal{G}$ acts as an adversarial stress-tester that identifies regions where the latent manifold's transition structure $T\mathcal{M}$ is under-sampled or where the value function $V_\phi$ exhibits high curvature. 
$\mathcal{G}_\theta(s, \epsilon)$ operates exclusively during the imagination phase, ensuring that no extra interaction costs are imposed on the environment.

\subsection{From Local EFE to Global Relay Discovery}

To guide the state generator $\mathcal{G}$, we leverage the framework of Active Inference, replacing simple reward maximization with the minimization of Expected Free Energy (EFE)~\cite{friston2017active}. While standard MBRL is reactive, EFE provides a proactive objective that unifies goal-directed behavior (pragmatic value) with curiosity-driven exploration (epistemic value). In App.~\ref{app:sec:maxent}, we demonstrate that MaxEnt RL is a special case of EFE~\cite{millidge2020relationship}.

\begin{definition} \label{def:local_objective}
\textbf{The Local Objective.} For a synthesized anchor $s'$, the local EFE $G(s')$ quantifies its immediate utility (inference details in Appendix~\ref{app:lem:efe_decomp}):
\begin{equation} \label{eq:efe_local}
G(\pi, \tau) = -\beta\underbrace{\mathcal{I}(s_\tau; o_\tau | \pi)}_{\text{Epistemic Value }\mathcal{E}(s')} -\eta \underbrace{\mathbb{E}_{q(o_\tau)} [\ln p(o_\tau)]}_{\text{Pragmatic Value}\mathcal{P}(s')}
\end{equation}
where $\mathcal{P}(s')$ aligns with the agent's goal-prior $p(o|C)$~\cite{levine2018reinforcement} and $\mathcal{E}(s')$ measures the Mutual Information $\mathcal{I}$, identifying regions where the world model's tangent space is ill-defined~\cite{houthooft2016vime}.
\end{definition}

However, Point-wise EFE fails for non-local jumps because it ignores the \textbf{reachability} and \textbf{distal potential} beyond the jump. We therefore lift $G(s')$ into the Relay EFE (R-EFE), denoted as $\Psi(s, s')$. 
\begin{equation}\label{eq:R-EFE}
\Psi(s, s') = \mathbb{E}_{q}\left[ \sum_{k=1}^{H} \gamma^k G(s_k) \middle| s_0=s, s'\in\xi \right]
\end{equation} 
R-EFE represents a path-integral of free energy, evaluating not just the immediate utility of an anchor $s'$, but the cumulative potential of all trajectories branching from $s'$.

\subsection{Bridging the Gap: Relay Potential Fields}
Evaluating the R-EFE objective across spatial ruptures presents a significant credit assignment challenge: as standard value functions depend on local temporal consistency ($s_t \to s_{t+1}$), they fail to propagate gradients across non-continuous spatial ruptures ($s_t \to s'_{t+1}$). We decompose the Eq.~\ref{eq:R-EFE} into two recursive \textbf{Relay Potential Fields}. These fields act as tractable proxies, allowing the generator $\mathcal{G}$ to perform functional gradient ascent on the manifold's utility landscape.

\textbf{Pragmatic Proxy (RVF)}: Unlike goal-conditioned RL typically treats $s'$ as a terminal destination~\cite{andrychowicz2017hindsight, schaul2015universal}, Relay Value Function $V_{\text{RVF}}$ treats the anchor $s'$ as a \textbf{interventional intermediary state} rather than a terminal destination. It integrates the cost of reaching $s'$ with the value function $V_\phi(s')$ available thereafter (Formal definition in Appendix~\ref{app:def:path_rvf}):
    \begin{equation}\label{eq:RVF-k-step}
         V_{RVF}(s, s') \triangleq \max_{\pi} \mathbb{E}_{\substack{\pi \\ s_k = s'}} \bigg[ \sum_{t=0}^{k-1} \gamma^t r_t + \gamma^k V_\phi(s') \bigg]
     \end{equation}
This ensures $\mathcal{G}$ identifies Pragmatic Gateways: states that are not necessarily high-reward themselves but are strategically positioned to unlock high-density reward regions.

\textbf{Epistemic Proxy (RUF)}: The Relay Uncertainty Function $V_{\text{RUF}}$ quantifies the \textbf{informational density} of a jump. Crucially, it aggregates expected information gain $\mathcal{I}$ using a \textbf{quadratic discount} $\gamma^2$ (Derivation in Appendix~\ref{app:thm:gamma2}):
    \begin{equation}\label{eq:RUF-k-step}
        V_{RUF}(s,s') \triangleq \mathbb{E}_{\substack{\pi \\ s_k = s'}} \bigg[ \sum_{t=0}^{k-1} \gamma^{2t}\mathcal{I}_{t+1} + \gamma^{2k} U_{\phi_u}(s') \bigg]
    \end{equation}
Where $U_{\phi_u}(s')$ is the Bellman Uncertainty Function~\cite{o2018uncertainty}. As detailed in~\ref{rem:epistemic_horizon}, the $\gamma^2$ term is not a hyperparameter but a requisite for stable variance propagation, ensuring the generator targets regions of genuine epistemic volatility while remaining robust to distal hallucinations where model variance compounds uncontrollably.

\subsection{The Adversarial Curriculum: Structural Constraints and Co-evolution}
\label{sec:adversarial_curriculum}
By redefining EFE minimization through RVF and RUF maximization, we transform the active inference problem into a standard RL optimization task for the generator $\mathcal{G}_\theta(s, \epsilon)$:
\begin{equation}\label{eq:gen_obj}
\max_{\theta} \mathbb{E}_{\substack{s \sim \mathcal{D} \\ s' \sim \mathcal{G}_\theta(s, \epsilon)}} \Big[ \eta V_{RVF}(s, s') + \beta V_{RUF}(s, s') - \lambda \mathcal{L}_{mf}(s') \Big]
\end{equation}
This creates a adversarial minimax optimization: $\mathcal{G}$ proactively identifies the weakest links (high RUF) or shortest paths (high RVF) in the agent's current atlas. 
To prevent $\mathcal{G}$ from generating latent hallucinations, we constrain it to the manifold learned by the world model $p_\psi(s_{t+1}|s_t, a_t)$ through structural self-consistency. The manifold constraint $\mathcal{L}_{mf}$ includes:

\textbf{Dynamics Coherence}: Transition entropy $\mathcal{H}(p_\psi(\cdot | s', a))$ is penalized to prevent $\mathcal{G}$ from exploiting ill-defined ``cracks'' in the world model's dynamics. This effectively defines a ``Pessimistic Trust Region'' where the model's predictions remain valid~\cite{kidambi2020morel}.

\textbf{Cycle-Consistency}: Enforces $D_{KL} [\text{Enc}(\text{Dec}(s'))\|s']$ to anchor jumps within the verifiable reconstruction manifold~\cite{zhu2017unpaired}, ensuring that synthesized states remain representable by the World Model. This constraint ensures that $\mathcal{L}_{mf}$ acts as a trust-region boundary for the adversarial generator~\cite{schulman2015trust}.

\textbf{Generator-Policy Co-evolution}. As $\mathcal{G}$ shifts the imagination distribution, the policy $\pi$ must remain stable. We employ quasi-static Target Potential Networks for $V_{RVF}$ and $V_{RUF}$ to ensure non-recursive bootstrap stability~\cite{mnih2015human}, preventing the divergence typically seen in off-policy estimation on non-linear manifolds. Furthermore, the World Model $p_\psi$ is updated at a lower frequency relative to the generator $\mathcal{G}$, providing a more stationary transition landscape for state synthesis~\cite{heusel2017gans}.

While optimizing Eq.~\ref{eq:gen_obj} implies direct gradient ascent on the potential functions, doing so with neural approximators can lead to value overestimation. In practice, we stabilize the adversarial training by casting $\Psi$ as an energy function within an InfoNCE contrastive objective, forcing the generator to produce states that exhibit strictly higher potential than a dynamic baseline of historical experiences.

\begin{algorithm}[t]
\caption{Mind Dreamer: Active Causal Intervention}\label{alg:md}
\begin{algorithmic}[1]
\STATE {\bfseries Initialize:} World Model $\mathcal{WM}$ (Encoder $e_\psi$, RSSM $p_\psi$), Value $V_\phi$, Uncertainty $U_{\phi_u}$, Policy $\pi_\omega$, Generator $\mathcal{G}_\theta$, Relay Function $V_{RVF}$ and $V_{RUF}$, Buffer $\mathcal{D}$
\FOR{each training step}
      \STATE 1. \textbf{World Model Learning}: Update $\{e, p\}$ using $\mathcal{D}$ (Standard RSSM loss)
      \STATE \textbf{2. Mind Dreamer}
  \STATE Sample $s_{0} \sim \mathcal{D}$
\STATE $s' \leftarrow \mathcal{G}_\theta(s_0, \epsilon)$ where $\epsilon \sim \mathcal{N}(0, I)$ 
\STATE Sample negative pool $s_{neg} \sim \mathcal{D}_{buffer} \cup \mathcal{D}_{elite}$
\STATE Compute EFE Scores: $\Psi(s') \leftarrow \eta V_{RVF}(s') + \beta V_{RUF}(s')$ and $\Psi(s_{neg})$
\STATE $\mathcal{L}_{contrast} \leftarrow \max(0, m - (\Psi(s') - \max \Psi(s_{neg})))$ 
\COMMENT{InfoNCE Surrogate}
\STATE $\mathcal{L}_{mf} \leftarrow \mathcal{H}(p_\psi(\cdot | s', a)) + D_{KL} [\text{Enc}(\text{Dec}(s')) \| s']$
\STATE Update $\theta \leftarrow \nabla_\theta \left( \mathcal{L}_{contrast} + \lambda \mathcal{L}_{mf}\right)$ 
\COMMENT{Stable Adversarial Optimization}
  \STATE 3. \textbf{Policy Optimization}:
  \STATE \quad Start imagination rollouts from generated $z_{s'}$ (posterior features of $s'$)
  \STATE \quad Update $\pi$, $V_\phi$ and $U_{\phi_u}$  via TD-learning  
  \STATE \quad Update $V_{RVF}$ and $V_{RUF}$ via $k$-horizon HER with non-recursive bootstrap target (Eq.~\ref{eq:RVF-k-step} and Eq.~\ref{eq:RUF-k-step}) 
  
  \STATE \textbf{4. Environment Interaction}
  \STATE Execute $\pi_\omega$ in real environment, collect data to $\mathcal{D}$
\ENDFOR
\end{algorithmic}
\end{algorithm}

\section{Theoretical Analysis: From Local Actions to Global Manifold Interventions}
We formally ground Mind Dreamer (MD) by demonstrating that RVF and RUF are not merely heuristics but represent the decomposition of Expected Free Energy (EFE) into recursive functionals, and that ``jumping'' to high-EFE states is a kind of optimal sampling on the global manifold that accelerates manifold refinement and optimal value function convergence.

\subsection{Operator Foundations: Mapping EFE to Relay Potentials}
We first establish that RVF and RUF are not merely heuristics but represent the decomposition of Expected Free Energy (EFE) into recursive functionals.
\begin{definition} \label{def:relay_ops}
\textbf{Relay Potential Operators}. Let $s$ be an anchor and $s'$ an intervention jump. Given  $\tau_{s'} = \inf \{ t \geq 0 : s_t = s' \}$ as the  first hitting time of $s'$, we define the Pragmatic Relay Operator $\mathcal{T}_V$ and Epistemic Relay Operator $\mathcal{T}_U$ as:
\end{definition}

\begin{align}
    (\mathcal{T}_V V)(s,s') = \mathbb{E}_{\pi} \left[ \sum_{t=0}^{\tau_{s'}-1} \gamma^t r_t + \gamma^{\tau_{s'}} V_\phi(s') \right] 
\end{align}
\begin{align}
    (\mathcal{T}_U U)(s,s') = \mathbb{E}_{\pi} \left[ \sum_{t=0}^{\tau_{s'}-1} \gamma^{2t} \mathcal{I}_{t+1} + \gamma^{2\tau_{s'}} U_{\phi_u}(s') \right]
\end{align}

\begin{theorem} \textbf{Path-Integral Equivalence}. \label{thm:path_integral_equiv}
The fixed points $V_{RVF}^*$ and $V_{RUF}^*$ uniquely recover the $k$-step path-integrated constituents of the Expected Free Energy (EFE)~\cite{kappen2005path}.
\end{theorem}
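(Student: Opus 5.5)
The plan is to read Theorem~\ref{thm:path_integral_equiv} as two claims. First, $\mathcal{T}_V$ and $\mathcal{T}_U$ are contractions, so each has a unique fixed point. Second, that fixed point equals the pragmatic, respectively epistemic, part of the R-EFE path integral $\Psi(s,s')$ in Eq.~\ref{eq:R-EFE}, restricted to trajectories that relay through $s'$.

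\textbf{Step 1: well-posedness.} As displayed, $\mathcal{T}_V$ and $\mathcal{T}_U$ take the terminal potentials $V_\phi(s'),U_{\phi_u}(s')$ as their argument. Read literally, the fixed-point statement is then trivial or ill-posed. I would therefore work with the Bellman-recursive form that Algorithm~\ref{alg:md} actually trains. Consider the one-step operators
\begin{equation*}
(\mathcal{B}_V W)(s,s') = \mathbb{E}_\pi\big[r_0 + \gamma W(s_1,s')\big], \qquad W(s',s')=V_\phi(s'),
\end{equation*}
\begin{equation*}
(\mathcal{B}_U W)(s,s') = \mathbb{E}_\pi\big[\mathcal{I}_1 + \gamma^2 W(s_1,s')\big], \qquad W(s',s')=U_{\phi_u}(s').
\end{equation*}
Assume bounded $r$ and $\mathcal{I}$ and $\gamma<1$. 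Then $\mathcal{B}_V$ is a $\gamma$-contraction and $\mathcal{B}_U$ is a $\gamma^2$-contraction in the sup norm, because the boundary condition is fixed and the difference of any two candidates vanishes at $s_1=s'$. Banach's theorem gives unique fixed points $V^*_{RVF}$ and $V^*_{RUF}$.

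\textbf{Step 2: unrolling.} I would unroll the recursion up to the stopping time $\tau_{s'}$ and apply optional stopping, or dominated convergence using $\gamma^{\tau}\le 1$. This shows the fixed points coincide with the stopped sums $\mathcal{T}_V$ and $\mathcal{T}_U$ as displayed. Conditioning on $\tau_{s'}=k$ then recovers the $k$-step forms in Eq.~\ref{eq:RVF-k-step} and Eq.~\ref{eq:RUF-k-step}; for RVF we additionally maximize over $\pi$, which is the Bellman optimality version and is still a contraction.

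\textbf{Step 3: matching with EFE.} I would invoke the EFE decomposition of Definition~\ref{def:local_objective} (Appendix~\ref{app:lem:efe_decomp}), $G(s_t)=-\beta\mathcal{E}(s_t)-\eta\mathcal{P}(s_t)$. By linearity of expectation, $\Psi(s,s')$ on the event $\{s'\in\xi\}$ splits into a pragmatic sum and an epistemic sum, each cut at $\tau_{s'}$ with a tail starting at $s'$. Under the goal-prior identification $\ln p(o\mid C)\propto r$ (the MaxEnt correspondence of App.~\ref{app:sec:maxent}), the pragmatic sum is exactly $-\eta$ times the stopped sum in $\mathcal{T}_V$. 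The tail term is then $\gamma^{\tau}V_\phi(s')$ by the strong Markov property at $\tau_{s'}$.

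\textbf{Main obstacle: the epistemic term.} EFE itself discounts linearly, but the stopped sum in $\mathcal{T}_U$ uses $\gamma^{2t}$. I would handle this by interpreting the epistemic constituent as the variance of the discounted return, in the spirit of the uncertainty Bellman equation of \cite{o2018uncertainty}. Scaling a quantity by $\gamma^t$ scales its variance by $\gamma^{2t}$, and under a conditional-independence assumption on per-step model errors the cross terms vanish. If the exact equality fails, I would fall back to an upper bound and say that $V^*_{RUF}$ recovers the epistemic constituent up to that bound; this is the step I would state most carefully.

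Uniqueness of the recovered constituents follows from Step 1. Hence the fixed points recover the pragmatic and epistemic constituents of the path-integrated EFE, with weights $\eta$ and $\beta$, which matches the generator objective Eq.~\ref{eq:gen_obj}.
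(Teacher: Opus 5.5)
\textbf{Main gap: the epistemic step.} Your Step~1 is sound, and it is cleaner than the paper. As displayed, $\mathcal{T}_V$ and $\mathcal{T}_U$ depend on their argument at most through its value at $s'$, and they return it unchanged when $s=s'$. Fixing the boundary value and contracting the one-step operators is therefore the right way to get a non-trivial unique fixed point.

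The genuine gap is the epistemic step you yourself flag. Reading the epistemic constituent as $\mathrm{Var}(\sum_t\gamma^t\epsilon_t)$ and using independence gives $\sum_t\gamma^{2t}\mathrm{Var}(\epsilon_t)$ plus a $\gamma^{2\tau_{s'}}$-weighted tail variance. It does not give $\sum_t\gamma^{2t}\mathcal{I}_{t+1}+\gamma^{2\tau_{s'}}U_{\phi_u}(s')$, because nothing in your argument connects the mutual-information terms to those variances. The paper's sketch supplies exactly this link through its hypothesis ``under Gaussian posterior approximations''. Lemma~\ref{app:lem:epistemic_variance}, and Proposition~\ref{app:prop:rssm_kl} for the RSSM-KL proxy, identify the per-step information gain with the predictive variance of the step-$t$ shock, locally and up to proportionality. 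After that, Theorems~\ref{app:thm:gamma2} and~\ref{app:thm:relay_epistemic_recursion} are your scaling-plus-independence computation split at the anchor, with $U_{\phi_u}(s')$ defined as the tail variance from $s'$.

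Your fallback does not close the gap. What holds in general is $\sum_t\gamma^{2t}\mathcal{I}_{t+1}\le\sum_t\gamma^t\mathcal{I}_{t+1}$, since $\mathcal{I}\ge 0$. That says RUF under-counts the linearly discounted epistemic part of Eq.~\ref{eq:R-EFE}; it does not recover it. Without independence, $\mathrm{Var}(\sum_t\gamma^t\epsilon_t)$ has no one-sided relation to $\sum_t\gamma^{2t}\sigma_t^2$, because cross-covariances can have either sign. You should state explicitly that the equivalence holds only for the variance reading of the epistemic constituent, under the Gaussian identification.

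\textbf{Two smaller issues.} First, in Step~2 the fixed point of your max-over-$\pi$ operator is the value of a first-exit problem with terminal reward $V_\phi(s')$. If avoiding $s'$ pays more, that value is attained with $\tau_{s'}=\infty$ and does not depend on $V_\phi(s')$ at all. So it is not the path-constrained $\max_\pi\mathbb{E}[\,\cdot\mid s_k=s']$ of Eq.~\ref{eq:RVF-k-step}, and conditioning on $\tau_{s'}=k$ after maximizing does not fix that. You would need a policy class constrained to hit $s'$, e.g.\ a $k$-step backward recursion with a reachability constraint. The paper's operators conflate the same two objects, but you should not assert the identification.

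Second, on the pragmatic side you use the reward-Boltzmann prior of Appendix~\ref{app:sec:maxent}, whereas the paper uses the instrumental prior $p(o\mid C)\propto\exp(V^*(s)/\lambda)$ of Lemma~\ref{app:lem:pragmatic_equiv}.
\begin{itemize}
\item Your choice matches the running reward sum term by term, but only up to the factor $1/\alpha$ and a $\ln Z\sum_{t<\tau_{s'}}\gamma^t$ term. That term's expectation depends on $\pi$ and $(s,s')$, so ``exactly $-\eta$ times'' is too strong.
\item The tail equals $\gamma^{\tau_{s'}}V_\phi(s')$ only if $V_\phi$ is the true continuation value, which the strong Markov property does not give you.
\item The paper's value-based prior is what motivates the terminal potential. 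Its price is that it yields only the monotone statement that maximizing RVF lowers pragmatic risk.
\end{itemize}
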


\textit{Proof Sketch}:
Under an instrumental prior  $p(o|C) \propto \exp(V^*(s)/\lambda)$, maximizing $V_{RVF}$ minimizes cumulative pragmatic risk (see Appendix~\ref{app:lem:pragmatic_equiv}), while under Gaussian posterior approximations~\cite{millidge2021whence} $V_{RUF}$ providing a recursive measure of epistemic leverage across manifold ruptures, maximizing $V_{RUF}$ is equivalent to maximizing the expected reduction in model surprisal~\cite{friston2015epistemic, houthooft2016vime}, where informational shocks $\mathcal{I}$ propagate with a quadratic discount $\gamma^2$.

\begin{lemma} \textbf{Contraction and Uniqueness}. \label{lem:contraction}
On a connected latent manifold $\mathcal{M}$, for any $s' \neq s$, the operator $\mathcal{T}_V$ and $\mathcal{T}_U$ is a $\gamma$  and $\gamma^{2}$ contraction mapping under the $L_\infty$ norm.
\end{lemma}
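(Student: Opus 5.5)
The plan is to read $\mathcal{T}_V$ and $\mathcal{T}_U$ as operators acting on the terminal potential $V_\phi(s')$, respectively $U_{\phi_u}(s')$, and to bound the modulus of the bootstrap factor. The path terms $\sum_{t<\tau_{s'}}\gamma^t r_t$ and $\sum_{t<\tau_{s'}}\gamma^{2t}\mathcal{I}_{t+1}$ do not depend on the function being iterated, so they cancel in any difference. For two bounded candidates $V_1, V_2$ we get
\begin{equation*}
(\mathcal{T}_V V_1)(s,s') - (\mathcal{T}_V V_2)(s,s') = \mathbb{E}_\pi\big[\gamma^{\tau_{s'}}\big(V_1(s')-V_2(s')\big)\big],
\end{equation*}
and the same identity holds for $\mathcal{T}_U$ with $\gamma^{2\tau_{s'}}$ in place of $\gamma^{\tau_{s'}}$. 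Taking absolute values and the supremum over $(s,s')$ gives $\|\mathcal{T}_V V_1-\mathcal{T}_V V_2\|_\infty \le \sup_{s,s'}\mathbb{E}_\pi[\gamma^{\tau_{s'}}]\,\|V_1-V_2\|_\infty$.

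The key step is showing $\mathbb{E}_\pi[\gamma^{\tau_{s'}}]\le\gamma$. This is where the hypothesis $s'\neq s$ enters: since $s_0=s\neq s'$, the hitting time satisfies $\tau_{s'}\ge 1$ almost surely. Hence $\gamma^{\tau_{s'}}\le\gamma$ and $\gamma^{2\tau_{s'}}\le\gamma^2$ pointwise. On the event $\tau_{s'}=\infty$ we use the convention $\gamma^\infty=0$; there the bootstrap term vanishes, which only strengthens the bound. Connectedness of $\mathcal{M}$ is invoked only to ensure that $\tau_{s'}$ is finite with positive probability, so that the operators are well defined and nontrivial. It is not needed for the inequality itself. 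Boundedness of the path sums follows by assuming bounded rewards and bounded information gain $\mathcal{I}$, since $\sum\gamma^t|r_t|\le R_{\max}/(1-\gamma)$. With this, both operators map the Banach space $(B(\mathcal{M}\times\mathcal{M}),\|\cdot\|_\infty)$ into itself.

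Given the contraction moduli $\gamma<1$ and $\gamma^2<1$, Banach's fixed-point theorem yields the existence and uniqueness of the fixed points $V^*_{RVF}$ and $V^*_{RUF}$, together with geometric convergence of iteration at rates $\gamma$ and $\gamma^2$.

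The main obstacle is making the fixed-point statement meaningful. Literally, $V_\phi(s')$ is exogenous, so the operator must be viewed as acting on the relay function itself, with its evaluation at the anchor, $V(s',\cdot)$, serving as the bootstrap. The diagonal $s=s'$, where $\tau=0$ and no contraction occurs, must then be excluded. My first attempt would be to restrict the domain to off-diagonal pairs and define the bootstrap through $V$ evaluated at the anchor's successor pairs. Then $\tau\ge 1$ still gives the factor $\gamma$, and likewise $\gamma^2$ for $\mathcal{T}_U$. For the $\max_\pi$ version in Eq.~\ref{eq:RVF-k-step}, I would add the standard step $|\max_\pi f-\max_\pi g|\le\max_\pi|f-g|$.
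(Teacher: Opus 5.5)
Your argument is correct and is essentially the paper's proof. The path terms cancel, $|\sup f-\sup g|\le\sup|f-g|$ reduces the difference to $\sup_\pi\mathbb{E}_\pi[\gamma^{\tau_{s'}}]\,\|V-V'\|_\infty$ (and to $\sup_\pi\mathbb{E}_\pi[\gamma^{2\tau_{s'}}]\,\|U-U'\|_\infty$ for $\mathcal{T}_U$), $s_0=s\neq s'$ forces $\tau_{s'}\ge 1$, and Banach's theorem gives the fixed points. Your extra points go beyond the paper and are sensible: connectedness is irrelevant to the inequality, the diagonal $s=s'$ must be removed, and the bootstrap through the one-argument $V(s')$ means the operator is not literally a self-map of $\mathcal{C}(\mathcal{M}\times\mathcal{M})$; an off-diagonal reformulation with a non-expansive bootstrap is a reasonable way to make the fixed-point claim meaningful.
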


\textit{Proof Sketch}: 
 Since $\tau_{s'} \ge 1$ for any non-trivial jump, the discount factor applied to the bootstrap term $V_\phi(s')$ satisfies $\gamma^{\tau_{s'}} \le \gamma < 1$. This ensures that $\mathcal{T}_V$ obeys the Banach Fixed-Point Theorem, admitting a unique fixed point $V_{RVF}^*$ that represents the maximum potential of all policy flows passing through the gateway $s'$. The same applies to $\mathcal{T}_U$, guaranteeing a stable global landscape for the state generator. Detailed proof in Appendix~\ref{app:thm:contraction}

\begin{remark} \label{rem:epistemic_horizon}
\textbf{The Epistemic Horizon}: 
The $\gamma^2$ discount in RUF (Eq.~\ref{eq:RUF-k-step}) is not an empirical tuning parameter but a necessity of variance propagation. While pragmatic rewards propagate via $\gamma$, epistemic informational shocks decay quadratically~\cite{odonoghue2018uncertainty} (Derivation in Appendix~\ref{app:thm:gamma2}). This quadratic decay ensures that the generator ignores ``distal hallucinations'' where compounding model variance renders information unverifiable.
\end{remark}

\begin{remark}
\textbf{Robustness to Aleatoric Noise.} A higher-order discount ($\gamma^3$) is not required to handle aleatoric noise. The $\gamma^2$ discount strictly follows the variance operator property ($\text{Var}(\sum \gamma^t \epsilon_t) = \sum \gamma^{2t} \text{Var}(\epsilon_t)$); using $\gamma^3$ would shift tracking to the third central moment (skewness), breaking the theoretical equivalence to Expected Free Energy. Moreover, in RSSM-based world models, irreducible stochasticity manifests as high entropy, but local epistemic shock is measured via latent KL-divergence ($D_{KL}[q(z|s,o)\parallel p(z|s)]$). Once the world model fits inherent stochastic dynamics, the prior $p$ matches the posterior $q$, collapsing the KL to zero. Thus, $V_{RUF}$ inherently filters out aleatoric noise and triggers only on true epistemic novelty.
\end{remark}

\subsection{Variance Reduction: The Dual Accelerators}
We formalize the Generator $\mathcal{G}$ as an active importance sampler that minimizes the variance of stochastic gradients for both the world model and the value function.
\begin{theorem} \label{thm:optimal_proposal}
\textbf{Variance-Reducing Proposal}. Let $\mathcal{J}(\theta, \phi)$ be the joint manifold error. The variance of the gradient estimators $\hat{\mathbf{g}}_\theta$ and $\hat{\mathbf{g}}_\phi$ is minimized if $\mathcal{G}$ samples from a distribution $q^*$ such that $q^*(s) \propto \rho(s) \|\nabla G(s)\|_2$. In the MD framework, the R-EFE potential $\Psi$ serves as a tractable path-integrated proxy for this gradient magnitude~\cite{alain2015variance}. (Proof in Appendix~\ref{app:thm:opt_grad_infusion}).
\end{theorem}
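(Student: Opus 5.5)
The plan is to reduce the statement to the classical optimal-proposal result for importance-sampled stochastic gradients, in the form used by \cite{alain2015variance}. First we make the objective explicit. Write the joint manifold error as an expectation under the reference occupancy $\rho$:
\[
\mathcal{J}(\theta,\phi)=\mathbb{E}_{s\sim\rho}[\ell(s)],
\]
where $\ell$ collects the per-state world-model and value losses. A generator proposal $q$ with $q>0$ wherever $\rho>0$ yields the unbiased estimator
\[
\hat{\mathbf g}=\frac{\rho(s)}{q(s)}\nabla\ell(s), \qquad s\sim q .
\]
Its trace-variance is
\[
\mathrm{tr}\,\mathrm{Var}_q[\hat{\mathbf g}]=\mathbb{E}_q\Big[\tfrac{\rho(s)^2}{q(s)^2}\|\nabla\ell(s)\|_2^2\Big]-\|\nabla\mathcal{J}\|_2^2 .
\]
The second term does not depend on $q$, so only the first term needs to be minimized. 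We treat $\theta$ and $\phi$ jointly by stacking the two gradients into one vector. The per-state gradient norm $\|\nabla G(s)\|_2$ in the statement is then identified with the norm of this stacked per-sample gradient of the local EFE loss from Definition~\ref{def:local_objective}.

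The key step is a Cauchy--Schwarz (equivalently Lagrangian) argument over densities with $\int q=1$:
\[
\Big(\mathbb{E}_\rho\|\nabla\ell\|\Big)^2=\Big(\int \tfrac{\rho\|\nabla\ell\|}{\sqrt q}\sqrt q\Big)^2\le \int \tfrac{\rho^2\|\nabla\ell\|^2}{q}.
\]
Equality holds iff $q\propto\rho\|\nabla\ell\|_2$. This gives $q^*(s)\propto\rho(s)\|\nabla G(s)\|_2$ together with the attained minimal variance $(\mathbb{E}_\rho\|\nabla G\|)^2-\|\nabla\mathcal{J}\|^2$. The same argument gives the componentwise claim for $\hat{\mathbf g}_\theta$ and $\hat{\mathbf g}_\phi$ separately, each with its own norm. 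For the joint statement we take the stacked norm, and we note that a single $q$ cannot in general minimize both marginal variances at once. The theorem should therefore be read for the joint (summed) variance.

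The second part, that $\Psi$ is a tractable proxy, is heuristic and would be argued as monotone alignment rather than equality. For the value part, $\|\nabla_\phi\ell\|$ scales with the absolute TD error. Unrolling the TD error along imagined paths and using Theorem~\ref{thm:path_integral_equiv} links its expected magnitude to the $V_{RVF}$ term of the $k$-step Bellman residual. For the world-model part, the gradient norm of the KL/reconstruction loss is controlled by the latent KL shock $\mathcal{I}$, whose discounted accumulation is $V_{RUF}$. The $\gamma^{2}$ discount appears naturally here because the relevant quantity is the squared norm, as in Remark~\ref{rem:epistemic_horizon}.

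The main obstacle is this proxy step. It requires a bound of the form $\|\nabla G(s)\|\le c_1\Psi(s)+c_2$ together with a matching lower bound, so that ordering states by $\Psi$ approximates $q^*$. I would first try to establish this under Lipschitz gradients and Gaussian posteriors, and settle for a variance bound stated relative to $q^*$ if the lower bound fails.
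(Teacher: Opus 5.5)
\paragraph{The optimal-proposal claim.} Your argument for this claim is correct and is the paper's argument. Appendix~D derives $q^*\propto\rho\|\nabla_\theta G\|_2$ from the Lagrangian $\int\rho^2\|\nabla G\|^2/q+\lambda(\int q-1)$. Your Cauchy--Schwarz version is the same computation with a cleaner certificate: it gives global minimality and the optimal value directly, which the paper's stationarity computation leaves implicit. You also note that a single $q$ cannot in general minimize the $\theta$- and $\phi$-variances separately, so the statement must be read for the stacked (summed) variance; the paper omits this, and it is a real sharpening. One nit: the admissible class should be $q>0$ wherever $\rho\|\nabla\ell\|_2>0$, not wherever $\rho>0$, because $q^*$ itself vanishes where $\nabla\ell=0$.

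\paragraph{The gap: the proxy step.} The paper does not prove this part either. Theorem~\ref{app:thm:opt_grad_infusion} argues via Cram\'er--Rao that maximizing $V_{RUF}$ selects high-$\mathrm{Tr}\,\mathcal{F}$ regions, and Appendix~C jumps from ``$q_{\mathcal G}$ concentrates where $\Delta V$ is maximal'' to ``$q_{\mathcal G}\propto|\epsilon_V|$''. The route you sketch would not close it, for two reasons.

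First, no comparison between $\Psi$ and $\|\nabla G\|$ constrains $q_{\mathcal G}$ unless the generator's objective actually yields a density proportional to $\rho\Psi$.
\begin{itemize}
\item Maximizing $\mathbb{E}_{s'\sim\mathcal G}[\Psi]$ pushes each conditional $\mathcal G(\cdot\mid s)$ toward $\arg\max_{s'}\Psi(s,s')$. The $\mathcal{L}_{mf}$ term only reshapes the score; nothing rewards entropy of $q_{\mathcal G}$.
\item The hinge surrogate of Algorithm~\ref{alg:md} merely pushes $\mathcal G(\cdot\mid s)$ into a superlevel set of $\Psi$.
\item Neither is proportional sampling, and a collapsed $q_{\mathcal G}$ can violate the support condition for unbiasedness outright. ``Ordering states by $\Psi$'' is not enough, because the variance functional depends on density values, not ranks.
\item Getting $q\propto\rho\Psi$ requires something like $\min_q\mathrm{KL}(q\,\|\,\rho\Psi/Z)$, an entropy-regularized objective in $\log\Psi$. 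That is not Eq.~\ref{eq:gen_obj}. Reading $\Psi$ as an InfoNCE energy at temperature $T$ would instead suggest the tilt $\rho\,e^{\Psi/T}$.
\end{itemize}

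Second, additive bounds $\|\nabla G\|\le c_1\Psi+c_2$, even with a matching lower bound, are too weak. For $q=\rho\Psi/Z$ the second moment is $Z\int\rho\|\nabla G\|^2/\Psi$. This is infinite, or the estimator is biased, if $\Psi$ vanishes on a $\rho$-non-null set where $\|\nabla G\|>0$, and additive bounds permit exactly that. What works instead:
\begin{itemize}
\item Multiplicative comparability $c\,\Psi\le\|\nabla G\|_2\le C\,\Psi$ on $\mathrm{supp}\,\rho$ bounds the second moment by $(C/c)(\mathbb{E}_\rho\|\nabla G\|_2)^2$, i.e.\ within a factor $C/c$ of the optimum.
\item The defensive mixture $(1-\alpha)\rho\Psi/Z+\alpha\rho$ caps the second moment at $\alpha^{-1}\mathbb{E}_\rho\|\nabla G\|_2^2$ with no comparability at all. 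This is the right safeguard against $\Psi$ underestimating $\|\nabla G\|$.
\end{itemize}

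\paragraph{The form of $\Psi$.} As defined, $\Psi$ does not have the right form to be comparable to $\|\nabla G\|_2$.
\begin{itemize}
\item \textbf{Epistemic side.} You say $\mathcal I$ controls the gradient norm but then attribute the $\gamma^2$ to a squared norm. A $\gamma^2$-discounted accumulation is a second-moment quantity: $\mathbb{E}\|\sum_t\gamma^t g_t\|^2=\sum_t\gamma^{2t}\mathbb{E}\|g_t\|^2$ for independent centered $g_t$. Under that reading $V_{RUF}$ tracks $\|\nabla_\theta\ell\|^2$. Since $q^*$ involves the norm, not its square, the matching proxy would be $\sqrt{V_{RUF}}$.
\item \textbf{Value side.} $V_{RVF}(s,s')$ equals the best discounted reward for reaching $s'$ plus $\gamma^k V_\phi(s')$, so it ranks anchors by estimated value at the anchor. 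The same holds for $\Delta V(s,\cdot)$, which differs from $V_{RVF}(s,\cdot)$ only by the constant $V_\phi(s)$. But the per-sample value gradient at the anchor scales with the residual $|\epsilon_V(s')|$ there. The approximation $\Delta V\approx\epsilon_V$ in Appendix~C concerns the source $s$, not the state where the gradient is evaluated.
\end{itemize}

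So in your proposal, as in the paper, the second sentence of the statement remains a heuristic. A provable version would be a constant-factor variance bound under multiplicative comparability, for a score that actually measures residual magnitude at the anchor, together with a generator objective that samples proportionally to that score.
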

\textit{Proof Sketch}: Following~\cite{kahn1953methods}, importance sampling variance is minimized when the proposal density is proportional to the integrand's magnitude. By maximizing $\Psi$, $\mathcal{G}$ aligns $q_{\mathcal{G}}$ with $q^*$, concentrating imagination on ``high-leverage'' regions where gradients are steepest.

\textbf{Accelerator I: RUF for Manifold Repair}: 
While standard MBRL passively updates the world model via observed transitions, MD actively repairs the latent manifold by identifying structural gaps.
\begin{proposition} \label{prop:ruf_fisher}
\textbf{RUF as a Fisher Information Catalyst}. Under a Gaussian world model $p_\psi$, in the asymptotic limit (via the Bernstein-von Mises theorem), the RUF potential $V_{RUF}$ is a first-order surrogate for the trace of the Fisher Information Matrix (FIM): $\mathbb{E}[\mathcal{I}] \approx  \frac{1}{2} \text{Tr}(\mathcal{F}(\theta) \Sigma_\theta)$. (Derivation in Appendix~\ref{app:thm:asymptotic_equiv}).  
\end{proposition}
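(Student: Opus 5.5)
The plan is to compute the expected information gain $\mathcal{I}_{t+1}$ at a single step as a KL divergence between parameter posteriors, approximate that KL to second order, and then carry the bound through the $\gamma^2$-discounted sum that defines $V_{RUF}$ in Eq.~\ref{eq:RUF-k-step}.

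\textbf{Step 1: write the single-step gain as an expected posterior KL.} By Definition~\ref{def:local_objective}, the epistemic term is the mutual information between the model parameters $\theta$ and the next observation. Equivalently,
\[
\mathcal{I}_{t+1} = \mathbb{E}_{o_{t+1}}\, D_{KL}\big[p(\theta \mid \mathcal{D}, o_{t+1}) \,\|\, p(\theta \mid \mathcal{D})\big].
\]

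\textbf{Step 2: make both posteriors Gaussian.} The world model $p_\psi$ is assumed Gaussian. Invoking Bernstein--von Mises, the current posterior is approximately $\mathcal{N}(\hat\theta, \Sigma_\theta)$. Absorbing the new observation is one Laplace/Newton step. Its precision update is $\Sigma_\theta^{-1} \mapsto \Sigma_\theta^{-1} + \mathcal{F}(\theta)$, where $\mathcal{F}(\theta) = \mathbb{E}_o[\nabla_\theta \log p\, \nabla_\theta \log p^\top]$ is the per-sample Fisher information at the jumped state $s'$. The mean also shifts, by $\Sigma'\nabla_\theta \log p(o\mid\theta)$.

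\textbf{Step 3: evaluate the KL.} For two Gaussians, the KL is a trace term plus a log-determinant term plus a Mahalanobis term. After taking the expectation over $o$, the Mahalanobis term contributes $\tfrac12\mathrm{Tr}(\mathcal{F}\Sigma')$ up to higher order. A cleaner route is to use the mutual-information form of the Gaussian channel directly:
\[
\mathbb{E}[\mathcal{I}] = \tfrac12 \log\det\big(I + \Sigma_\theta^{1/2}\mathcal{F}\Sigma_\theta^{1/2}\big).
\]
Expanding with $\log\det(I + A) = \mathrm{Tr}(A) + O(\|A\|^2)$ then gives $\mathbb{E}[\mathcal{I}] \approx \tfrac12\mathrm{Tr}(\mathcal{F}(\theta)\Sigma_\theta)$, which is exactly the claimed first-order surrogate.

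\textbf{Step 4: lift the per-step result to $V_{RUF}$.} Substituting the per-step approximation into Eq.~\ref{eq:RUF-k-step} expresses $V_{RUF}$ as a $\gamma^2$-weighted sum of local Fisher traces along the relay path, plus the bootstrap term $U_{\phi_u}(s')$. By the contraction established in Lemma~\ref{lem:contraction}, the per-step $O(\|A\|^2)$ errors accumulate to at most a factor $1/(1-\gamma^2)$ times the per-step error, so the approximation survives the summation. The surrogate is monotone in each local trace, so maximizing $V_{RUF}$ steers $\mathcal{G}$ toward states where $\mathrm{Tr}(\mathcal{F}\Sigma_\theta)$ is large, that is, where the Fisher information is largest relative to current uncertainty.

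\textbf{Main obstacle.} The hardest step is justifying the first-order expansion, because it needs $\|\Sigma_\theta^{1/2}\mathcal{F}\Sigma_\theta^{1/2}\| \ll 1$. This holds asymptotically, since under Bernstein--von Mises $\Sigma_\theta \approx (N\bar{\mathcal{F}})^{-1}$ with $N$ the dataset size, so the ratio is $O(1/N)$. For out-of-distribution jumped states, however, the local Fisher information $\mathcal{F}(s')$ may greatly exceed the average $\bar{\mathcal{F}}$, and the expansion can fail. I would handle this by stating the result under a bounded-ratio assumption $\mathcal{F}(s') \preceq c\,\bar{\mathcal{F}}$, and I would present the approximation as first-order exactly as the statement does.
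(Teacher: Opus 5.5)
Your argument is correct, but it reaches $\tfrac12\mathrm{Tr}(\mathcal{F}\Sigma_\theta)$ by a different mechanism than the paper.

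\textbf{The paper's route.} The appendix proof (Theorem~\ref{app:thm:asymptotic_equiv}) does not model the prior-to-posterior update. It takes the BvM/Laplace quadratic expansion of $\ln Q(\theta\mid o)$ around $\theta^*$. It then asserts that the information gain is approximately the expected Fisher--Mahalanobis distance $\tfrac12\mathbb{E}_Q[(\theta-\theta^*)^\top\mathcal{F}(\theta-\theta^*)]$, and converts this to a trace via $\mathbb{E}[x^\top\mathcal{F}x]=\mathrm{Tr}(\mathcal{F}\,\mathbb{E}[xx^\top])$. After that it moves on to the residual claim $\mathcal{E}\propto\mathcal{R}_{\mathcal{M}}$, which the proposition does not need. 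The $\gamma^2$-weighted sum is handled separately in Corollary~\ref{app:cor:ruf_global}.

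\textbf{Your route.} You compute the expected KL exactly in the linearized Gaussian model, obtaining $\tfrac12\log\det(I+\Sigma_\theta^{1/2}\mathcal{F}\Sigma_\theta^{1/2})$. Only then do you expand $\log\det(I+A)=\mathrm{Tr}(A)+O(\|A\|^2)$.

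\textbf{What each buys.} The paper's route is shorter and needs no explicit observation model. However, its key ``KL $\approx$ quadratic form'' step comes with no error control. Read at face value it is also degenerate: if $Q=\mathcal{N}(\theta^*,\mathcal{F}^{-1})$ as stated, then $\Sigma_\theta=\mathcal{F}^{-1}$ and $\mathrm{Tr}(\mathcal{F}\Sigma_\theta)=\dim\theta$ at every state. Your version gains three things:
\begin{itemize}
\item It gives ``first-order'' a precise meaning with an explicit remainder. Since $A\succeq 0$, $\tfrac12\log\det(I+A)\le\tfrac12\mathrm{Tr}(A)$, so the surrogate is in fact an upper bound.
\item It makes the validity regime $\|A\|=O(1/N)$, under a bounded Fisher ratio, explicit.
\item It separates the per-sample Fisher information at the jumped state $s'$ from the accumulated posterior covariance. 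This keeps the surrogate state-dependent, which is what makes maximizing it over $s'$ meaningful for $\mathcal{G}$.
\end{itemize}
The price is the Gauss--Newton step: expected Fisher information in place of the observed Hessian, plus linearizing the mean map. The same BvM concentration justifies it.

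\textbf{One attribution fix.} Definition~\ref{def:local_objective} states the epistemic term as $\mathcal{I}(s_\tau;o_\tau\mid\pi)$ over latent states, not parameters. The parameter-space gain you use should instead be cited as the appendix's $\mathcal{E}(o)=D_{KL}[Q(\theta\mid o)\,\|\,Q(\theta)]$.
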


\textit{Proof Sketch}: By the Bernstein-von Mises Theorem, epistemic gain $\mathcal{I}$ aligns with the parameter-space curvature. Maximizing RUF forces $\mathcal{G}$ to synthesize anchors $s'$ at high-curvature regions where observations provide maximal parameter refinement, effectively performing active manifold repair~\cite{amari1998natural}.

\textbf{Accelerator II: RVF for Value Convergence}: 
Simultaneously, MD accelerates the value field optimization by transforming value propagation from local diffusion into global targeted updates.
\begin{theorem} \label{thm:prag_grad_max}
\textbf{Pragmatic Gradient Maximization}. Let $\Delta V(s) = \sup_{s'} V_{RVF}(s, s') - V_\phi(s)$ be the Relay Advantage. The convergence $V_\phi \to V^*$ is maximized when $\mathcal{G}$ identifies anchors $s'$ that maximize the Variational Bellman Residual $\Delta V(s)$ (See Appendix~\ref{app:lem:pragmatic_equiv}).
\end{theorem}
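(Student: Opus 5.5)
The plan is to make the statement precise as a one-step improvement bound for a Bellman-type update, and then show that choosing the anchor $s'$ that maximizes $\Delta V(s)$ maximizes that improvement.

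\textbf{Setup.} Fix the current estimate $V_\phi$ and write the error as $e(s) = V^*(s) - V_\phi(s)$. I will assume $V_\phi \le V^*$ pointwise, i.e.\ a pessimistic initialization, which is standard for sparse-reward tasks with $r \ge 0$ and $V_\phi$ initialized at zero. The relayed update at a state $s$ is
\[
V_\phi^{+}(s) = V_\phi(s) + \alpha\,\bigl(V_{RVF}(s,s') - V_\phi(s)\bigr),
\]
where $s'$ is the anchor proposed by $\mathcal{G}$. I will treat it as a target-network regression step with step size $\alpha$, consistent with the quasi-static targets used in the algorithm.

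\textbf{Step 1: the relay target stays below $V^*$.} For any $s'$, $V_{RVF}(s,s') \le V^*(s)$. The argument runs as follows.
\begin{itemize}
\item By definition (Eq.~\ref{eq:RVF-k-step}), $V_{RVF}(s,s')$ is a maximum over policies constrained to pass through $s'$ at the hitting time $\tau_{s'}$.
\item Since $V_\phi(s') \le V^*(s')$, each term is bounded by the return of a feasible policy: follow the constrained path to $s'$, then act optimally from $s'$.
\item That return is at most $V^*(s)$.
\end{itemize}
This uses Lemma~\ref{lem:contraction} only to guarantee that $V_{RVF}$ is well defined as the unique fixed point of $\mathcal{T}_V$. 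A consequence is that a positive $\Delta V(s)$ is a genuine improvement that never overshoots.

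\textbf{Step 2: error reduction is monotone in the residual.} Subtracting the update from $V^*(s)$ gives
\[
e^{+}(s) = (1-\alpha)\,e(s) + \alpha\,\bigl(V^*(s) - V_{RVF}(s,s')\bigr) = e(s) - \alpha\,\bigl(V_{RVF}(s,s') - V_\phi(s)\bigr).
\]
By Step 1 this quantity stays nonnegative for $\alpha \le 1$. So the decrease in error at $s$ is exactly $\alpha$ times the residual, and it is strictly increasing in $V_{RVF}(s,s')$. Maximizing over $s'$ therefore gives the largest per-state decrease, equal to $\alpha\,\Delta V(s)$.

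Summing over $s \sim \mathcal{D}$, or taking the $L_1(\rho)$ error, gives the same conclusion in expectation: any generator distribution $q_{\mathcal{G}}(\cdot\mid s)$ achieves at most the error reduction of the one concentrated on $\arg\sup_{s'} V_{RVF}(s,s')$.

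\textbf{Step 3: compare with tethered updates.} In the $\sup$-norm, one TD step contracts the error only by $\gamma$. The relay step with the best $s'$ dominates both of the following, since each is a special case of a relay target:
\begin{itemize}
\item the one-step TD backup, obtained by taking $s'$ to be a successor of $s$;
\item the $k$-step return along buffer trajectories.
\end{itemize}
Hence the relay update converges at least as fast as the tethered updates, and strictly faster whenever $\Delta V(s)$ exceeds the TD residual.

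\textbf{Expected obstacle.} The hard part is Step 1 together with the phrase ``convergence is maximized'' when function approximation is present. With a neural $V_\phi$, the update at $s$ leaks to other states, and pessimism ($V_\phi \le V^*$) may fail. I would first try to handle this by restricting to the tabular or target-network regime, and by stating maximality per state in expectation. If pessimism cannot be assumed, the fallback is to bound overshoot using the contraction from Lemma~\ref{lem:contraction} and prove only that the greedy choice maximizes a lower bound on error reduction.
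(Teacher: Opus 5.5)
Your argument holds in the regime you set up (tabular, pessimistic $V_\phi\le V^*$, and, as explained below, deterministic latent dynamics). It formalizes ``convergence'' differently from the paper.

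\textbf{Comparison with the paper's route.} The paper's proof (the Appendix~C theorem \emph{RVF as a Catalyst for Value Convergence}, together with Theorem~\ref{the:global_opt} and the cited Lemma~\ref{app:lem:pragmatic_equiv}) treats the generated anchors as the \emph{sampling distribution} for SGD on $\tfrac12\mathbb{E}_\rho[(V^*-V_\phi)^2]$. By Cauchy--Schwarz, the variance-minimizing proposal is $q^*\propto\rho\,|\epsilon_V|\,\|\nabla_\phi V_\phi\|$. Using $V_{RVF}\le V^*$ with $\sup_{s'}V_{RVF}=V^*$, the paper identifies $\Delta V(s,s')\approx\epsilon_V(s)$. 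A generator that concentrates on large $\Delta V$ then approximates $q^*$, shrinks $\chi^2(q^*\|q_{\mathcal{G}})$, and acts like global prioritized sweeping.

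You instead treat $s'$ as a bootstrap via-point for a relay backup at the source $s$. This gives an exact per-state identity: the error drop is $\alpha\,(V_{RVF}(s,s')-V_\phi(s))$, which is maximal at the $\arg\sup$. The shared ingredient is $V_{RVF}\le V^*$. Your Step~1 is Lemma~\ref{app:lem:global_value_consistency}, transferred from the $V^*$-bootstrapped RVF of Definition~\ref{app:def:path_rvf} to the $V_\phi$-bootstrapped one. You also make explicit the pessimism that the paper's ``lower bound on $V^*(s)$'' step silently needs.

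What each route buys: yours gives an exact inequality and a clean sup-norm comparison with TD. The paper's connects to function approximation and to what Algorithm~\ref{alg:md} actually does (imagination starts at $s'$ and $V_\phi$ is trained there). The price is heuristic steps such as $\Delta V\approx\epsilon_V$ and $q_{\mathcal{G}}\propto|\epsilon_V|$ at equilibrium.

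\textbf{Determinism in Step~1.} Your feasible-policy argument (``follow the path to $s'$, then act optimally'') describes an honest policy only for the unconditional stopping-time form of Definition~\ref{def:relay_ops}. Eq.~\ref{eq:RVF-k-step} conditions on $s_k=s'$, and with stochastic transitions this selects lucky branches. For example, suppose the only action at $s$ leads to $s_1$ or $s_2$ with probability $\tfrac12$ each, with zero reward, $V^*(s_1)=10$ and $V^*(s_2)=0$. Then even $V_\phi=V^*$ gives $V_{RVF}(s,s_1)=10\gamma>5\gamma=V^*(s)$, and the greedy relay update pushes $V_\phi(s)$ away from $V^*(s)$. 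So either work with Definition~\ref{def:relay_ops} or assume deterministic latent dynamics; the paper's trajectory-subset argument tacitly does the latter. Step~3's claim that the one-step TD backup equals the relay target at a successor needs determinism in any case.

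\textbf{From one step to convergence.} Step~2 only shows the largest \emph{one-step} decrease. To justify ``convergence is maximized,'' use the monotonicity of $V_{RVF}(s,s')$ in its bootstrap function, which is proved in Theorem~\ref{app:thm:contraction}. Start from the same pessimistic initialization and the same state schedule. By induction, every anchor-selection rule stays below $V^*$, and the greedy iterates dominate the others pointwise. Hence the greedy rule has the smallest error at every iteration.
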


\textit{Proof Sketch}: $\Delta V(s)$ represents the latent gap between the current belief $V_\phi$ and the optimal $k$-step potential. Maximizing this gap is equivalent to Prioritized Sweeping~\cite{moore1993prioritized} on a global scale. By ``teleporting'' to Residual Peaks—states where the value mismatch is greatest—MD collapses value error across the manifold at an accelerated rate, bypassing the limitations of trajectory-bound sampling.

\begin{remark} \textbf{The Symbiotic Convergence}.
The generator $\mathcal{G}$ acts as a dual-objective optimizer: by maximizing RUF, it performs Manifold Repair (reducing $\mathcal{R}_{\mathcal{M}}$), and by maximizing RVF, it performs Policy Alignment (reducing $\Delta V$). This ensures that the imagination process is not just 'dreaming' of novel states, but specifically targeting states that collapse the joint error of the world model and the value field.
\end{remark}

\subsection{Topological Acceleration: Breaking the Conductance Bottleneck}
To quantify the efficiency gain of MD, we compare its convergence rate against trajectory-constrained agents (e.g., DreamerV3). Trajectory-based exploration is fundamentally bounded by the conductance $\Phi$ of the latent manifold $\mathcal{M}$. In environments with OOD regions or bottleneck states, $\Phi \to 0$, leading to an exponential hitting time $\tau \approx \Phi^{-2}$ for distal discovery~\cite{jerrum1989approximating}.
\begin{proposition} \textbf{Conductance Expansion and Speedup}. \label{thm:conductance}
Under a discrete abstraction of the latent manifold (detailed in Appendix~\ref{app:prop:speedup}), the latent intervention $s_0 \sim p_{gen}$ in MD induces a synthetic expansion of the manifold's spectral gap. The acceleration ratio $\nu$ relative to trajectory-sampling scales with the $\chi^2$-divergence between the optimal EFE proposal $q^*$ and the transition-constrained distribution $q_{traj}$:
$$\nu = \frac{\tau_{traj}}{\tau_{MD}} \approx 1 + \chi^2(q^* \| q_{traj}) \propto \Phi^{-2}$$
This provides theoretical intuition for how MD reduces the hitting time to the discovery frontier from $\mathcal{O}(\text{poly}(\Phi^{-1}))$ to $\mathcal{O}(\log |\mathcal{M}|)$ by establishing non-local interventional intermediary states. (Full derivation in Appendix~\ref{app:prop:speedup} and~\ref{app:lem:conductance_hitting}).
\end{proposition}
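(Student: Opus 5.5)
The plan is to work on a finite abstraction: discretize $\mathcal{M}$ into a graph with vertex set of size $|\mathcal{M}|$, equipped with the trajectory-induced reversible Markov kernel $P_{traj}$ and stationary law $q_{traj}$. By Cheeger's inequality, the spectral gap satisfies $\Phi^2/2 \le \lambda_{traj} \le 2\Phi$. The standard relaxation-time bound then gives a hitting time to a frontier set $F$ of order $\tau_{traj} \lesssim \lambda_{traj}^{-1}\log(1/q_{traj}(F))$. The matching lower bound for a bottleneck, $\tau_{traj} \gtrsim \Phi^{-1}$ up to the $\Phi^{-2}$ regime, comes from the conductance lower bound on escape time from the low-conductance cut, following~\cite{jerrum1989approximating}.

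Next I will model MD's intervention as a mixture kernel. With probability $\alpha$ per imagination episode, the chain restarts from $s_0\sim p_{gen}$; otherwise it follows $P_{traj}$. This gives $P_{MD}=(1-\alpha)P_{traj}+\alpha\,\mathbf{1}p_{gen}^\top$. Because $\mathbf{1}p_{gen}^\top$ is rank one, every nontrivial eigenvalue of $P_{MD}$ is $(1-\alpha)$ times an eigenvalue of $P_{traj}$. The gap is therefore at least $\alpha+(1-\alpha)\lambda_{traj}$, which is independent of $\Phi$. This is the claimed ``synthetic expansion of the spectral gap.'' The hitting time of $F$ under MD is then bounded by a geometric number of restarts, $\tau_{MD}\lesssim 1/(\alpha\,p_{gen}(F))$. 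When $p_{gen}$ approximates $q^*$, which by Theorem~\ref{thm:optimal_proposal} concentrates on high-$\Psi$ frontier states, $p_{gen}(F)$ is bounded below. A doubling/mixing argument over $\log|\mathcal{M}|$ scales then gives the $\mathcal{O}(\log|\mathcal{M}|)$ claim.

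For the ratio $\nu$, I will compare the expected number of samples needed to place importance weight on $F$ under $q_{traj}$ versus $q^*$. The effective-sample-size identity $\mathbb{E}_{q_{traj}}[(q^*/q_{traj})^2]=1+\chi^2(q^*\|q_{traj})$ expresses the cost of simulating $q^*$-targeted updates using only trajectory samples. Taking $\tau$ proportional to inverse effective sample size gives $\nu\approx 1+\chi^2(q^*\|q_{traj})$. Finally, to get $\propto\Phi^{-2}$, I will use that $q^*$ puts constant mass on the far side of the bottleneck, while $q_{traj}$ puts mass there only of order $\Phi^{2}$ over the relevant horizon. This holds because trajectory mass crosses the cut at rate governed by the gap. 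So $\chi^2\approx q^*(F)^2/q_{traj}(F)\sim\Phi^{-2}$.

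The main obstacle is this last identification. Relating the $\chi^2$ divergence, a static distributional quantity, to the dynamical conductance requires that $q_{traj}$ be the time-$H$ occupancy measure started from the buffer rather than the stationary law. Otherwise stationarity can make $q_{traj}(F)$ large despite slow mixing. I will first try to bound occupancy of $F$ within horizon $H\ll\lambda_{traj}^{-1}$ via the Cheeger-based escape bound. I will present the resulting ``$\approx$'' and ``$\propto$'' as order-of-magnitude statements under that regime, consistent with the proposition's framing as theoretical intuition.
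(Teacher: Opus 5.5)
\paragraph{The gap is in your last step.} The claim $q_{traj}(F)\sim\Phi^{2}$, and hence $\chi^2(q^*\|q_{traj})\sim\Phi^{-2}$, does not follow.

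Conductance governs leakage across a cut at rate $\Phi$, not $\Phi^2$. Start from the stationary law of $P_{traj}$ restricted to the near side $S$. The one-step crossing probability is then exactly $\Phi(S)$. The escape bound you plan to use, $\Pr(\tau_{S^c}\le H)\le H\,\Phi(S)$, therefore gives only $q_{traj}(F)\lesssim H\Phi$, i.e.\ $\chi^2\gtrsim (H\Phi)^{-1}$.

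Appealing to the spectral gap applies Cheeger's inequality in the wrong direction. The bound $\lambda_{traj}\ge\Phi^2/2$ says mass leaks \emph{at least} that fast, so it cannot cap the leaked mass at $\Phi^2$.

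The exponent $-1$ is actually attained. Take two cliques joined by a single edge: $\lambda_{traj}\asymp\Phi$, and within $H\ll\Phi^{-1}$ steps the far clique receives occupancy of order $H\Phi$. So $\chi^2$ is of order $(H\Phi)^{-1}$, which matches your own first-paragraph lower bound $\tau_{traj}\gtrsim\Phi^{-1}$.

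In the opposite, diffusive regime (a corridor of length $\ell\asymp\Phi^{-1}$) you do get $\tau_{traj}\asymp\Phi^{-2}$. But there the occupancy of $F$ within $H\ll\ell^2$ steps is $e^{-\Omega(\ell^2/H)}$, not $\Phi^2$.

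So no argument from conductance alone yields $\nu\propto\Phi^{-2}$. It needs an explicit diffusive-bottleneck hypothesis. It also needs $\epsilon=q_{traj}(F)$ to be read as a per-step rate $1/\tau_{traj}$ rather than a short-horizon occupancy.

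Separately, your bound $\tau_{traj}\lesssim\lambda_{traj}^{-1}\log(1/q_{traj}(F))$ is false. A single vertex of a complete graph has $\lambda\asymp 1$ but hitting time $\asymp|\mathcal{M}|$. The standard bound is $\mathbb{E}_{\pi}\tau_F\le\lambda^{-1}/\pi(F)$.

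\paragraph{Comparison with the paper.} The paper's proof never attempts this identification. It establishes only $\nu=1+\chi^2(q^*\|q_{traj})$, as a ratio of importance-sampling second moments via $\mathbb{E}_{q_{traj}}[(q^*/q_{traj})^2]=1+\chi^2$. That is exactly your effective-sample-size step, including the heuristic identification of a speedup ratio with a second-moment ratio. It then evaluates this in a two-region model to get $\nu=1/\epsilon$, with $\epsilon$ the trajectory mass on the OOD region. The $\Phi^{-2}$ is simply imported from the Jerrum--Sinclair hitting-time heuristic, and the conductance lemma is stated without proof.

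Your rank-one teleportation kernel is a genuinely stronger argument for the spectral-gap expansion and the $\mathcal{O}(\log|\mathcal{M}|)$ claim than anything in the paper. It gives a gap of at least $\alpha+(1-\alpha)\lambda_{traj}$ and $\tau_{MD}\le 1/(\alpha\,p_{gen}(F))$. Once $p_{gen}(F)$ is bounded below it already gives $\mathcal{O}(1/\alpha)$, so the doubling step is unnecessary.

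To close the proof, keep $\epsilon$ as the free parameter, as the paper does, and state $\nu\approx 1/\epsilon$. Then either add the diffusive hypothesis, or weaken the final relation to what conductance supports. For a frontier of constant stationary mass reached from the near side, $\tau_{traj}$ lies between order $\Phi^{-1}$ and, up to a logarithmic factor, order $\Phi^{-2}$.
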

\begin{remark}
\textbf{The OOD Region Paradox}. Standard MBRL suffers from an ``exponential wall'' when $q_{traj} \to 0$ in sparse-reward regimes~\cite{osband2016deep}. By ``teleporting'' the imagination directly to high-EFE regions, MD collapses the $\chi^2$ divergence, transforming exploration from a random walk into targeted manifold intervention (See Case Study in Appendix~\ref{app:sec:speedup_derivation}).
\end{remark}

\subsection{Manifold Anchoring: Stability and Safety Jumps}
A primary challenge in adversarial MBRL is preventing the generator from exploiting ``manifold cracks''—high-frequency artifacts where the world model is inaccurate. We establish the safety margin for such non-local jumps.
\begin{theorem} \textbf{Hallucination Error Bound}. \label{thm:hallucination_bound}
Let $\delta = \|s' - \text{Proj}_{\mathcal{M}}(s')\|$ represent the manifold deviation of a jump, regularized by $\mathcal{L}_{mf}$. Under the assumption of $L$-Lipschitz continuity of the value field, the value estimation error $\epsilon_V$ is bounded by:
$$\epsilon_V = \|V_{RVF}(s') - V^*(s')\| \leq \frac{L \cdot \delta}{1 - \gamma^n}$$
where the quadratic discount $\gamma^2$ in RUF (Eq.~\ref{eq:RUF-k-step}) ensures that distal epistemic errors decay faster than pragmatic rewards, shielding the policy from recursive hallucination (Proof in Appendix~\ref{app:thm:hallucination_bound}).
\end{theorem}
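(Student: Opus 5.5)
The bound follows from a simulation-lemma-style argument. First, the one-step error that a jump off the manifold injects is controlled by Lipschitz continuity. Second, that error is propagated through the relay operator $\mathcal{T}_V$ of Definition~\ref{def:relay_ops}, whose contraction property (Lemma~\ref{lem:contraction}) turns repeated injections into a geometric series.

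\textbf{Step 1 (local injection).} Set $\bar s' = \text{Proj}_{\mathcal{M}}(s')$. Because the value field is $L$-Lipschitz, replacing the off-manifold anchor $s'$ by its projection changes the bootstrap term by at most $|V_\phi(s') - V_\phi(\bar s')| \le L\delta$. The same bound holds for the reward integrand along the first transition, absorbing the reward's Lipschitz constant into $L$. On $\mathcal{M}$ itself, the fixed point $V_{RVF}^*$ of $\mathcal{T}_V$ coincides with $V^*$ along optimal relay flows. This is the content of Theorem~\ref{thm:path_integral_equiv} and its pragmatic instantiation, Theorem~\ref{thm:prag_grad_max}. I therefore reduce the target to bounding $\|V_{RVF}(s') - V_{RVF}^*(\bar s')\|_\infty$.

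\textbf{Step 2 (recursive propagation).} Write $\tilde{\mathcal{T}}_V$ for the relay operator evaluated at the perturbed anchor. Then $\|\tilde{\mathcal{T}}_V W - \mathcal{T}_V W\|_\infty \le L\delta$ for every $W$. Let $\tilde V$ and $V^*$ be the two fixed points. The standard perturbation argument gives
\begin{equation*}
\|\tilde V - V^*\|_\infty \le \|\tilde{\mathcal{T}}_V \tilde V - \mathcal{T}_V \tilde V\|_\infty + \|\mathcal{T}_V\tilde V - \mathcal{T}_V V^*\|_\infty \le L\delta + c\,\|\tilde V - V^*\|_\infty ,
\end{equation*}
where $c$ is the contraction modulus. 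Rearranging yields $\|\tilde V - V^*\|_\infty \le L\delta/(1-c)$.

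The remaining task is to identify $c = \gamma^n$. The relay operator bootstraps only at the hitting time $\tau_{s'}$. If every jump is separated from its anchor by a hitting time of at least $n$ steps ($\tau_{s'} \ge n$, where $n$ is the relay horizon $k$), then the bootstrap discount satisfies $\gamma^{\tau_{s'}} \le \gamma^n$. This sharpens Lemma~\ref{lem:contraction}, which only used $\tau_{s'}\ge 1$, and gives the stated $1/(1-\gamma^n)$. The error is thus injected once per relay segment, and the segments compound geometrically with ratio $\gamma^n$.

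\textbf{Step 3 (RUF clause).} For the epistemic part I repeat the argument with $\mathcal{T}_U$, which is a contraction in $\gamma^{2}$ and hence in $\gamma^{2n}$ per segment. The resulting bound is $L_U\delta/(1-\gamma^{2n})$. The per-step weights $\gamma^{2t}$ decay faster than $\gamma^t$, so distal epistemic errors are suppressed faster than pragmatic ones, as Remark~\ref{rem:epistemic_horizon} asserts.

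\textbf{Main obstacle.} The hardest step is justifying the modulus $\gamma^n$. The hitting time $\tau_{s'}$ is random, and Lemma~\ref{lem:contraction} only guarantees $\gamma$. I would first try to impose, as part of the discrete abstraction, that the generator's anchors lie at least $n$ transitions away, so that $\tau_{s'}\ge n$ almost surely. If that fails, I would fall back to the weaker $\mathbb{E}[\gamma^{\tau_{s'}}]$ as the modulus. A second subtlety is that $\delta$ must bound the deviation at every relay segment, not just the first. I would take $\delta$ as a uniform bound enforced by $\mathcal{L}_{mf}$ over all generated anchors.
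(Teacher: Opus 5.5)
Your skeleton, a Lipschitz injection of $L\delta$ at the anchor followed by geometric accumulation, is the paper's. Your perturbed-contraction inequality is a cleaner version of the paper's one-line series. The paper re-injects $L\delta$ at every Bellman step (ratio $\gamma$); you re-inject once per relay segment (ratio $c$). Only your scheme can produce the stated constant.

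\textbf{Main gap.} The gap is where you placed it: $c=\gamma^n$ is not established. The condition $\tau_{s'}\ge n$ a.s.\ is not a hypothesis of the statement, and nothing in $\mathcal{L}_{mf}$ or the generator enforces it. Lemma~\ref{lem:contraction} gives only $\sup_\pi\mathbb{E}_\pi[\gamma^{\tau_{s'}}]\le\gamma$. Since $1-\gamma^n\ge 1-\gamma$, your fallback yields $L\delta/(1-\gamma)$, which is \emph{weaker} than the claim.

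That weaker bound is exactly what the paper proves. The proof of Theorem~\ref{app:thm:hallucination_bound} sums $L\delta(1+\gamma+\gamma^2+\cdots)$, and the appendix statement carries $1-\gamma$. The $\gamma^n$ in the main text is derived nowhere.

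You can close the gap by dropping the hitting-time operator:
\begin{itemize}
\item Take $n=k$ and use the fixed-horizon definition Eq.~\ref{eq:RVF-k-step}. Conditioning on $s_k=s'$ makes the bootstrap weight exactly $\gamma^k$.
\item Chain relays through the generator's anchor map $g$. The operator $W\mapsto\max_\pi\mathbb{E}_{\pi,\,s_k=g(s)}[\sum_{t<k}\gamma^t r_t+\gamma^k W(g(s))]$ is then a $\gamma^k$-contraction with no assumption on hitting times.
\item Your perturbation argument then gives $L\delta/(1-\gamma^k)$.
\end{itemize}
Alternatively, take Algorithm~\ref{alg:md} literally: its relay target is non-recursive, so the error enters once. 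Under your own accounting the deviation is then at most $L\delta\le L\delta/(1-\gamma^n)$ for every $n$.

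\textbf{The reduction in Step~1.} It holds only for special anchors. By Lemma~\ref{app:lem:global_value_consistency} and Theorem~\ref{the:global_opt}, $V_{RVF}(s,\bar s')=V^*(s)$ iff $\bar s'$ lies on an optimal trajectory from $s$; otherwise the inequality is strict. Neither Theorem~\ref{thm:path_integral_equiv} nor Theorem~\ref{thm:prag_grad_max} says this. So at $\delta=0$ the right-hand side vanishes while the left-hand side need not. Your hedge ``along optimal relay flows'' must become an explicit hypothesis. Alternatively, read $\epsilon_V$ as the hallucination-induced deviation $|V_{RVF}(s,s')-V_{RVF}(s,\bar s')|$. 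The paper makes the same identification silently when it compares $V^*(s')$ with $V^*(s_{true})$.

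\textbf{The perturbation bound in Step~2.} The bound $\|\tilde{\mathcal{T}}_V W-\mathcal{T}_V W\|_\infty\le L\delta$ does not hold for every $W$. It needs two things:
\begin{itemize}
\item $W$ must be $L$-Lipschitz.
\item The reward prefix (and, in the hitting-time version, the law of $\tau_{s'}$) must be unaffected by moving the anchor. A Lipschitz reward does not give this, because an off-manifold anchor is not even reachable under the true dynamics.
\end{itemize}
You also apply the bound at $\tilde V$, whose Lipschitz constant is unknown. Reorder as
\[
\|\tilde V-V^*\|_\infty\le\|\tilde{\mathcal{T}}_V\tilde V-\tilde{\mathcal{T}}_V V^*\|_\infty+\|\tilde{\mathcal{T}}_V V^*-\mathcal{T}_V V^*\|_\infty\le c\,\|\tilde V-V^*\|_\infty+L\delta .
\]
This needs only three things:
\begin{itemize}
\item Lipschitz continuity of $V^*$, which is the appendix's hypothesis.
\item Contractivity of $\tilde{\mathcal{T}}_V$.
\item An explicit assumption that the two relays share their prefix, so only the bootstrap term differs (by at most $\gamma^k L\delta$).
\end{itemize}
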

\textit{Proof Sketch}: The estimation error $\delta$ propagates through the Bellman recursion. By minimizing $\mathcal{L}_{cycle}$ and $\mathcal{L}_{dyn}$, MD functions as an implicit spectral filter on the latent Jacobian $\nabla_s p_\psi$, effectively anchoring adversarial jumps to the physically verifiable support of $\mathcal{M}$. The use of $\gamma^2$ for uncertainty propagation ensures a finite Epistemic Horizon, preventing the generator from chasing compounding model variance~\cite{asadi2018lipschitz}.
\begin{remark}
While strict $L$-Lipschitz continuity is a strong assumption for neural value functions globally, MD implicitly controls the local Lipschitz constant through target network soft-updates and the pessimistic trust region defined by $\mathcal{L}_{mf}$. This practical regularization ensures the theoretical error bound remains meaningful during adversarial training.
\end{remark}
\begin{corollary} \label{cor:grad_stability}
\textbf{Gradient Stability}. Under the Manifold Anchoring bound, the variance of the policy gradient $\nabla_\omega J(\pi_\omega)$ remains $L_{\nabla V} \cdot \delta$ stable, precluding the catastrophic collapses typical of unconstrained adversarial dreaming (Details in Appendix~\ref{app:cor:grad_stability}).
\end{corollary}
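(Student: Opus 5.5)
The plan is to reduce Corollary~\ref{cor:grad_stability} to the Hallucination Error Bound of Theorem~\ref{thm:hallucination_bound}, using the fact that the policy gradient depends on the relay potentials only through value targets evaluated at generated anchors $s'$. The argument has three steps.

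\textbf{Step 1: decompose the gradient.} I write the imagined policy gradient as
\[
\nabla_\omega J(\pi_\omega)=\mathbb{E}_{s'\sim\mathcal{G},\,a\sim\pi_\omega}\big[\nabla_\omega \ln\pi_\omega(a|s')\,\hat A(s',a)\big],
\]
or as its reparameterized analogue, which is what DreamerV3 uses. Here the advantage $\hat A$ is built from the critic evaluated along rollouts that start at $s'$. I then split $\hat A = A^* + e$. The term $A^*$ is the advantage under the true value $V^*$ evaluated at $\text{Proj}_{\mathcal{M}}(s')$. The term $e$ collects the error caused by the manifold deviation $\delta$.

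\textbf{Step 2: bound the error term $e$.} By Theorem~\ref{thm:hallucination_bound}, the value error at the anchor is at most $L\delta/(1-\gamma^n)$. For the gradient I need something stronger: a bound on how $\nabla_s V$ changes, so I assume the value gradient is $L_{\nabla V}$-Lipschitz. This gives $\|\nabla V(s')-\nabla V(\text{Proj}_{\mathcal{M}}(s'))\|\le L_{\nabla V}\delta$. Combined with a bounded score function $\|\nabla_\omega\ln\pi_\omega\|\le C$, this yields $\|e\|\lesssim L_{\nabla V}\delta$ uniformly over the support of $\mathcal{G}$. Uniformity holds because $\mathcal{L}_{mf}$ keeps $\delta$ bounded, as in the remark after Theorem~\ref{thm:hallucination_bound}.

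\textbf{Step 3: bound the variance.} Expanding
\[
\mathrm{Var}(g^*+\Delta g)\le \mathrm{Var}(g^*)+2\sqrt{\mathrm{Var}(g^*)\,\mathbb{E}\|\Delta g\|^2}+\mathbb{E}\|\Delta g\|^2
\]
shows that the excess variance over the on-manifold estimator is controlled by $C L_{\nabla V}\delta$, up to lower-order terms. For the bootstrap part of the advantage, the $\gamma$ contraction of Lemma~\ref{lem:contraction} keeps the accumulated error finite. For the epistemic part, the $\gamma^2$ contraction does the same. So the deviation does not compound along the rollout. This is the statement that the policy gradient remains ``$L_{\nabla V}\cdot\delta$ stable.''

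\textbf{Main obstacle.} The corollary as stated uses the value error bound of Theorem~\ref{thm:hallucination_bound}, but controlling gradients needs the stronger Lipschitz bound on $\nabla V$. I would add this assumption explicitly. Failing that, I would fall back to a finite-difference argument: $\|V(s')-V(\text{Proj}_{\mathcal{M}}(s'))\|/\delta$ bounds the directional derivative, which gives only a first-order version of the claim.
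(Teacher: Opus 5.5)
The paper itself gives no derivation: Appendix~\ref{app:cor:grad_stability} only restates the corollary ``under the conditions of Theorem~\ref{app:thm:hallucination_bound}'', with $L_{\nabla V}$ the Lipschitz constant of the value gradient, citing \cite{asadi2018lipschitz}. Your reduction is the same one. Reading ``stable'' as a bound on the \emph{excess} variance over an on-manifold estimator is the only defensible interpretation, since a literal bound $\mathrm{Var}\le L_{\nabla V}\delta$ would force the Monte Carlo variance to vanish at $\delta=0$.

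Step 2, however, does not hold together. In the score-function estimator you wrote, $\Delta g=\nabla_\omega\ln\pi_\omega(a|s')\,e$, where $e$ is a scalar \emph{value} error. Lipschitz continuity of $\nabla_s V$ says nothing about $e$. What controls it is Theorem~\ref{thm:hallucination_bound} itself, which gives $\|\Delta g\|\lesssim C L\delta/(1-\gamma)$ with $L$ the Lipschitz constant of $V$. That is a legitimate linear-in-$\delta$ statement, but $L_{\nabla V}$ never appears in it.

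The constant $L_{\nabla V}$ enters only for the pathwise (reparameterized) estimator, where $\nabla_s V$ is evaluated along the rollout. There the score bound $C$ plays no role. Instead you need bounds on $\partial a/\partial\omega$ and on the dynamics Jacobians, plus Lipschitz constants for those Jacobians, since they too are evaluated at displaced states.

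Relatedly, your $g^\ast$ keeps the score at $s'$ while evaluating $A^\ast$ at $\text{Proj}_{\mathcal{M}}(s')$, so it is not the on-manifold estimator. Comparing against the true on-manifold estimator also requires Lipschitz continuity of $s\mapsto\pi_\omega(\cdot|s)$ and of its score. You have to commit to one estimator and carry its assumptions through.

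Step 3 misuses Lemma~\ref{lem:contraction}. That lemma concerns the relay operators as maps on potential functions, contracting in sup-norm at a fixed pair $(s,s')$. It says nothing about how a displacement $\delta$ of the initial anchor propagates through the imagined rollout $s_{t+1}\sim p_\psi(\cdot|s_t,a_t)$. To control that, you need a Lipschitz constant $L_T$ for the learned dynamics composed with the policy. Then, under a common-noise coupling with the rollout $\bar s_t$ started at $\bar s_0=\text{Proj}_{\mathcal{M}}(s')$, you get $\|s_t-\bar s_t\|\le L_T^t\delta$. You also need a condition such as $\gamma L_T<1$, so that per-step errors of order $\gamma^t L_T^t\delta$ sum to something finite; this is the structure of the Lipschitz model-based bounds the paper cites.

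Without this, ``the deviation does not compound'' is unsupported. Indeed, the hallucination bound you start from already carries a horizon factor in its denominator, which your final constant $C L_{\nabla V}\delta$ silently drops. A repaired statement would read: excess variance $\le 2\sqrt{\mathrm{Var}(g^\ast)}\,K\delta+K^2\delta^2$, where $K$ is an explicit product of $L_{\nabla V}$ (or $L$), the Jacobian bounds, and $1/(1-\gamma L_T)$.
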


\section{Experiments}

Our experiments aim to answer four key questions: (1) Does MindDreamer (MD) improve sample efficiency and final performance by untethering imagination? (2) Can MD identify and fill ``epistemic blind spots'' on the latent manifold? 
(3) Does our manifold safeguarding effectively bound the risk of latent hallucinations?

\subsection{Experimental Setup and Baselines}
We emphasize that Mind Dreamer (MD) is a modular framework compatible with various MBRL architectures. In this study, we implement MD upon the state-of-the-art \textbf{DreamerV3}~\cite{hafner2023mastering} to demonstrate its potential, use identical RSSM backbones and interaction budgets to ensure fairness. We evaluate against: (1) \textbf{DreamerV3} as the primary baseline for trajectory-tethered imagination; (2) \textbf{DreamerV2}~\cite{hafner2020mastering} to isolate the gains from backbone advancements versus our ACI mechanism; and (3) \textbf{Plan2Explore}~\cite{sekar2020planning} for curiosity-driven exploration comparison.

\begin{figure}[t]
  \begin{center}
    \centerline{\includegraphics[width=1.0\columnwidth]{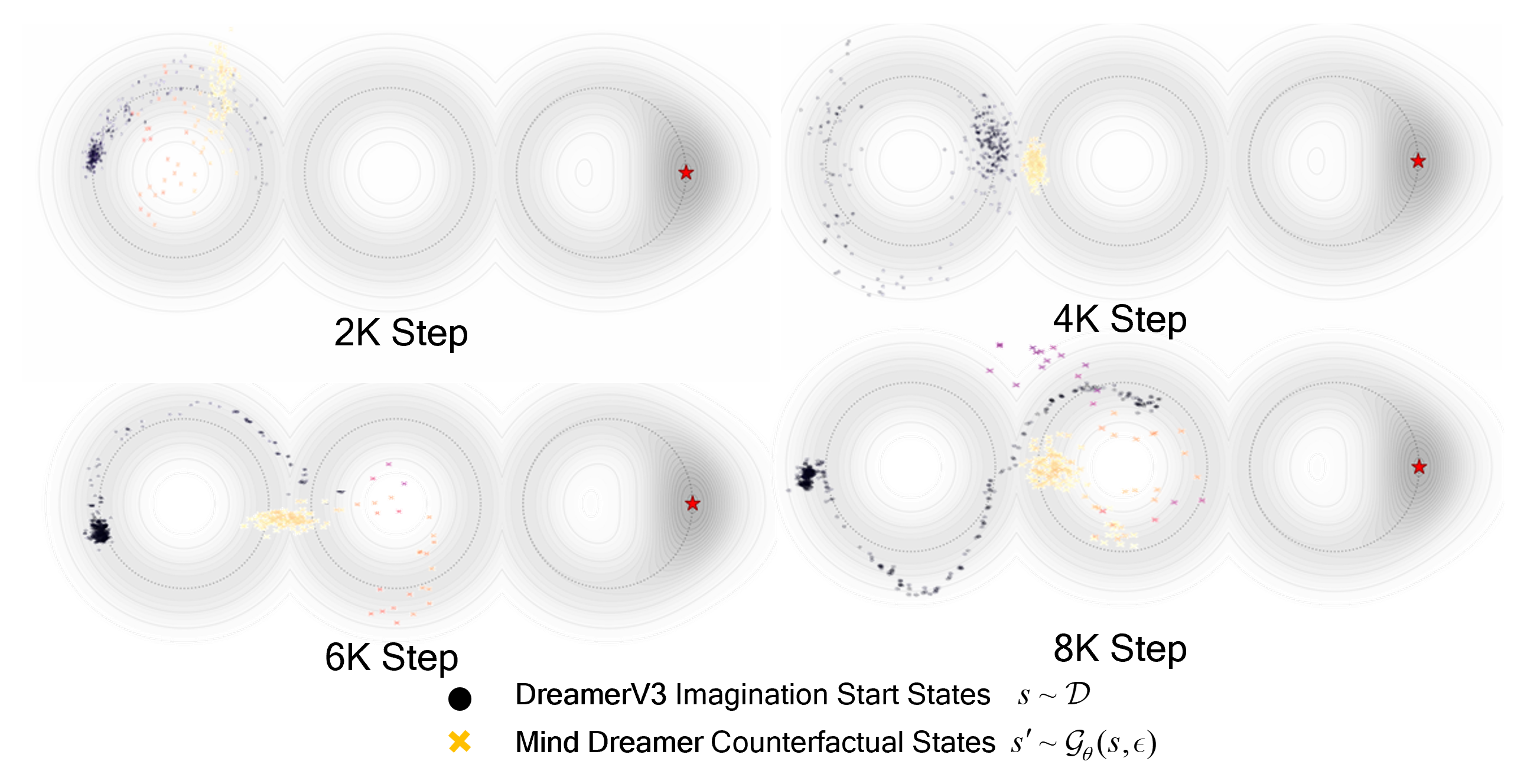}} 
    \caption{
\textbf{Visualization of Sampling Dynamics on the Synthetic  Three-Ring Manifold.}
We compare the distribution of imagined states between DreamerV3 and Mind Dreamer across training snapshots.  
DreamerV3 exhibits Historical Tethering, its sampling distribution is strictly coupled with historical occupancy, hard to escape the ring attractor. Mind Dreamer demonstrates Active Manifold Refinement, the sampling distribution rapidly aggregates at the topological junctions between rings. This illustrates MD's ability to proactively bridge manifold bottlenecks, enabling the agent to transcend the limits of the replay buffer and linearize the discovery of global task potential.
    }
    \label{fig:toy_viz}
  \end{center}
\end{figure}

\begin{figure*}[t!]
  \centering
  \includegraphics[width=1.0\linewidth]{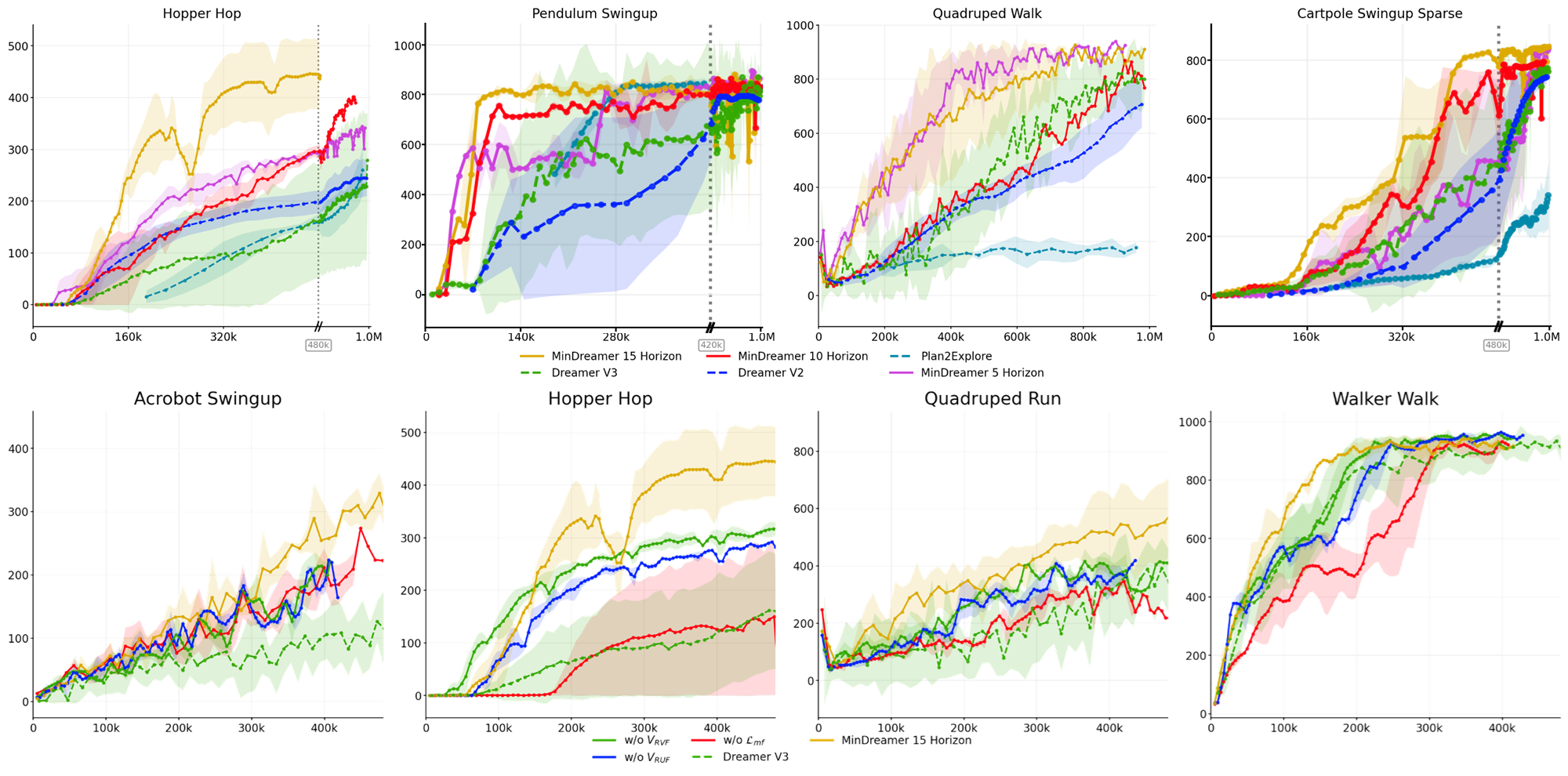}

  \caption{Evaluation on DeepMind Control (DMC) Benchmarks. 
  \textbf{(Top) Performance Comparison:} Mind Dreamer (Gold) demonstrates \textbf{superior sample efficiency} and \textbf{higher asymptotic performance} against state-of-the-art baselines: DreamerV3 (Green Dashed), DreamerV2 (Blue Dashed), and Plan2Explore (Cyan Dashed). 
  Our method significantly accelerates convergence in bottleneck tasks like \textit{Pendulum Swingup} and achieves higher final returns in sparse environments like \textit{Hopper Hop}, at the same sample budget and with marginal additional inference cost.
  \textbf{(Bottom) Ablation Study:} We analyze the role of specific components by comparing the full model (Gold) with variants removing the Pragmatic Relay $V_{RVF}$ (Solid Green), Epistemic Relay $V_{RUF}$ (Solid Blue), and Manifold Loss $\mathcal{L}_{mf}$ (Solid Red). The significant performance drops in the ablated variants validate the necessity of the proposed Active Causal Intervention framework.
  Shaded regions indicate the standard deviation over 5 random seeds.}
  \label{fig:part_curve}
\end{figure*}

\subsection{Visualizing ACI: The Three-Ring Manifold Case Study}
To demystify the ``Latent Jump'' mechanism, we design a synthetic \textbf{Three-Ring Manifold} environment where the underlying geometry $\mathcal{M}$ is known but topologically challenging.  The agent resides on three coplanar coaxial rings, with the top view of the field as observation. Each ring acts as a Local Attractor (simulating a local optimum), to transition to the higher reward ring, the agent must move against the gravitational pull toward the manifold boundary—a feat nearly impossible for random walk exploration.

As shown in Fig.~\ref{fig:toy_viz}, standard DreamerV3 trajectories remain trapped within the first ring, as its imagination is initialized from the replay buffer which is dominated by high-density attractor data. In contrast, MD's generator $\mathcal{G}$ proactively identifies transition gateways at the rings' boundaries where $V_{RUF}$ (uncertainty) is high. By sampling $s_0 \sim p_{gen}$, MD teleports imagination to the boundary ruptures, allowing the policy to discover the transition to the next ring without requiring thousands of random physical steps, thus achieving nearly a \textbf{4.2$\times$} acceleration for first hitting time over DreamerV3. This confirms that MD converts a Global Discovery problem into a Local Refinement task.

\subsection{Performance on DMC Vision Benchmarks}
We evaluate MD on 20 tasks from the \textbf{DeepMind Control Suite} with raw pixel observations. MD demonstrates a significant lead in learning speed. Fig.~\ref{fig:part_curve} presents part of training curves.

\textbf{Accelerating Manifold Coverage (Sample Efficiency).}
MD fundamentally alters sample complexity by untethering ``linearizing'' imagination via latent jumps. 
On average, MD achieves 90\% peak performance in 334.7k steps, delivering a 1.67$\times$ speedup over DreamerV3 (557.6k) (see Tab.~\ref{table-threshold-90} and Tab.~\ref{table-threshold-85} for details).  
In bottleneck tasks like \textit{Pendulum Swingup}, MD accelerates convergence by an order of magnitude (\textbf{$>$8.8$\times$}).

\textbf{Breaking the Exploration Plateau (Asymptotic Performance).}
Tab.~\ref{table-performance-comparison} confirms that MD overcomes ``Historical Tethering,'' achieving an average return of \textbf{831.1} against DreamerV3 (780.3). This performance gain is most prominent in sparse-reward tasks, specifically, MD improves \textit{Hopper Hop} by \textbf{+59.8\%} and \textit{Quadruped Run} by \textbf{+30.3\%}, where Relay Potential Fields successfully bridge disjoint manifold segments, guiding the policy to discover critical transitions that trajectory-bound baselines fail to encounter.\

\subsection{Ablation: The Role of Relay Potentials and Hallucination Control}
We isolate the contributions of the Pragmatic ($V_{RVF}$) and Epistemic ($V_{RUF}$) relay functions, as shown in Fig~\ref{fig:part_curve}.  

\textbf{The Necessity of Manifold Grounding ($\mathcal{L}_{mf}$).} Crucially, removing the manifold constraint $\mathcal{L}_{mf}$ leads to a catastrophic performance drop, with the agent failing to surpass the DreamerV3 baseline. 
As shown in our \textit{Self-Consistency Error}: $\mathbb{E} \| s' - \text{Dec}(\text{Enc}(s'))\|$, $\mathcal{L}_{mf}$ achieves a $43.5$-fold reduction in jump consistency error compared to standard Markovian transitions.

\textbf{Pragmatic vs. Epistemic Synergy.} 
$V_{RUF}$ (Epistemic) drives exploration, while $V_{RVF}$ (Pragmatic) enforces task-orientation. Removing $V_{RUF}$ limits the agent to known regions despite efficient local propagation; removing $V_{RVF}$ sustains exploration entropy but sacrifices reward convergence due to a diminished Relay Advantage.
\section{Conclusion and Discussion}

In this work, we introduced \textbf{Mind Dreamer (MD)}, a novel MBRL framework that untethers latent imagination from historical trajectories. By operationalizing Active Causal Intervention (ACI) through an adversarial state generator, MD synthesizes interventional anchors to proactively bridge topological gaps in the learned manifold. To resolve the credit assignment paradox across spatial ruptures, we derived the Relay Value Function (RVF) and Relay Uncertainty Function (RUF), formally establishing a quadratic discount ($\gamma^2$) for stable uncertainty propagation. Theoretically, MD acts as a variance-minimizing importance sampler that accelerates manifold repair. Empirically, it achieves state-of-the-art sample efficiency on DMC benchmarks, particularly excelling in sparse-reward tasks.

\textbf{Limitations.} While MD significantly accelerates exploration and breaks historical tethering, it introduces two primary limitations:

\textit{Computational Overhead:} The adversarial optimization of the generator $\mathcal{G}$ and the computation of the InfoNCE surrogate introduce additional computational overhead during the imagination phase. In scenarios where environmental interaction is extremely cheap but compute is a strict bottleneck, this sample-to-compute trade-off may be less favorable.

\textit{Entanglement of Uncertainty and Evolving Mechanisms:} Our current epistemic proxy effectively leverages the latent KL-divergence to detect structural gaps under a \textit{static} transition assumption. However, in non-stationary environments where causal mechanisms evolve continuously (e.g., gradual friction changes or payload shifts), the current world model may entangle this continuous mechanism shift with aleatoric noise or pure epistemic ignorance~\cite{fan2026trace}, potentially causing the generator to target changing physics as spurious ``blind spots''.

\textbf{Future Work.} Future research will focus on expanding proactive geometric discovery by decoupling structural epistemic uncertainty from non-stationary environmental drifts, and scaling the ACI framework to high-dimensional embodied environments.

MD represents a critical step toward agents that don't just dream of where they have been, but actively illuminate where they need to be: {anchoring knowledge at the frontiers of the unknown, and synthesizing a world model shaped by proactive discovery rather than passive history.}

\section*{Acknowledgements}
This work was supported by the Brain Science and Brain-like Intelligence Technology-National Science and Technology Major Project (No. 2021ZD0200300).

\section*{Impact Statement}
This paper presents work whose goal is to advance the field of Machine
Learning. There are many potential societal consequences of our work, none
which we feel must be specifically highlighted here.

\bibliography{example_paper}
\bibliographystyle{icml2026}

\newpage
\appendix
\onecolumn



\section{Appendix A: Detailed Derivation of EFE-RL Equivalence}\label{app:efe-rl-equivalence}

\subsection{EFE Decomposition into Epistemic and Pragmatic Values}\label{app:def:local-efe-form}
In this section, we provide a step-by-step derivation of the Expected Free Energy (EFE), denoted as $G(\pi, \tau)$, decomposing it into Epistemic Value (Information Gain) and Pragmatic Value (Extrinsic Value).

\begin{lemma} \label{app:lem:efe_decomp}
The Expected Free Energy $G(\pi)$ for a policy $\pi$ at a future time $\tau$ is typically defined as follows, accroding to~\cite{friston2015epistemic, friston2017process}:
$$G(\pi, \tau) = \mathbb{E}_{q(o_\tau, s_\tau | \pi)} [\ln q(s_\tau | \pi) - \ln p(s_\tau, o_\tau)]$$
we can decompose $G$ into:
\begin{align}
    G(\pi, \tau) = -\underbrace{\mathcal{I}(s_\tau; o_\tau | \pi)}_{\text{Epistemic Value}} -\underbrace{\mathbb{E}_{q(o_\tau)} [\ln p(o_\tau)]}_{\text{Pragmatic Value}}
\end{align}
\end{lemma}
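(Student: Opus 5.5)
Starting from $G(\pi,\tau) = \mathbb{E}_{q(o_\tau,s_\tau|\pi)}[\ln q(s_\tau|\pi) - \ln p(s_\tau,o_\tau)]$, we factor the generative model as $p(s_\tau,o_\tau) = p(s_\tau|o_\tau)\,p(o_\tau)$ instead of the usual likelihood-times-prior factorization. This splits the integrand into two pieces, $\ln q(s_\tau|\pi) - \ln p(s_\tau|o_\tau)$ and $-\ln p(o_\tau)$. The second piece depends only on $o_\tau$. Marginalizing $s_\tau$ out of the joint expectation turns it into $-\mathbb{E}_{q(o_\tau)}[\ln p(o_\tau)]$, which is the pragmatic term as stated, with $p(o_\tau)$ read as the goal prior $p(o|C)$ of Definition~\ref{def:local_objective}.

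For the first piece we invoke the standard active-inference approximation (as in \cite{friston2015epistemic}): the true posterior $p(s_\tau|o_\tau)$ is replaced by the approximate posterior $q(s_\tau|o_\tau,\pi)$. The integrand becomes $\ln q(s_\tau|\pi) - \ln q(s_\tau|o_\tau,\pi)$. Since $q(s_\tau|o_\tau,\pi)\,q(o_\tau|\pi) = q(s_\tau,o_\tau|\pi)$, we can rewrite
\begin{equation*}
\mathbb{E}_{q(o_\tau,s_\tau|\pi)}\big[\ln q(s_\tau|\pi) - \ln q(s_\tau|o_\tau,\pi)\big] = -\mathbb{E}_{q(o_\tau,s_\tau|\pi)}\Big[\ln \tfrac{q(s_\tau,o_\tau|\pi)}{q(s_\tau|\pi)\,q(o_\tau|\pi)}\Big].
\end{equation*}
The right-hand side is exactly $-\mathcal{I}(s_\tau;o_\tau|\pi)$ under $q$, which is the epistemic term. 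Adding the two pieces gives the claimed decomposition. Equivalently, the epistemic piece is $-\mathbb{E}_{q(o_\tau)}D_{KL}[q(s_\tau|o_\tau,\pi)\|q(s_\tau|\pi)]$, the form used later for the RSSM latent KL.

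The main obstacle is that the identity holds only up to the posterior approximation $p(s_\tau|o_\tau)\approx q(s_\tau|o_\tau,\pi)$. We will state this step as an explicit assumption rather than hide it. The error it introduces is the term
\begin{equation*}
\mathbb{E}_{q(o_\tau)}\, D_{KL}\big[q(s_\tau|o_\tau,\pi)\,\|\,p(s_\tau|o_\tau)\big] \;\geq\; 0,
\end{equation*}
so without the assumption the decomposition becomes an upper bound on $G$, and it is exact when the approximate posterior matches the true one. The weights $\beta,\eta$ in Eq.~\ref{eq:efe_local} do not come out of the derivation; they are added afterwards as trade-off coefficients, and $\beta=\eta=1$ recovers the lemma.
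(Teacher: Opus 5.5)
Your argument is essentially the paper's proof. It uses the same factorization $p(s_\tau,o_\tau)=p(s_\tau|o_\tau)\,p(o_\tau)$ and the same replacement of $p(s_\tau|o_\tau)$ by $q(s_\tau|o_\tau,\pi)$, which the paper justifies only by saying inference is ``amortized or ideal''. It also identifies the two terms as $-\mathcal{I}(s_\tau;o_\tau|\pi)$ and $-\mathbb{E}_{q(o_\tau)}[\ln p(o_\tau)]$, exactly as the paper does.

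Accounting explicitly for the approximation error is a useful addition the paper lacks, but you have the direction of the bound reversed. Split $\ln q(s_\tau|\pi)-\ln p(s_\tau|o_\tau)$ through $\ln q(s_\tau|o_\tau,\pi)$, taking the expectation under $q(o_\tau,s_\tau|\pi)=q(o_\tau|\pi)\,q(s_\tau|o_\tau,\pi)$. This gives
\[
G(\pi,\tau) = -\mathcal{I}(s_\tau;o_\tau|\pi) - \mathbb{E}_{q(o_\tau)}[\ln p(o_\tau)] + \mathbb{E}_{q(o_\tau)} D_{KL}\big[q(s_\tau|o_\tau,\pi)\,\|\,p(s_\tau|o_\tau)\big].
\]
The last term is nonnegative, so the right-hand side of the lemma is a lower bound on $G$, not an upper bound. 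Dropping the KL term underestimates $G$, with equality exactly when the approximate posterior equals the true one. With that sign fixed, your version is a strictly more informative statement than the paper's.
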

\begin{proof}

Starting from the definition of EFE and applying the identity $p(s_\tau, o_\tau) = p(s_\tau | o_\tau) p(o_\tau)$, we expand the denominator within the logarithm:
$$\begin{aligned}
G(\pi, \tau) &= \mathbb{E}_{q(o_\tau, s_\tau | \pi)} [\ln q(s_\tau | \pi) - \ln p(s_\tau, o_\tau)] \\
&= \mathbb{E}_{q(o_\tau, s_\tau | \pi)} [\ln q(s_\tau | \pi) - (\ln p(s_\tau | o_\tau) + \ln p(o_\tau))] \\
&= \mathbb{E}_{q(o_\tau, s_\tau | \pi)} [\ln q(s_\tau | \pi) - \ln p(s_\tau | o_\tau) - \ln p(o_\tau)]
\end{aligned}$$

Utilizing the linearity of expectation, we partition the expression into two distinct components:
$$G(\pi, \tau) = \underbrace{\mathbb{E}_{q(o_\tau, s_\tau | \pi)} \left[ \ln \frac{q(s_\tau | \pi)}{p(s_\tau | o_\tau)} \right]}_{\text{Term 1}} - \underbrace{\mathbb{E}_{q(o_\tau | \pi)} [\ln p(o_\tau)]}_{\text{Term 2}}$$

\textbf{Term 1: Epistemic Value (Mutual Information)}

In Active Inference, we assume the agent's inference is amortized or ideal, such that the true posterior $p(s_\tau | o_\tau)$ is approximated by the variational posterior $q(s_\tau | o_\tau, \pi)$.By definition, the Mutual Information between states and observations under a policy $\pi$ is:
$$\mathcal{I}(s_\tau; o_\tau | \pi) = \mathbb{E}_{q(o_\tau, s_\tau | \pi)} \left[ \ln \frac{q(s_\tau | o_\tau, \pi)}{q(s_\tau | \pi)} \right]$$
Observing Term 1:
$$\text{Term 1} = \mathbb{E}_{q(o_\tau, s_\tau | \pi)} [\ln q(s_\tau | \pi) - \ln q(s_\tau | o_\tau, \pi)] = -\mathcal{I}(s_\tau; o_\tau | \pi)$$
Intuition: Mutual information quantifies the reduction in uncertainty regarding state $s$ afforded by observation $o$. Minimizing $G$ necessitates maximizing this information gain, representing the ``Epistemic Value'' or exploration drive.

\textbf{Term 2: Pragmatic Value (Extrinsic Value)}

Term 2 is defined as $-\mathbb{E}_{q(o_\tau | \pi)} [\ln p(o_\tau)]$. Here, $p(o_\tau)$ represents the agent's prior preferences (often denoted as $p(o_\tau | C)$), which encodes desired outcomes or goal states.When an observation $o_\tau$ aligns with these preferences, $p(o_\tau)$ is high, thereby minimizing $-\ln p(o_\tau)$ and, consequently, $G$. This term constitutes the ``Pragmatic Value,'' driving the agent toward goal-directed behavior (exploitation).

Combining the terms yields the final EFE objective:
$$G(\pi, \tau) = -\underbrace{\mathcal{I}(s_\tau; o_\tau | \pi)}_{\text{Epistemic Value}} -\underbrace{\mathbb{E}_{q(o_\tau)} [\ln p(o_\tau)]}_{\text{Pragmatic Value}}$$
\end{proof}
\begin{remark}
This formulation illustrates the fundamental trade-off in Active Inference: the agent seeks to minimize EFE by simultaneously maximizing information gain (exploration) and satisfying prior preferences (exploitation)~\cite{parr2022active, friston2017process}.
\end{remark}

\subsection{EFE-MaxEnt RL Equivalence} \label{app:sec:maxent}
In this section, we provide the formal derivation showing that the Expected Free Energy (EFE) framework, under specific constraints on the prior and temperature, reduces to the standard objective of Maximum Entropy Reinforcement Learning (MaxEnt RL). This equivalence justifies the use of RL-based value functions to approximate the pragmatic components of Active Inference~\cite{millidge2021whence}.

\begin{proof}
In Active Inference, the agent's ``goals'' are encoded as a prior distribution over observations, $p(o)$. To recover RL, we define this prior as a Boltzmann distribution over a reward function $r(s)$~\cite{tschantz2020reinforcement, millidge2020relationship}:
$$p(o_\tau) = \frac{\exp(r(s_\tau) / \alpha)}{Z}$$
where $\alpha$ is a temperature parameter representing the ``precision'' of the agent's preferences. Substituting this into the pragmatic term:
$$- \ln p(o_\tau) = -\frac{r(s_\tau)}{\alpha} + \ln Z$$

In the limit where the world model becomes deterministic and perfectly known (i.e., we ignore the epistemic/ambiguity term 
$\mathcal{I}(s_\tau; o_\tau | \pi)\to 0$
), the EFE reduces to~\cite{millidge2021whence}:
$$G(\pi, \tau) \approx \mathbb{E}_{q(s_\tau | \pi)} \left[ \ln q(s_\tau | \pi) - \frac{r(s_\tau)}{\alpha} \right]$$
Summing over the infinite horizon with a discount factor $\gamma$:
$$\mathcal{J}_{EFE} = \sum_{\tau=0}^{\infty} \gamma^\tau G(\pi, \tau) = \mathbb{E}_{\pi} \left[ \sum_{\tau=0}^{\infty} \gamma^\tau \left( \ln \pi(a_\tau | s_\tau) - \frac{r(s_\tau)}{\alpha} \right) \right]$$
(Note: Under the assumption of state-action consistency, $\ln q(s_\tau)$ is effectively bounded by the entropy of the policy $\ln \pi(a_\tau|s_\tau)$~\cite{levine2018reinforcement}).

By multiplying the entire objective by $-\alpha$, we transform the minimization of R-EFE into the maximization of a new objective $\mathcal{J}_{RL}$:
$$\max_{\pi} \mathbb{E}_{\pi} \left[ \sum_{\tau=0}^{\infty} \gamma^\tau \left( r(s_\tau) - \alpha \ln \pi(a_\tau | s_\tau) \right) \right]$$
This is exactly the Maximum Entropy RL objective originally formulated by Ziebart et al. \yrcite{ziebart2008maximum} and used in modern algorithms like Soft Actor-Critic (SAC)~\cite{haarnoja2018soft}.

\end{proof}

\section{Appendix B: Formal Derivation of the Relay Uncertainty Function (RUF)}
The Relay Uncertainty Function (RUF) serves as the path-integrated estimator of the epistemic component within the Relay Expected Free Energy (R-EFE) framework. Here, we provide the formal derivation transitioning from information-theoretic primitives to a computable Bellman recursion~\cite{parr2022active}.

\subsection{From Expected Free Energy to Epistemic Variance}\label{app:efe-epistemic-variance}
In this section, we establish the formal link between the information-theoretic Epistemic Value (Mutual Information) and the Predictive Variance commonly used in stochastic filtering and uncertainty estimation.

In the Active Inference framework, the agent minimizes the Expected Free Energy (EFE), denoted as $G(\pi, \tau)$. For a future time step $\tau$, the local EFE is defined as:
$$G(\pi, \tau) = \mathbb{E}_{q(o_\tau, s_\tau | \pi)} [\ln q(s_\tau | \pi) - \ln p(s_\tau, o_\tau)]$$
Applying the identity $p(s_\tau, o_\tau) = p(s_\tau | o_\tau) p(o_\tau)$, we decompose $G$ into:
$$G(\pi, \tau) = -\underbrace{\mathcal{I}(s_\tau; o_\tau | \pi)}_{\text{Epistemic Value}} -\underbrace{\mathbb{E}_{q(o_\tau)} [\ln p(o_\tau)]}_{\text{Pragmatic Value}}$$

\begin{lemma} \textbf{Epistemic-Variance Equivalence under Gaussian Approximation} 
\label{app:lem:epistemic_variance}
Under a Gaussian approximation of the latent state posterior, the epistemic term (ambiguity) is locally proportional to the predictive variance $\sigma^2$~\cite{buckley2017free}.
\end{lemma}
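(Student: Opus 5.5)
The plan is to work with the standard linear-Gaussian observation model. I take the latent posterior over $s_\tau$ to be Gaussian, $q(s_\tau|\pi)=\mathcal{N}(\mu_\tau,\Sigma_\tau)$. Observations are generated as $o_\tau = g(s_\tau)+\varepsilon$ with $\varepsilon\sim\mathcal{N}(0,\Sigma_o)$, where $\Sigma_o$ is fixed aleatoric noise. The mutual information from Lemma~\ref{app:lem:efe_decomp} is then written as a difference of entropies, $\mathcal{I}(s_\tau;o_\tau|\pi)=\mathcal{H}[q(o_\tau|\pi)]-\mathbb{E}_{q(s_\tau)}\mathcal{H}[p(o_\tau|s_\tau)]$. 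The second term is the ambiguity and reduces to the constant $\tfrac12\ln|2\pi e\Sigma_o|$. Only the first term depends on the state uncertainty.

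Next I linearise $g$ around $\mu_\tau$ with Jacobian $J_\tau$. This makes the predictive distribution Gaussian with covariance $J_\tau\Sigma_\tau J_\tau^\top+\Sigma_o$, so
\begin{equation*}
\mathcal{I}(s_\tau;o_\tau|\pi)=\tfrac12\ln\bigl|I+\Sigma_o^{-1}J_\tau\Sigma_\tau J_\tau^\top\bigr|.
\end{equation*}
Defining the predictive epistemic variance $\sigma^2_\tau:=\mathrm{Tr}(\Sigma_o^{-1}J_\tau\Sigma_\tau J_\tau^\top)$, I expand $\ln|I+A|=\mathrm{Tr}(A)+O(\|A\|^2)$. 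This gives $\mathcal{I}\approx\tfrac12\sigma^2_\tau$. In the scalar case it is exactly $\tfrac12\ln(1+\sigma^2/\sigma_o^2)\approx\sigma^2/(2\sigma_o^2)$. That is the claimed local proportionality, with constant $1/(2\sigma_o^2)$.

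The main obstacle is making "locally" precise. The expansion is valid only in the regime $\|A\|\ll 1$, meaning epistemic variance is small relative to observation noise. I would therefore state the result as a first-order equivalence with an explicit remainder bounded by $\tfrac14\|A\|_F^2$. Outside this regime the map $\sigma^2\mapsto\mathcal{I}$ is still monotone, because $\ln(1+x)$ is increasing. I would note this as the weaker but globally valid statement: maximising $\mathcal{I}$ and maximising $\sigma^2$ select the same anchors. That monotonicity is all that the later $\gamma^2$ variance-propagation argument requires.
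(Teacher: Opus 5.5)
Your argument is correct, but it takes a genuinely different route from the paper's.

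\textbf{The paper's route.} It works on the state side, $\mathcal{I}(s_\tau;o_\tau|\pi)=H(s_\tau|\pi)-\mathbb{E}_{q(o_\tau|\pi)}[H(s_\tau|o_\tau,\pi)]$, and calls the posterior state entropy $H(s_\tau|o_\tau,\pi)$ the ambiguity. It writes the Gaussian entropy as $\tfrac12\ln(2\pi e\sigma^2)$ and Taylor-expands it about an arbitrary baseline $\sigma_0^2$, giving $\Delta H\approx\Delta\sigma^2/(2\sigma_0^2)$. This needs no observation model and no linearisation. But it only yields an affine, increment-level relation for a single entropy term, with a slope that depends on the baseline. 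It never expresses the mutual information itself as a function of a variance.

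\textbf{Your route.} You work on the observation side, where the ambiguity $\mathbb{E}_{q(s_\tau)}\mathcal{H}[p(o_\tau|s_\tau)]$ is the constant $\tfrac12\ln|2\pi e\Sigma_o|$. On your reading, then, the variance dependence lives in the predictive entropy, not in the ambiguity, and what you prove is the claim for the epistemic term $\mathcal{I}$ itself. The price is extra structure: additive Gaussian noise of fixed covariance, a linearised observation map, and a small epistemic-to-aleatoric ratio.

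The payoff is substantial:
\begin{itemize}
\item a closed form $\tfrac12\ln|I+A|$;
\item a true proportionality $\mathcal{I}\approx\tfrac12\mathrm{Tr}(A)$ through the origin, with an explicit constant;
\item a checkable remainder that holds for every $A$, not only small ones. Since $A$ is similar to a positive semidefinite matrix, applying $x-\ln(1+x)\le x^2/2$ to its eigenvalues gives $0\le\tfrac12\mathrm{Tr}(A)-\mathcal{I}\le\tfrac14\mathrm{Tr}(A^2)\le\tfrac14\|A\|_F^2$.
\end{itemize}
That linear form is also what the paper actually uses downstream, where $\sigma_t^2$ is substituted for $\mathcal{I}_{t+1}$ inside the $\gamma^2$-discounted sum.

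\textbf{Your closing remark overreaches.} For matrix $A$, $\mathcal{I}=\tfrac12\ln|I+A|$ is not a function of $\mathrm{Tr}(A)$ alone. Eigenvalues $(2.2,0)$ versus $(1,1)$ give the larger trace but the smaller information, since $\ln 3.2<\ln 4$. So maximising $\mathcal{I}$ and maximising $\sigma_\tau^2$ can select different anchors. What holds globally is monotonicity in the Loewner order of $\Sigma_\tau$, or monotonicity in $\sigma^2$ only in the scalar case.

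Monotonicity would not suffice for the $\gamma^2$ argument anyway. Read the same numbers as per-step whitened scalar variances along two paths, with $\gamma$ near $1$. A concave monotone reparametrisation of each summand then reverses the ranking of the discounted sums. The downstream use therefore needs your first-order (linear) regime, not the weaker monotone statement.
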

\begin{proof}

Following the definition of Mutual Information, the Epistemic Value can be decomposed into the difference between the marginal and conditional entropies:
$$\mathcal{I}(s_\tau; o_\tau | \pi) = H(s_\tau | \pi) - \mathbb{E}_{q(o_\tau | \pi)}[H(s_\tau | o_\tau, \pi)]$$

where:

$H(s_\tau | \pi)$ denotes the Prior Entropy, representing the agent's uncertainty regarding the hidden state $s$ before receiving an observation.

$H(s_\tau | o_\tau, \pi)$ represents the Ambiguity (Conditional Entropy), characterizing the residual uncertainty in $s$ after the observation $o$ is integrated.

In the context of Expected Free Energy (EFE) minimization, the agent typically seeks to minimize the expected ambiguity, which facilitates a more precise mapping between observations and states.

We assume that the variational posterior over the latent states $s$ follows a multivariate Gaussian distribution, $q(s) \sim \mathcal{N}(\mu, \Sigma)$. The differential entropy for a $k$-dimensional Gaussian is given by:$$H(s) = \frac{k}{2}(1 + \ln(2\pi)) + \frac{1}{2} \ln |\Sigma|$$In the univariate case ($k=1$), this simplifies to:$$H(s) = \frac{1}{2} \ln(2\pi e \sigma^2)$$This formulation highlights that under a Laplace or Gaussian approximation, the entropy—and thus the epistemic uncertainty—is a monotonic logarithmic function of the variance $\sigma^2$.

To demonstrate that the epistemic term is locally proportional to the predictive variance, we perform a first-order Taylor expansion of the entropy function $f(\sigma^2) = \frac{1}{2} \ln(2\pi e \sigma^2)$ around a baseline variance $\sigma_0^2$.The derivative of the entropy with respect to the variance is:$$\frac{df}{d\sigma^2} = \frac{1}{2\sigma^2}$$Consequently, for small perturbations in the model's uncertainty, the change in entropy $\Delta H$ relates to the change in variance $\Delta \sigma^2$ as follows:$$\Delta H \approx \frac{1}{2\sigma_0^2} \Delta \sigma^2$$

\end{proof}
\begin{remark}
This confirms that the local epistemic cost is proportional to the model's predictive variance: $G_{epistemic} \propto \sigma^2$. This equivalence justifies the use of variance-based exploration bonuses as a direct proxy for information-theoretic active inference.
\end{remark}

\subsection{Convergence of Uncertainty Bellman Equation via $\gamma^2$}\label{app:uncertanity-function}
Unlike pragmatic rewards which represent scalar utilities, epistemic gains represent the resolution of stochastic shocks. We prove that the path-integration of these shocks requires a quadratic discount to ensure stability~\cite{odonoghue2018uncertainty}.

\begin{theorem} \textbf{Quadratic Epistemic Discounting}. \label{app:thm:gamma2}
Let the cumulative epistemic risk $\mathbf{U}$ be the discounted sum of future informational shocks $\epsilon_t \sim \mathcal{N}(0, \sigma^2_t)$. The variance of this total risk $\text{Var}(\mathbf{U})$ is governed by a quadratic discount factor $\gamma^2$.
\end{theorem}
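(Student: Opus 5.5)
The plan is to make the statement precise and then read off the result from elementary variance calculus. Write the cumulative epistemic risk as $\mathbf{U} = \sum_{t=0}^{H} \gamma^t \epsilon_t$, with $H$ possibly infinite. We assume the informational shocks $\epsilon_t \sim \mathcal{N}(0,\sigma_t^2)$ are zero-mean. We also assume they are uncorrelated across time, conditionally on the filtration generated by the latent trajectory. This is the martingale-difference structure of one-step prediction errors in the world model $p_\psi$, and it is the same assumption used by \cite{odonoghue2018uncertainty}.

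The first step is to expand $\text{Var}(\mathbf{U}) = \sum_{t,t'} \gamma^{t+t'}\text{Cov}(\epsilon_t,\epsilon_{t'})$. Under the martingale-difference assumption, the cross terms vanish by the tower property: for $t<t'$, $\mathbb{E}[\epsilon_t\epsilon_{t'}] = \mathbb{E}[\epsilon_t\,\mathbb{E}[\epsilon_{t'}\mid\mathcal{F}_{t'-1}]] = 0$. This leaves $\text{Var}(\mathbf{U}) = \sum_t \gamma^{2t}\sigma_t^2$, where $\sigma_t^2$ is read as the conditional variance averaged along the trajectory. Thus the effective discount on variance is $\gamma^2$, not $\gamma$.

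The second step is to turn this sum into a recursion. Peeling off the first term gives the uncertainty Bellman equation $\mathbf{u}(s_0) = \sigma_0^2 + \gamma^2\,\mathbb{E}[\mathbf{u}(s_1)]$. With bounded $\sigma_t^2 \le \sigma_{\max}^2$, this operator is a $\gamma^2$-contraction in $L_\infty$, mirroring Lemma~\ref{lem:contraction}. Its fixed point is bounded by $\sigma_{\max}^2/(1-\gamma^2)$. This justifies the $\gamma^{2t}$ weights and the $\gamma^{2k}U_{\phi_u}(s')$ bootstrap in Eq.~\ref{eq:RUF-k-step} once $\sigma^2_{t}$ is identified with $\mathcal{I}_{t+1}$. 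That identification holds only up to a first-order proportionality constant, via Lemma~\ref{app:lem:epistemic_variance}. Stopping the sum at the hitting time $\tau_{s'}$ and applying the strong Markov property gives the relay form of the recursion.

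The main obstacle is the vanishing of cross-covariances. Latent shocks in an RSSM are not independent in general: the posterior at $t'$ depends on $\epsilon_t$ through the deterministic state. I would handle this by working with conditional variances, which are then allowed to be random through $s_t$. The law of total variance then yields the same $\gamma^2$-weighted sum of expected conditional variances, with no independence needed. If the shocks are genuinely correlated (non-innovation noise), I would only claim an upper bound of the form $\text{Var}(\mathbf{U}) \le (\sum_t \gamma^t\sigma_t)^2$. In that case I would note that the exact $\gamma^2$ law requires the innovation assumption.
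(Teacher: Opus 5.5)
Your proof is correct, and its core computation is the same as the paper's. You expand $\mathrm{Var}(\sum_t \gamma^t \epsilon_t)$, eliminate the cross terms, and use $\mathrm{Var}(aX)=a^2\mathrm{Var}(X)$ to obtain $\sum_t \gamma^{2t}\sigma_t^2$.

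The difference is in how the cross terms are disposed of. The paper asserts that, ``under the Markovian assumption,'' shocks at different times are independent, and then applies $\mathrm{Var}(\sum_i a_i X_i)=\sum_i a_i^2\mathrm{Var}(X_i)$. You assume only the innovation (martingale-difference) property $\mathbb{E}[\epsilon_{t'}\mid\mathcal{F}_{t'-1}]=0$, and kill $\mathbb{E}[\epsilon_t\epsilon_{t'}]$ by the tower property.

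This gives you a strictly weaker and more defensible hypothesis. Markovianity does not by itself yield independence. As you observe, in an RSSM the later conditional variances depend on earlier shocks through the deterministic state. So independence typically fails, while uncorrelatedness of one-step prediction errors still holds.

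Under your assumption, the $\sigma_t^2$ in the conclusion is the marginal second moment $\mathbb{E}[\epsilon_t^2]$, i.e.\ the averaged conditional variance. This agrees with the statement's $\sigma_t^2$, and Gaussianity plays no role in either argument. For $H=\infty$, both arguments tacitly need $\sum_t\gamma^{2t}\sigma_t^2<\infty$. With orthogonal terms, this gives $L^2$ convergence of the series and justifies taking the variance termwise.

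Your fallback bound $(\sum_t\gamma^t\sigma_t)^2$ (Minkowski in $L^2$) makes explicit what the paper leaves implicit: the exact $\gamma^2$ law rests entirely on the vanishing of cross-covariances. The paper's version is a one-line calculation under a stronger assumption.

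Your second step is not needed for this statement. That step covers the uncertainty Bellman recursion, the $\gamma^2$-contraction, and the relay form stopped at $\tau_{s'}$. The paper treats these separately in Theorem~\ref{app:thm:relay_epistemic_recursion} and Theorem~\ref{app:thm:contraction}.
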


\begin{proof}
Consider the total discounted epistemic accumulation: $\mathbf{U} = \sum_{t=0}^{\infty} \gamma^t \epsilon_t$. Under the Markovian assumption, informational shocks at different time steps are independent. Utilizing the variance property for independent variables, $\text{Var}(\sum a_i X_i) = \sum a_i^2 \text{Var}(X_i)$~\cite{sobel1982variance}, we have:
$$\text{Var}(\mathbf{U}) = \text{Var}\left(\sum_{t=0}^{\infty} \gamma^t \epsilon_t\right) = \sum_{t=0}^{\infty} \text{Var}(\gamma^t \epsilon_t)$$
By the scaling property of the variance operator, $\text{Var}(aX) = a^2\text{Var}(X)$, we derive:
$$V_{RUF} = \sum_{t=0}^{\infty} (\gamma^t)^2 \text{Var}(\epsilon_t) = \sum_{t=0}^{\infty} (\gamma^2)^t \sigma^2_t$$
This demonstrates that the integration of informational variance over time naturally induces a $\gamma^2$ contraction coefficient. The tighter horizon induced by $\gamma^2 < \gamma$ prevents the generator $\mathcal{G}$ from over-optimizing toward distal, high-variance regions of the manifold (hallucinations), consistent with the stability conditions for variance in MDPs~\cite{mannor2011bias}.
\end{proof}

\subsection{Convergence of Relay Uncertainty Bellman Equation}\label{app:RUF}
We redefine the RUF as the potential of a state $s'$ to serve as an informational gateway. We prove that the integration of variance across trajectories traversing $s'$ necessitates a quadratic discount $\gamma^2$.

\begin{theorem} \textbf{Relay Epistemic Recursion}. \label{app:thm:relay_epistemic_recursion}
Let $\xi = \{s_0, a_0, \dots, s_k, \dots\}$ be a trajectory where $s_0=s$ and $s_k=s'$. The RUF, representing the expected cumulative uncertainty of all paths traversing through $s'$, satisfies a Bellman recursion with a quadratic discount $\gamma^2$.
\end{theorem}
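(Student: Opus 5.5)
We prove the recursion by splitting the trajectory $\xi$ at the hitting time $\tau_{s'}=k$ and using the strong Markov property of the latent flow $s_t \sim p_\psi(\cdot\mid s_{t-1},a_{t-1})$. The variance calculus of Theorem~\ref{app:thm:gamma2} then supplies the discount. Concretely, write the cumulative epistemic accumulation along $\xi$ as $\mathbf{U}(\xi)=\sum_{t\ge 0}\gamma^t\epsilon_{t+1}$, where the shock $\epsilon_{t+1}$ has conditional variance $\mathcal{I}_{t+1}$. This identification is justified by Lemma~\ref{app:lem:epistemic_variance}: under the Gaussian posterior, local information gain is proportional to predictive variance. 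We then define the RUF as the conditional variance $V_{RUF}(s,s')=\mathrm{Var}\big(\mathbf{U}(\xi)\mid s_0=s,\ s_{\tau_{s'}}=s'\big)$.

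The first step is the split at the hitting time:
\begin{equation*}
\mathbf{U}(\xi)=\sum_{t=0}^{k-1}\gamma^t\epsilon_{t+1}+\gamma^k\,\mathbf{U}(\theta_k\xi),
\end{equation*}
where $\theta_k$ is the shift operator, so that $\theta_k\xi$ is the tail trajectory started at $s'$. Conditionally on $s_k=s'$, the strong Markov property makes the tail independent of the prefix shocks. The prefix shocks are also mutually independent, as in Theorem~\ref{app:thm:gamma2}: they are zero-mean martingale differences with respect to the filtration generated by the latent path.

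The second step applies $\mathrm{Var}(\sum a_iX_i)=\sum a_i^2\mathrm{Var}(X_i)$ to the split. This gives
\begin{equation*}
V_{RUF}(s,s')=\mathbb{E}\Big[\sum_{t=0}^{k-1}\gamma^{2t}\mathcal{I}_{t+1}\Big]+\gamma^{2k}\,\mathrm{Var}\big(\mathbf{U}\mid s_0=s'\big).
\end{equation*}
We identify the last variance with the Bellman uncertainty $U_{\phi_u}(s')$, which is defined through the one-step uncertainty Bellman equation $U(s)=\mathbb{E}[\mathcal{I}_1+\gamma^2U(s_1)]$. This reproduces Eq.~\ref{eq:RUF-k-step}. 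Peeling off one step ($k\mapsto k-1$, with $s_1$ as the new anchor) then yields the recursion
\begin{equation*}
V_{RUF}(s,s')=\mathbb{E}\big[\mathcal{I}_1+\gamma^2V_{RUF}(s_1,s')\big]\ \text{ for } s\neq s', \qquad V_{RUF}(s',s')=U_{\phi_u}(s').
\end{equation*}
This is exactly the fixed-point equation of $\mathcal{T}_U$. Lemma~\ref{lem:contraction} then gives uniqueness with modulus $\gamma^2$.

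The main obstacle is the cross terms in the variance when $k$ is a random hitting time rather than a fixed index. Stochastic $k$ introduces a law-of-total-variance term, $\mathrm{Var}(\mathbb{E}[\mathbf{U}\mid\tau_{s'}])$, along with covariances between prefix shocks and the tail. We will first try to remove these by the martingale-difference structure, using $\mathbb{E}[\epsilon_{t+1}\mid\mathcal{F}_t]=0$ and optional stopping at $\tau_{s'}$. If a residual mean-variance term remains, we will state the recursion as an upper bound in the style of O'Donoghue et al., where the uncertainty Bellman equation bounds the posterior variance. Equality then holds under the Gaussian, zero-mean-shock assumption, and we record it as that assumption.
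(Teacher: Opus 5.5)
Your argument is correct. Its core is the same computation as the paper's: split the discounted shock sum at step $k$, use additivity of variance over uncorrelated shocks and $\mathrm{Var}(aX)=a^2\mathrm{Var}(X)$, pull out $\gamma^{2k}$, and identify the tail variance with the uncertainty potential at $s'$.

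The framing differs. The paper fixes $k$, conditions on $s_k=s'$, and defines the RUF as $\mathbb{E}_\pi[\mathrm{Var}(\sum_t\gamma^t\epsilon_t)\mid s_0=s,s_k=s']$. The variance is over the shocks along a given path, and the trajectory average sits outside. Under that definition cross terms never arise, and the proof stops at the $k$-step form of Eq.~\ref{eq:RUF-k-step}.

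You instead take a total variance and split at the random time $\tau_{s'}$ via the strong Markov property. That costs some bookkeeping, but it buys two things:
\begin{itemize}
\item it matches the hitting-time operator $\mathcal{T}_U$ of Definition~\ref{def:relay_ops};
\item it yields a genuine one-step recursion with boundary value $U_{\phi_u}(s')$. The fixed-$k$ proof cannot deliver this, because peeling a step there changes both the horizon and the bridge conditioning.
\end{itemize}
Uniqueness then comes from the immediate $\gamma^2$-contraction of your one-step operator. Lemma~\ref{lem:contraction} is not the right tool here, since it contracts in the bootstrap argument.

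Your flagged obstacle closes without the upper-bound fallback if you compute $\mathbb{E}[\mathbf{U}^2]$ directly rather than through the law of total variance. Write $\mathbf{U}=A+\gamma^{\tau}B$ with $A=\sum_{t<\tau}\gamma^t\epsilon_{t+1}$.
\begin{itemize}
\item $\mathbb{E}[B\mid\mathcal{F}_\tau]=0$ kills the cross term.
\item Since $\{t<\tau\}\in\mathcal{F}_t$, orthogonality of martingale increments gives $\mathbb{E}[A^2]=\mathbb{E}[\sum_{t<\tau}\gamma^{2t}\mathcal{I}_{t+1}]$.
\end{itemize}
The result is exactly $\mathbb{E}[\sum_{t<\tau}\gamma^{2t}\mathcal{I}_{t+1}+\gamma^{2\tau}U(s')]$, with $\gamma^{2\tau}$ inside the expectation rather than a fixed $\gamma^{2k}$ outside.

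One caveat remains. The event $\{s_{\tau_{s'}}=s'\}=\{\tau_{s'}<\infty\}$ lies in the future, and conditioning on it has two effects. It can break the martingale property when the shocks drive the dynamics. It also turns the law in your one-step recursion into a Doob $h$-transform. So either drop that conditioning, or assume $\tau_{s'}<\infty$ almost surely, for example on an irreducible finite abstraction in the spirit of the connectedness hypothesis of Lemma~\ref{lem:contraction}.
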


\begin{proof}
Define the cumulative epistemic risk $\mathbf{U}$ as the discounted sum of future informational shocks $\epsilon_t \sim \mathcal{N}(0, \sigma^2_t)$. For a given interventional anchor $s'$, we consider the expectation over trajectories conditioned on the event $s_k=s'$:
$$V_{RUF}(s, s') \triangleq \mathbb{E}_{\pi} \left[ \text{Var} \left( \sum_{t=0}^{\infty} \gamma^t \epsilon_t \right) \middle| s_0 = s, s_k = s' \right]$$
Given the independence of informational shocks in a Markovian latent manifold, the variance of the sum is the sum of the variances. Decomposing the series at horizon $k$:
$$V_{RUF}(s, s') = \mathbb{E}_{\pi} \left[ \sum_{t=0}^{k-1} \text{Var}(\gamma^t \epsilon_t) + \text{Var}\left( \sum_{t=k}^{\infty} \gamma^t \epsilon_t \right) \middle| s_k = s' \right]$$
Applying the scaling property $\text{Var}(aX) = a^2\text{Var}(X)$ and factoring out $\gamma^{2k}$ from the second term:
$$V_{RUF}(s, s') = \mathbb{E}_{\pi, s_k=s'} \left[ \sum_{t=0}^{k-1} (\gamma^2)^t \sigma^2_t + (\gamma^2)^k \underbrace{\text{Var}\left( \sum_{j=0}^{\infty} \gamma^j \epsilon_{k+j} \right)}_{\text{Future Potential } V_U(s')} \right]$$
This yields the recursive $k$-step Bellman form:
$$V_{RUF}(s, s') = \mathbb{E}_{\pi, s_k=s'} \left[ \sum_{t=0}^{k-1} (\gamma^2)^t \sigma^2(s_{t+1}|s_t, a_t) + (\gamma^2)^k V_{U}(s') \right]$$
\end{proof}

\subsection{Physical Interpretation: Gateway to the Global Manifold}
This formulation proves that $V_{RUF}(s, s')$ is not merely a measure of ``how much we learn by reaching $s'$,'' but rather ``how much uncertainty can be resolved globally by passing through $s'$.''
\begin{itemize}
    \item \textbf{Epistemic Horizon Control}: The $\gamma^2$ discount (where $\gamma^2 < \gamma$) induces a tighter epistemic horizon. This prevents the generator $\mathcal{G}$ from chasing distal hallucinations by forcing it to prioritize proximal gateways that lead to high-leverage, verifiable information, addressing the compounding error problem in model-based rollouts~\cite{janner2019when}.
    \item \textbf{Manifold Structural Repair}: By maximizing $V_{RUF}$, the agent identifies ``bottleneck states'' in the latent manifold. These are states that may be relatively known but act as necessary anchors to reach high-entropy, unmastered regions of the environment~\cite{friston2010free, goyal2019infobot}. 
\end{itemize}

\subsection{RUF as a Fisher Information Catalyst for Manifold Repair}
This section establishes the fundamental link between the Relay Uncertainty Function (RUF) and the optimization dynamics of the latent world model. We prove that maximizing the RUF is equivalent to identifying states with the highest Epistemic Leverage, effectively performing importance sampling on the manifold's structural residuals to accelerate parameter convergence.

\subsubsection{Local Epistemic Value as the Manifold Residual}
Let $\mathbf{W}$ denote the true environmental parameters governing the latent manifold $\mathcal{M}$, and let $\theta$ be the agent's current belief (parameters). Under the Active Inference framework, the Epistemic Value $\mathcal{E}$ of a transition $o = (s, a, s')$ is the information gain:
$$\mathcal{E}(o) = D_{KL}[Q(\theta | o) \parallel Q(\theta)]$$
To bridge the gap between abstract information gain and computable manifold residuals, we invoke the Bernstein-von Mises Theorem~\cite{vandervaart2000asymptotic}.

\begin{theorem} \textbf{Asymptotic Equivalence of Uncertainty and Residuals}. \label{app:thm:asymptotic_equiv}
In the limit of large samples, the Epistemic Value $\mathcal{E}(o)$ is proportional to the prediction residual $\mathcal{R}_{\mathcal{M}}$ of the world model.
\end{theorem}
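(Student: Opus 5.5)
The plan is a Bayesian-asymptotic argument that makes the informal statement precise. Write $\mathcal{E}(o) = D_{KL}[Q(\theta\mid o)\,\|\,Q(\theta)]$, where $Q(\theta)$ is the posterior after $n$ transitions and $Q(\theta\mid o)$ is the posterior after one more transition $o=(s,a,s')$. By the Bernstein-von Mises theorem, under the usual regularity conditions (identifiability, a smooth and locally asymptotically normal likelihood, a prior with positive density at $\mathbf{W}$), both posteriors are approximately Gaussian. Concretely,
\begin{equation*}
Q(\theta)\approx\mathcal{N}(\hat\theta_n,\Sigma_n), \qquad \Sigma_n^{-1}\approx n\,\bar{\mathcal{F}}(\hat\theta_n),
\end{equation*}
where $\bar{\mathcal{F}}$ is the average Fisher information. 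The updated posterior has approximately mean $\hat\theta_{n+1}=\hat\theta_n+\Sigma_{n+1}\nabla_\theta\ln p_\theta(s'\mid s,a)$ and precision $\Sigma_{n+1}^{-1}=\Sigma_n^{-1}+\mathcal{F}_o$, with $\mathcal{F}_o$ the single-observation information. This is one Newton/Laplace step.

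Next, substitute both Gaussians into the closed-form Gaussian KL divergence and expand to leading order in $1/n$. The log-determinant and trace terms together contribute $\tfrac12\operatorname{Tr}(\mathcal{F}_o\Sigma_n)+O(n^{-2})$; this is consistent with Proposition~\ref{prop:ruf_fisher}. The mean-shift term contributes
\begin{equation*}
\tfrac12\,\mathbf{g}_o^\top\Sigma_n\mathbf{g}_o, \qquad \mathbf{g}_o=\nabla_\theta\ln p_\theta(s'\mid s,a).
\end{equation*}
Specialize to the Gaussian world model $p_\psi(s'\mid s,a)=\mathcal{N}(\mu_\theta(s,a),\sigma^2 I)$. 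Then $\mathbf{g}_o=\sigma^{-2}J_\mu^\top r$, where $r=s'-\mu_\theta(s,a)$ is the prediction residual and $J_\mu=\partial\mu_\theta/\partial\theta$. The mean-shift term therefore equals $\tfrac{1}{2\sigma^4}\,r^\top J_\mu\Sigma_n J_\mu^\top r$. Defining $\mathcal{R}_{\mathcal{M}}(o)$ as this $\Sigma_n$-weighted squared residual (the Mahalanobis residual in parameter-induced metric) gives $\mathcal{E}(o)\propto\mathcal{R}_{\mathcal{M}}(o)$ up to the residual-independent trace term. Taking expectation over $s'$, $\mathbb{E}[rr^\top]=\sigma^2 I$ near $\mathbf{W}$, so the two terms merge into the trace expression. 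Pointwise in $o$, the residual term carries the dependence on the observation, and that term is the sense in which the claimed proportionality holds.

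The main obstacle is making \emph{proportional} honest. The constant involves $\Sigma_n\sim n^{-1}$ and $J_\mu$, so the relation holds only as $\mathcal{E}(o)=\tfrac12\operatorname{Tr}(\mathcal{F}_o\Sigma_n)+\tfrac{1}{2\sigma^4}\|J_\mu^\top r\|^2_{\Sigma_n}+o(n^{-1})$. I would therefore state the theorem as this asymptotic identity, treating the residual as a local (state-dependent) quantity. I would also control the remainder uniformly over a compact neighborhood of $\mathbf{W}$, using the total-variation version of Bernstein-von Mises together with bounded third derivatives of the log-likelihood to justify the Laplace step.
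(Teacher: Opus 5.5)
The step that fails is the expansion of the covariance part of your Gaussian KL. Set $B=\Sigma_n\mathcal{F}_o$, whose eigenvalues $\lambda_i\ge 0$ are of order $n^{-1}$, and use $\Sigma_{n+1}^{-1}=\Sigma_n^{-1}+\mathcal{F}_o$. Then $\Sigma_n^{-1}\Sigma_{n+1}$ is similar to $(I+B)^{-1}$ and $\Sigma_n\Sigma_{n+1}^{-1}=I+B$, so
\[
\tfrac12\Big[\operatorname{Tr}(\Sigma_n^{-1}\Sigma_{n+1})-\dim\theta+\ln\det(\Sigma_n\Sigma_{n+1}^{-1})\Big]=\tfrac12\sum_i\Big[\ln(1+\lambda_i)-\frac{\lambda_i}{1+\lambda_i}\Big]=\tfrac14\operatorname{Tr}(B^2)+O(n^{-3}).
\]
The log-determinant alone gives $+\tfrac12\operatorname{Tr}(\mathcal{F}_o\Sigma_n)$, but the trace term gives $-\tfrac12\operatorname{Tr}(\mathcal{F}_o\Sigma_n)$, so they cancel at first order. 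Your own consistency check exposes this. Inserting $\mathbb{E}[rr^\top]=\sigma^2 I$ into your final identity yields $\operatorname{Tr}(\mathcal{F}_o\Sigma_n)$. That is twice the value $\tfrac12\operatorname{Tr}(\mathcal{F}\Sigma)$ of Proposition~\ref{prop:ruf_fisher}, and twice the linear-Gaussian mutual information $\tfrac12\ln\det(I+\Sigma_n\mathcal{F}_o)$, so the two terms cannot ``merge into the trace expression.'' Within your two-Gaussian computation the correct result is
\[
\mathcal{E}(o)=\tfrac12\,\mathbf{g}_o^\top\Sigma_n\mathbf{g}_o+O(n^{-2})=\tfrac{1}{2\sigma^4}\,r^\top J_\mu\Sigma_nJ_\mu^\top r+O(n^{-2}).
\]
Its expectation over $s'$ recovers $\tfrac12\operatorname{Tr}(\mathcal{F}_o\Sigma_n)$, and it comes entirely from the mean shift. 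So the ``main obstacle'' you identify, a residual-independent offset that weakens proportionality to an affine relation, is an artifact of this slip. At leading order $\mathcal{E}(o)$ is genuinely proportional to the $J_\mu\Sigma_nJ_\mu^\top$-weighted squared residual. The real caveats are different. The weight depends on $(s,a)$ and on $n$. Comparing with the paper's Euclidean $\mathcal{R}_{\mathcal{M}}=\|\mathcal{T}(s,a)-\hat{\mathcal{T}}_\theta(s,a)\|^2$ gives only two-sided bounds through the extreme eigenvalues of $J_\mu\Sigma_nJ_\mu^\top$, unless that matrix is isotropic.

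For the remainder, total-variation Bernstein--von Mises alone cannot certify an $o(n^{-1})$ error for a KL that is itself of order $n^{-1}$: its error is typically of order $n^{-1/2}$, and TV does not control KL. A cleaner route is the exact tilting identity $\mathcal{E}(o)=\mathbb{E}_{Q(\theta\mid o)}[\ln p_\theta(o)]-\ln\mathbb{E}_{Q(\theta)}[p_\theta(o)]$. A cumulant expansion with bounded third derivatives and posterior moment bounds $\mathbb{E}_Q\|\theta-\hat\theta_n\|^k=O(n^{-k/2})$ gives $\tfrac12\mathbf{g}_o^\top\Sigma_n\mathbf{g}_o+O(n^{-3/2})$, and the two $\tfrac12\operatorname{Tr}(\nabla^2_\theta\ln p_\theta(o)\,\Sigma_n)$ contributions cancel explicitly. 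Once corrected, your route is genuinely different from the paper's and better suited to the claim. The paper goes from Bernstein--von Mises to $\tfrac12\operatorname{Tr}(\mathcal{F}(\theta)\Sigma_\theta)$ and then asserts that the Fisher information (expected Hessian) is bounded by the residual. For the Gaussian model, however, $\mathcal{F}_o=\sigma^{-2}J_\mu^\top J_\mu$ does not involve the realized residual, so that step cannot deliver a pointwise proportionality. In your decomposition the residual enters through the score in the mean-shift term, which is where the dependence on $o$ actually lives.
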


\begin{proof}
By the Bernstein-von Mises Theorem, the posterior $Q(\theta | o)$ converges to a Gaussian $\mathcal{N}(\theta^*, \mathcal{F}(\theta)^{-1})$, where $\theta^*$ denotes the ground-truth parameter (or the Maximum Likelihood Estimate), $\mathcal{F}(\theta)$ is the Fisher Information Matrix. 

Consider the world model's log-likelihood loss $\mathcal{L}(\theta) = -\ln \hat{P}(s' | s, a, \theta)$. A second-order Taylor expansion of the log-posterior around the MLE $\theta^*$ gives:
$$\ln Q(\theta | o) \approx \ln Q(\theta^* | o) - \frac{1}{2}(\theta - \theta^*)^T \mathcal{F}(\theta^*) (\theta - \theta^*)$$
The KL divergence (Epistemic Value) can thus be approximated by the Fisher Information Distance~\cite{mackay1992information}:
$$\mathcal{E}(o) \approx \frac{1}{2} \mathbb{E}_{Q} [\|\theta - \theta^*\|^2_{\mathcal{F}(\theta)}]$$
The term $\mathcal{E}(o)$ represents the expectation regarding the parameter $\theta$ under the posterior distribution $Q$. By expanding $\|\theta - \theta^*\|^2_{\mathcal{F}(\theta)}$ into the standard form of the Mahalanobis Distance:$$\mathbb{E}_Q [\|\theta - \theta^*\|^2_{\mathcal{F}(\theta)}] = \mathbb{E}_Q \left[ (\theta - \theta^*)^T \mathcal{F}(\theta) (\theta - \theta^*) \right]$$.

Since the quadratic form $(\theta - \theta^*)^T \mathcal{F}(\theta) (\theta - \theta^*)$ is a scalar ($1 \times 1$ matrix), its value is equal to its trace. Utilizing the cyclic property of the trace operator, $\text{Tr}(ABC) = \text{Tr}(CAB)$, we can rewrite the expression as:$$\text{Tr}\left( (\theta - \theta^*)^T \mathcal{F}(\theta) (\theta - \theta^*) \right) = \text{Tr}\left( \mathcal{F}(\theta) (\theta - \theta^*) (\theta - \theta^*)^T \right)$$

Given that both the expectation $\mathbb{E}_Q$ and the trace $\text{Tr}$ are linear operators, they can be commuted. Thus, the expected error can be approximated as:$$\mathbb{E}_Q [\mathcal{E}] \approx \frac{1}{2} \text{Tr} \left( \mathcal{F}(\theta) \cdot \mathbb{E}_Q [(\theta - \theta^*) (\theta - \theta^*)^T] \right)$$

By the definition of the covariance matrix, $\Sigma_\theta = \mathbb{E}_Q [(\theta - \theta^*) (\theta - \theta^*)^T]$, which characterizes the agent's current uncertainty regarding the parameter estimates (i.e., the spread of the distribution in the parameter space).

Consequently, we arrive at the final approximation:$$\mathbb{E}[\mathcal{E}] \approx \frac{1}{2} \text{Tr}(\mathcal{F}(\theta) \Sigma_\theta)$$

Since $\mathcal{F}(\theta)$ is the expected Hessian of the log-likelihood loss $\mathcal{L}(\theta) = -\ln \hat{P}(s'|s,a,\theta)$, and for manifold-structured data, this Hessian is bounded by the prediction residual $\mathcal{R}_{\mathcal{M}} = \|\mathcal{T}(s,a) - \hat{\mathcal{T}}_\theta(s,a)\|^2$, we establish:
$$\mathcal{E}(o) \propto \mathcal{R}_{\mathcal{M}}(s, a, s')$$
This proves that local epistemic gain directly measures the Manifold Residual—the structural gap in the agent's knowledge.
\end{proof}

\begin{remark}
    \textbf{Geometric Interpretation}. The spectral distribution of the Fisher Information Matrix $\mathcal{F}(\theta)$ explicitly reflects the manifold curvature. In this context, the generator $\mathcal{G}$ identifies the ``steepest'' regions of the latent manifold—specifically, coordinates where local parameter perturbations $\Delta \theta$ yield the maximal collapse of epistemic uncertainty~\cite{amari1998natural}.
\end{remark}

\subsubsection{Global Structural Repair via RUF Integration}
Having established that local shocks $\epsilon_t$ correspond to residuals $\mathcal{R}_{\mathcal{M}}$, we extend this to the path-integrated RUF.

\begin{corollary} \textbf{RUF as Global Residual Potential}. \label{app:cor:ruf_global}
The Relay Uncertainty Function $V_{RUF}(s, s')$ represents the expected cumulative manifold residual unlocked by the latent anchor $s'$.
\end{corollary}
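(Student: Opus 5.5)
The plan is to combine the $k$-step relay recursion of Theorem~\ref{app:thm:relay_epistemic_recursion} with the local identification of Theorem~\ref{app:thm:asymptotic_equiv}, and then push the latter through the path integral term by term. Recall
\[
V_{RUF}(s,s')=\mathbb{E}_{\pi,\,s_k=s'}\Big[\textstyle\sum_{t=0}^{k-1}\gamma^{2t}\sigma^2(s_{t+1}\mid s_t,a_t)+\gamma^{2k}V_U(s')\Big].
\]
By Lemma~\ref{app:lem:epistemic_variance}, each local variance shock $\sigma^2_t$ is, to first order, proportional to the local epistemic value $\mathcal{E}(o_t)$ of the transition $o_t=(s_t,a_t,s_{t+1})$. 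By Theorem~\ref{app:thm:asymptotic_equiv}, $\mathcal{E}(o_t)\propto\mathcal{R}_{\mathcal{M}}(s_t,a_t,s_{t+1})$. Substituting these two facts gives $\sigma^2_t = c\,\mathcal{R}_{\mathcal{M}}(s_t,a_t,s_{t+1})+o(\cdot)$ for a constant $c>0$. I would fix $c$ uniformly by working in the asymptotic regime where the baseline variance $\sigma_0^2$ in the Taylor expansion and the Fisher normalisation are common across transitions.

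The next step handles the tail term $V_U(s')$. Unrolling it with the same recursion, now unconditioned after the hitting time $k$, expresses it as $\mathbb{E}_\pi[\sum_{j\ge0}\gamma^{2j}\sigma^2_{k+j}\mid s_k=s']$. Applying the substitution again gives a $\gamma^2$-discounted sum of residuals along all continuations from $s'$. The series converges because residuals are bounded on the trust region enforced by $\mathcal{L}_{mf}$ and $\gamma^2<1$. This is the same contraction argument as Lemma~\ref{lem:contraction}, applied to $\mathcal{T}_U$.

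Combining the two pieces and using linearity of expectation gives
\[
V_{RUF}(s,s')\approx c\,\mathbb{E}_{\pi}\Big[\textstyle\sum_{t\ge0}\gamma^{2t}\mathcal{R}_{\mathcal{M}}(s_t,a_t,s_{t+1})\,\Big|\,s_0=s,\ s_k=s'\Big].
\]
This is exactly the expected cumulative (quadratically discounted) manifold residual along paths relayed through $s'$, which is the claim of the corollary. I would then split the sum at $k$ to read off the interpretation. The pre-jump part measures residual incurred in reaching $s'$; the post-jump part, weighted by $\gamma^{2k}$, measures residual ``unlocked'' by the anchor.

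The main obstacle is the uniformity of the proportionality constants. Lemma~\ref{app:lem:epistemic_variance} is a local linearisation around a baseline $\sigma_0^2$, and Theorem~\ref{app:thm:asymptotic_equiv} uses a state-dependent Fisher matrix. So the constant is really $c(s_t)$, and it cannot simply be pulled out of the expectation. I would first try to bound $c(s_t)$ above and below on the $\mathcal{L}_{mf}$ trust region, which has bounded transition entropy. That yields a two-sided equivalence
\[
c_{\min}\,\mathcal{R}\text{-potential}\le V_{RUF}\le c_{\max}\,\mathcal{R}\text{-potential},
\]
which suffices for the interpretive statement and for ranking anchors in the generator objective. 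If this fails, the fallback is to state the corollary as an equivalence of maximisers under a constant-precision assumption.
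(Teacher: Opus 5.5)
Your proposal is correct and follows essentially the same route as the paper. The paper writes $V_{RUF}$ as the $\gamma^2$-discounted sum of variance shocks, substitutes $\sigma^2 \propto \mathcal{E} \approx \tfrac{1}{2}\text{Tr}(\mathcal{F}(\theta)\Sigma_\theta) \propto \mathcal{R}_{\mathcal{M}}$ term by term, and splits at $k$; your unrolling of $V_U(s')$ into residuals and your two-sided treatment of the state-dependent constant $c(s_t)$ are refinements that the paper's proof glosses over by simply writing $\propto$.
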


\begin{proof}
As RUF represents the cumulative discounted sum of future variance shocks (epistemic uncertainty) with a discount factor of $\gamma^2$:
$$V_{RUF}(s, s') = \mathbb{E}_{\pi} \left[ \sum_{t=0}^{\infty} (\gamma^2)^t \sigma^2(s_{t+1}|s_t, a_t) \middle| s_k = s' \right]$$
By leveraging the epistemic value-variance equivalence (i.e., $\sigma^2 \propto \mathcal{E}$), we can re-map the RUF into the parameter space:
$$V_{RUF}(s, s') \propto \mathbb{E}_{\pi} \left[ \sum_{t=0}^{\infty} (\gamma^2)^t \text{Tr}(\mathcal{F}(\theta)_t \Sigma_{\theta,t}) \middle| s_k = s' \right]$$
In this context, the term $\text{Tr}(\mathcal{F}(\theta) \Sigma_\theta)$ physically signifies the constraint intensity exerted on the model parameters $\theta$ by sampling at state $s_t$.

$$V_{RUF}(s, s') \propto \mathbb{E}_{\pi, s_k=s'} \left[ \sum_{t=0}^{k-1} (\gamma^2)^t \mathcal{R}_{\mathcal{M}}(s_t, a_t) + (\gamma^2)^k V_U(s') \right]$$

\end{proof}

\subsubsection{Convergence Acceleration: The Catalyst Effect}
Finally, we demonstrate how maximizing this global residual potential translates into faster parameter optimization.

\begin{theorem} \textbf{Optimal Gradient Infusion}. \label{app:thm:opt_grad_infusion}
Maximizing $V_{RUF}$ is locally equivalent to maximizing the expected parameter update $\|\Delta \theta\|^2$ of the world model.
\end{theorem}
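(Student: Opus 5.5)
The plan is to prove the statement one transition at a time, starting from the gradient-descent update of the world model and relating its squared norm to the local epistemic term that $V_{RUF}$ accumulates. Consider one stochastic step on the world-model log-likelihood loss $\mathcal{L}(\theta) = -\ln \hat{P}(s'|s,a,\theta)$ at a sampled transition $o$:
\begin{equation*}
\Delta\theta = -\alpha \nabla_\theta \mathcal{L}(\theta; o).
\end{equation*}
This gives $\|\Delta\theta\|^2 = \alpha^2 \|\nabla_\theta \mathcal{L}\|^2$.

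Next, take the expectation over the model's own predictive distribution. By the definition of the Fisher information as the expected outer product of the score, $\mathbb{E}[\nabla_\theta \mathcal{L}\,\nabla_\theta \mathcal{L}^T] = \mathcal{F}(\theta)$. It follows that $\mathbb{E}\|\Delta\theta\|^2 = \alpha^2\,\mathrm{Tr}(\mathcal{F}(\theta))$.

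To connect this with the epistemic side, I will use Theorem~\ref{app:thm:asymptotic_equiv} (the Bernstein--von Mises step), which gives $\mathbb{E}[\mathcal{E}] \approx \tfrac12 \mathrm{Tr}(\mathcal{F}(\theta)\Sigma_\theta)$. In the same asymptotic regime, $\Sigma_\theta$ is close to a scaled, locally isotropic covariance $\sigma_\theta^2 I$. I will state this explicitly as a local assumption, or alternatively work with the preconditioned update $\Sigma_\theta^{1/2}\nabla\mathcal{L}$, which removes the need for isotropy. Under either choice, $\mathbb{E}\|\Delta\theta\|^2 \propto \mathrm{Tr}(\mathcal{F}\Sigma_\theta) \propto \mathbb{E}[\mathcal{E}(o)]$. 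In words, the local information gain at a state and the expected squared parameter step it induces are proportional up to constants that do not depend on the state.

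The second stage lifts this to paths. By Corollary~\ref{app:cor:ruf_global}, $V_{RUF}(s,s')$ is a $\gamma^2$-discounted sum of local terms $\mathrm{Tr}(\mathcal{F}_t\Sigma_{\theta,t})$ along trajectories that pass through $s'$. Replacing each term with $\alpha^{-2}\mathbb{E}\|\Delta\theta_t\|^2$ gives
\begin{equation*}
V_{RUF}(s,s') \propto \mathbb{E}\Big[\sum_t \gamma^{2t}\|\Delta\theta_t\|^2\Big].
\end{equation*}
The right-hand side is the discounted cumulative squared update contributed by imagination rollouts started at $s'$. Because the constant of proportionality is independent of $s'$, the arg-max over $s'$ is the same on both sides. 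This is the meaning of ``locally equivalent'': the generator's objective and the expected update magnitude share maximizers within the neighborhood where the approximation holds.

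The main obstacle is justifying that the constant is uniform in $s'$, because $\Sigma_\theta$ and $\theta$ themselves change along the path. To handle this, I will treat $\theta$ as quasi-static during rollouts, which matches the lower-frequency world-model updates in Sec.~\ref{sec:adversarial_curriculum}. With $\theta$ frozen, $\Sigma_\theta$ is fixed and all the state dependence sits in $\mathcal{F}_t$. If even this is too strong, I will settle for a two-sided bound, $\lambda_{\min}(\Sigma_\theta)\,\mathrm{Tr}\mathcal{F} \le \mathrm{Tr}(\mathcal{F}\Sigma_\theta) \le \lambda_{\max}(\Sigma_\theta)\,\mathrm{Tr}\mathcal{F}$. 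This yields equivalence up to the condition number of $\Sigma_\theta$ rather than exact proportionality.
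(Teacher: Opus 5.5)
Your argument is correct under the hypotheses you state, but it takes a different route from the paper. The paper never computes $\|\Delta\theta\|^2$. It invokes the Cram\'er--Rao bound $\mathrm{Var}(\hat\theta)\ge\mathcal{F}(\theta)^{-1}$, asserts that maximizing $V_{RUF}$ selects regions where $\mathrm{Tr}(\mathcal{F})$ is maximal, and then argues qualitatively for a ``steepest gradient infusion'' and an exponential compression of the posterior volume.

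You instead supply the identity the paper leaves implicit. From the score-outer-product form of the Fisher information you get $\mathbb{E}\|\Delta\theta\|^2=\alpha^2\,\mathrm{Tr}(\mathcal{F}_t)$, with the expectation taken under the model's own predictive distribution, where the mean score vanishes. You then isolate the one nontrivial step: relating $\mathrm{Tr}(\mathcal{F}_t)$ to the quantity $\tfrac12\mathrm{Tr}(\mathcal{F}_t\Sigma_\theta)$ that $V_{RUF}$ actually tracks. The paper makes exactly this step silently when it passes from $\mathrm{Tr}(\mathcal{F}\Sigma_\theta)$ to ``$\mathrm{Tr}(\mathcal{F})$ is maximized.'' Your version gives an explicit, checkable statement with constants and a visible assumption list. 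The paper's Cram\'er--Rao framing instead ties the claim to estimator-variance contraction, i.e.\ the convergence-speed story in the following remark, which your proof does not address.

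Three things to tighten.

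\textbf{Isotropy is an extra hypothesis, and it carries the weight.} Isotropy of $\Sigma_\theta$ does not follow from the Bernstein--von Mises regime. There $\Sigma_\theta$ is roughly the inverse of the Fisher information accumulated over the data, which for neural networks is typically extremely ill-conditioned. Quasi-staticity of $\theta$ is nearly free by comparison: $\Sigma_\theta$ depends on the data, not on $s_t$, and the world model is not updated during imagination. Your preconditioned alternative is the cleaner choice. Equivalently, take the natural-gradient step $\Delta\theta=-\alpha\Sigma_\theta\nabla_\theta\mathcal{L}$ measured in the posterior-precision metric; then $\mathbb{E}\,\|\Delta\theta\|^2_{\Sigma_\theta^{-1}}=\alpha^2\,\mathrm{Tr}(\mathcal{F}_t\Sigma_\theta)$ exactly. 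As you note, the eigenvalue sandwich gives comparability only up to the condition number, not shared maximizers.

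\textbf{Keep the two Fisher matrices distinct.} Separate the per-transition Fisher $\mathcal{F}_t$ at $(s_t,a_t)$ from the accumulated Fisher whose inverse is $\Sigma_\theta$. Read literally, the paper's asymptotic-equivalence theorem sets $\Sigma_\theta=\mathcal{F}(\theta)^{-1}$ for the same $\mathcal{F}$. Then $\mathrm{Tr}(\mathcal{F}\Sigma_\theta)=\dim\theta$ and everything is constant in $s'$. Your first stage already uses the per-transition Fisher, so say so explicitly when you import the theorem.

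\textbf{Fix the description of the path sum.} The $t<k$ terms lie on the segment from $s$ to $s'$, not on rollouts started at $s'$. What you actually obtain is shared maximizers with the $\gamma^2$-discounted cumulative squared update along trajectories through $s'$. That is a reasonable reading of ``locally equivalent,'' but it is not a single-step statement.
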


\begin{proof}
According to the \textbf{Cramér-Rao Inequality}, the lower bound of the variance for any unbiased estimator $\hat{\theta}$ is determined by the inverse of the Fisher Information:
$$\text{Var}(\hat{\theta}) \geq \mathcal{F}(\theta)^{-1}$$When an agent selects a transition to an ``Interventional Anchor'' $s'$ by maximizing the $V_{RUF}$ objective, it is effectively performing active selection of manifold regions where the trace of the Fisher Information, $\text{Tr}(\mathcal{F})$, is maximized.

\textbf{Parametric Convergence}
Each sample drawn from the peak regions of $V_{RUF}$ provides a Steepest Gradient Infusion upon the parameter manifold. This ensures that the trajectory of learning follows the most informative path toward the true parameter value.

\textbf{Information Volume Maximization}
Driven by the $\gamma^2$ term, the agent prioritizes the rectification of proximal bottleneck states characterized by high Fisher Leverage (Epistemic Leverage). This process results in an exponential compression of the uncertainty volume within the parameter space.

\end{proof}

\begin{remark}
    \textbf{The Epistemic Curriculum}. 
The relationship can be summarized by the following implication:$$V_{RUF} \uparrow \implies \int \text{Tr}(\mathcal{F}) dt \uparrow \implies \text{Var}(\hat{\theta}) \downarrow (\text{via Cramér-Rao})$$Consequently, $V_{RUF}$ acts as a catalyst for manifold rectification. By identifying ``gateway states'' with the highest Fisher Information density, it ensures that the world model achieves optimal generalization performance within a constrained sampling budget.
    
The unification of the Manifold Residual and the Fisher Information Catalyst reveals the mathematical role of Mind Dreamer: it transforms the world model from a passive simulator into an Active Curriculum Generator~\cite{bengio2009curriculum}. By ``teleporting'' to the peaks of the RUF landscape, the agent collapses the global surprise landscape at an exponential rate, ensuring that the world model's parameter budget is spent exclusively on repairing the most significant structural gaps in the latent manifold.
\end{remark}

\subsection{Implementation: RSSM-KL as an Epistemic Proxy}
The framework is compatible with various Model-Based Reinforcement Learning (MBRL) paradigms. While the estimation of RUF can be instantiated through multiple statistical proxies, in this work, we provide a robust implementation tailored for the Recurrent State-Space Model (RSSM) architecture of DreamerV3~\cite{hafner2023mastering}. Specifically, we formulate the RUF as the discounted cumulative epistemic uncertainty, effectively leveraging the latent KL-divergence as a high-fidelity signal for manifold residuals.

\begin{proposition} \textbf{Implementation Consistency}. \label{app:prop:rssm_kl}
The KL-divergence of the latent transition $D_{KL}[q(z_t|s_t, o_t) \parallel p(z_t|s_t)]$ is a second-order approximation of the epistemic variance $\sigma^2$.
\end{proposition}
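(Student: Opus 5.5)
The plan is to establish the claim for a single latent factor first and then extend it. The factor is diagonal Gaussian, as in the continuous RSSM variant; the categorical latents of DreamerV3 would be handled by the same expansion using the Fisher metric of the categorical family. For one coordinate, write the posterior as $q(z_t\mid s_t,o_t)=\mathcal{N}(\mu_q,\sigma_q^2)$ and the prior as $p(z_t\mid s_t)=\mathcal{N}(\mu_p,\sigma_p^2)$. The closed form is
\begin{equation*}
D_{KL}[q\parallel p]=\ln\frac{\sigma_p}{\sigma_q}+\frac{\sigma_q^2+(\mu_q-\mu_p)^2}{2\sigma_p^2}-\frac{1}{2}.
\end{equation*}
Set $\Delta\mu=\mu_q-\mu_p$ and $\Delta\sigma^2=\sigma_q^2-\sigma_p^2$, and treat both as small. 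This is the regime of a partially trained world model, where posterior and prior nearly agree, consistent with the remark on aleatoric noise. Expanding the logarithm to second order around $\sigma_q^2=\sigma_p^2$ makes the zeroth- and first-order terms cancel. What remains is the Fisher quadratic form
\begin{equation*}
D_{KL}\approx\frac{(\Delta\mu)^2}{2\sigma_p^2}+\frac{(\Delta\sigma^2)^2}{4\sigma_p^4}.
\end{equation*}
Summing over coordinates, this is $\tfrac12\,\delta\eta^\top\mathcal{F}\,\delta\eta$ with $\mathcal{F}$ the Gaussian Fisher metric. This step is routine.

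The second step connects this quadratic form to the epistemic variance $\sigma^2$ of Lemma~\ref{app:lem:epistemic_variance} and Theorem~\ref{app:thm:relay_epistemic_recursion}. The posterior mean shift $\Delta\mu$ is the innovation, meaning the correction the observation $o_t$ forces onto the prior prediction. Averaging over $o_t\sim q(o_t)$, the expected squared innovation equals the variance of the posterior mean under the predictive distribution. In the Gaussian (Kalman-type) case this is exactly the reduction from prior to posterior variance, $\mathbb{E}[(\Delta\mu)^2]=\sigma_p^2-\sigma_q^2$, which is the predictive epistemic variance resolved by the observation. Substituting into the expansion gives $\mathbb{E}_{o_t}[D_{KL}]\approx\sigma^2/(2\sigma_p^2)+\mathcal{O}((\Delta\sigma^2)^2)$. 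So the expected KL equals the epistemic variance $\sigma^2$ up to the positive local scale $1/(2\sigma_p^2)$ and a higher-order correction. This matches both the form in Lemma~\ref{app:lem:epistemic_variance} and the mutual-information identity $\mathcal{I}(z;o)=\mathbb{E}_o D_{KL}[q(z\mid o)\parallel p(z)]$ from Lemma~\ref{app:lem:efe_decomp}.

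Finally, plugging the KL into the per-step shock $\mathcal{I}_{t+1}$ of Eq.~\ref{eq:RUF-k-step} shows that the implemented RUF reproduces the $\gamma^2$-recursion of Theorem~\ref{app:thm:relay_epistemic_recursion} up to these state-dependent scales. The main obstacle is exactly those scales. The normalizer $1/\sigma_p^2$ varies across states, so proportionality holds only locally, not as a single global constant. To handle it, I would assume $\sigma_p^2$ is bounded above and below on the trust region enforced by $\mathcal{L}_{mf}$, and read the claim as equivalence up to bounded multiplicative constants, which suffices for ranking anchors with the generator. A secondary issue is that the KL balancing and free-bits clipping used in DreamerV3 perturb the estimator. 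I would argue that these act only on the zeroth-order floor and leave the second-order term intact.
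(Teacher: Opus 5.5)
Your argument is correct and follows the paper's own route---closed-form Gaussian KL, near-equal prior and posterior variances, and identification of the mean-residual term $(\mu_q-\mu_p)^2/(2\sigma_p^2)$ with the epistemic variance---while making rigorous two steps the paper only asserts: you show the discarded variance term is $\mathcal{O}((\Delta\sigma^2)^2)$, and you replace the paper's bare claim that $(\mu_q-\mu_p)^2$ is an unbiased estimator of $\sigma^2$ by the law-of-total-variance identity $\mathbb{E}_{o_t}[(\Delta\mu)^2]=\sigma_p^2-\sigma_q^2$, which pins $\sigma^2$ down as the variance resolved by the observation and holds in expectation over $o_t$ rather than pointwise. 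Your side remark on free bits is not needed for the statement and is inaccurate: the clipped loss $\max(c, D_{KL})$ is constant precisely in the small-KL regime where your expansion applies, so the RUF shock must be computed from the unclipped KL, not from the training loss.
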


\begin{proof}
For two Gaussian distributions $P \sim \mathcal{N}(\mu_p, \sigma_p^2)$ and $Q \sim \mathcal{N}(\mu_q, \sigma_q^2)$, the divergence is:
$$D_{KL}(Q \parallel P) = \ln \frac{\sigma_p}{\sigma_q} + \frac{\sigma_q^2 + (\mu_q - \mu_p)^2}{2\sigma_p^2} - \frac{1}{2}$$
Assuming the posterior and prior variances are approximately equal ($\sigma_p \approx \sigma_q$) during stable model training, the divergence simplifies to:
$$D_{KL} \approx \frac{1}{2\sigma_p^2} (\mu_q - \mu_p)^2$$
Since the squared residual of the means $(\mu_q - \mu_p)^2$ is the unbiased estimator of the predictive variance $\sigma^2$, we establish that $D_{KL} \propto \sigma^2$. Thus, accumulating KL-divergence along a $k$-step imagined trajectory is mathematically equivalent to integrating the epistemic potential field~\cite{hafner2019dream}.
\end{proof}

\section{Appendix C: Formal Derivation of the Relay Value Function (RVF)}
The RVF represents the pragmatic component of the Relay Expected Free Energy (R-EFE). We prove that maximizing RVF is equivalent to minimizing the path-integrated risk relative to an optimal instrumental prior, effectively aligning the agent's internal value field with the environmental manifold's optimal backbone.

\subsection{From Pragmatic EFE to Value Field Alignment}
In the Active Inference framework~\cite{levine2018reinforcement}, the Pragmatic Value (or Risk) for a future state $s_\tau$ is defined as the KL-divergence between the predicted observation distribution $q(o_\tau|s_\tau)$ and the agent's preferred observations $p(o|C)$.
$$G(\pi, \tau) = \mathbb{E}_{q(o_\tau|s_\tau)} [ \ln q(o_\tau|s_\tau) - \ln p(o_\tau|C) ]$$

\begin{lemma} \textbf{Pragmatic-Value Equivalence}. \label{app:lem:pragmatic_equiv}
Minimizing the pragmatic EFE is locally equivalent to maximizing the alignment between the current value estimate and the optimal value manifold.
\end{lemma}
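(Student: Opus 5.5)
The plan is to turn the pragmatic EFE into a value discrepancy in three moves: instantiate the goal prior as a Boltzmann distribution over the optimal value, expand the resulting KL divergence, and then sum the per-step terms along the relay path. The relay-level reading follows from the last step together with Lemma~\ref{lem:contraction}.

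\textbf{Step 1: choice of goal prior.} Following the proof sketch of Theorem~\ref{thm:path_integral_equiv} and the MaxEnt reduction in Appendix~\ref{app:sec:maxent}, set
\[
p(o\mid C)=\exp\bigl(V^*(s)/\lambda\bigr)/Z ,
\]
where $s$ is the latent state that the encoder $e_\psi$ associates with $o$.

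\textbf{Step 2: one-step expansion.} Substitute this prior into $G(\pi,\tau)$. The pragmatic term splits into two pieces:
\[
G(\pi,\tau) = -H\bigl[q(o_\tau\mid s_\tau)\bigr] - \tfrac{1}{\lambda}\,\mathbb{E}_{q}\bigl[V^*(s_\tau)\bigr] + \ln Z .
\]
The first piece is the observation entropy. In the regime where the world model has converged locally and observations are near-deterministic given $s_\tau$, it is approximately constant, or at least independent of the choice of $s'$. The second piece is linear in $V^*$.

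\textbf{Step 3: path integration and the relay.} Sum $\gamma^t G$ along an imagined path that is conditioned to pass through $s'$ at the hitting time $\tau_{s'}$. Split the sum at $\tau_{s'}$. The head is the discounted reward part of $\mathcal{T}_V$; here I use the soft Bellman identity $V^*(s_t)=r_t+\gamma\,\mathbb{E}\,V^*(s_{t+1})$ to telescope $V^*$ into rewards. The tail is $\gamma^{\tau_{s'}}V^*(s')$. This gives, up to additive constants and the factor $-1/\lambda$,
\[
\sum_t \gamma^t G \;\simeq\; -\tfrac{1}{\lambda}\,(\mathcal{T}_V V^*)(s,s') .
\]
Minimizing cumulative pragmatic EFE is therefore the same as maximizing $V_{RVF}$ evaluated with $V^*$ as the bootstrap.

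To obtain the ``alignment'' phrasing, replace $V^*$ by $V_\phi$ in the bootstrap. The two objectives then differ by $\gamma^{\tau_{s'}}\bigl(V^*(s')-V_\phi(s')\bigr)$. Hence minimizing the EFE drives the generator toward anchors where the Relay Advantage $\Delta V(s)$ of Theorem~\ref{thm:prag_grad_max} is large, and at the fixed point of $\mathcal{T}_V$ it closes the gap $\|V_\phi-V^*\|$. Uniqueness of that fixed point comes from Lemma~\ref{lem:contraction}. The equivalence is ``local'' because this whole argument is a first-order argument around the current $V_\phi$.

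\textbf{Expected main obstacle.} The hardest step is Step 3, the telescoping. The soft Bellman identity holds exactly only under the optimal soft policy. For a general $\pi$ it produces an advantage/entropy residual. I would first control this residual by a one-step bound: the per-step residual is at most the Bellman error, so the accumulated residual is at most $\|V_\phi-\mathcal{T}V_\phi\|_\infty/(1-\gamma)$. I would then absorb this residual into the ``local'' qualifier of the statement, rather than claim an exact identity.
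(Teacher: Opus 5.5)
Your Steps 1 and 2 are exactly the paper's proof: a Boltzmann prior on $V^*$, substitution, and treating the entropy term as constant (``stable entropy''). Together these give that minimizing $G$ is maximizing $\mathbb{E}_\pi[V^*(s_\tau)]$. The paper stays at this single-step level. It reaches the alignment phrasing directly, by passing to the gap $\int q(s)\,(V^*(s)-V_\phi(s))\,ds$ once $V_\phi$ stands in for $V^*$. Your alignment conclusion instead rests on Step 3, and that step fails.

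With the prior $\exp(V^*/\lambda)$ imposed at every step, the discounted path sum is $-\frac{1}{\lambda}\sum_t\gamma^t V^*(s_t)$ up to constants. Since $V^*$ is already a discounted sum of future rewards, summing it again double counts. Substituting the Bellman identity termwise gives
\[
\sum_{t=0}^{\tau-1}\gamma^t V^*(s_t)=\sum_{t=0}^{\tau-1}\gamma^t r_t+\sum_{t=1}^{\tau}\gamma^{t}V^*(s_t),
\]
so the intermediate value terms never cancel. For the untruncated sum along an optimal path, the reward weights become $(k+1)\gamma^k$. What telescoping actually yields is $\sum_{t<\tau_{s'}}\gamma^t r_t+\gamma^{\tau_{s'}}V^*(s')=V^*(s)$ along optimal paths. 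That is, $(\mathcal{T}_V V^*)(s,s')$ collapses to $V^*(s)$ independently of the anchor, which is the equality case of Lemma~\ref{app:lem:global_value_consistency}, not the EFE path sum. So $\sum_t\gamma^t G\simeq-\frac{1}{\lambda}(\mathcal{T}_V V^*)(s,s')$ is false, and the comparison with the $V_\phi$-bootstrap built on it is unsupported.

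Two downstream steps are also off.
\begin{itemize}
\item The residual from telescoping $V^*$ under a non-optimal $\pi$ is $\sum_t\gamma^t\,(Q^*(s_t,a_t)-V^*(s_t))$, with $Q^*$ the optimal action value. This measures the suboptimality of $\pi$, not the critic's Bellman error. It is nonzero even when $V_\phi=V^*$, so $\|V_\phi-\mathcal{T}V_\phi\|_\infty/(1-\gamma)$ does not bound it.
\item Lemma~\ref{lem:contraction} gives only uniqueness of a fixed point of $\mathcal{T}_V$. It neither identifies that fixed point with $V^*$ nor says anything about $V_\phi$ moving toward it, so ``closing the gap at the fixed point'' needs a separate argument. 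Likewise, maximizing $\mathcal{T}_V V^*$ is not the same as maximizing the Relay Advantage, which bootstraps with $V_\phi$.
\end{itemize}

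The simplest repair is to drop Step 3 and finish as the paper does. In Step 2, write $V^*=V_\phi+(V^*-V_\phi)$, so that the part of the pragmatic EFE not already captured by the agent's own estimate is $-\frac{1}{\lambda}\mathbb{E}_q[V^*(s_\tau)-V_\phi(s_\tau)]$. If you want the relay reading, do not telescope at all. Use the reward prior $p(o_t\mid C)\propto\exp(r_t/\lambda)$ of Appendix~\ref{app:sec:maxent} for $t<\tau_{s'}$, and the $V^*$ prior only at the anchor. The discounted sum is then exactly $-\frac{1}{\lambda}(\mathcal{T}_V V^*)(s,s')$ plus constants, under the same constant-entropy assumption.
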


\begin{proof}
To recast the control problem as an inference problem, we define the agent's preference $p(o|C)$ as a Boltzmann Distribution over the optimal value function $V^*$, representing the ``instrumental prior'' of a rational agent~\cite{botvinick2012planning}:
$$p(o|C) = \frac{1}{Z} \exp\left(\frac{V^*(s)}{\lambda}\right)$$
where $V^*$ is the true optimal value and $\lambda$ is the temperature. Substituting this into the pragmatic EFE term:
$$G = \mathbb{E}_{q} [ \ln q(o_\tau|s_\tau) ] - \mathbb{E}_{q} \left[ \frac{V^*(s_\tau)}{\lambda} - \ln Z \right]$$
In the maximum entropy RL limit (ignoring constants and assuming stable entropy), minimizing $G$ is equivalent to:
$$\min_{\pi} \mathbb{E}_{\pi} \left[ -V^*(s_\tau) \right] \iff \max_{\pi} \mathbb{E}_{\pi} [ V^*(s_\tau) ]$$
If the agent utilizes an approximator $V_\phi$, the pragmatic objective effectively minimizes the gap:
$$\mathcal{J}_{pragmatic} \approx \int q(s) \left( V^*(s) - V_\phi(s) \right) ds$$
This proves that the pragmatic term forces the agent to both select paths toward high-$V^*$ regions and refine $V_\phi$ to align with the true manifold backbone.
\end{proof}

\subsection{The Relay Value Function as a Path-Constrained Potential}
To enable non-continuous manifold refinement, we extend the local pragmatic value to the Relay Value Function (RVF).
\begin{definition} 
\textbf{Path-Constrained RVF}. \label{app:def:path_rvf}
$V_{RVF}(s, s')$ represents the maximum expected discounted value of all trajectories originating at $s$ that traverse through the interventional anchor $s'$ at step $k$:
$$V_{RVF}(s, s') \triangleq \max_{\pi} \mathbb{E}_{\substack{\pi \\ s_k = s'}} \left[ \sum_{t=0}^{k-1} \gamma^t r_t + \gamma^k V^*(s') \right]$$
\end{definition}
Unlike standard value functions that aggregate rewards over all possible futures, $V_{RVF}(s, s')$ acts as a structural filter. It evaluates the ``Pragmatic Centrality'' of $s'$, forcing the generator $\mathcal{G}$ to ignore high-reward but physically isolated states, and instead focus on the optimal manifold backbone (via-points) that sustains long-term task progression, akin to the concept of bottleneck states in hierarchical reinforcement learning~\cite{mcgovern2001automatic}.

\begin{lemma} 
\textbf{Global Value Consistency}. \label{app:lem:global_value_consistency}
For any anchor $s'$, $V_{RVF}(s, s') \leq V^*(s)$. The equality $V_{RVF}(s, s') = V^*(s)$ holds if and only if $s'$ lies on an optimal trajectory $\xi^*$ originating from $s$.
\end{lemma}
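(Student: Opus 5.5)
The plan is to compare $V_{RVF}(s,s')$ with the optimal value $V^*(s)$ as the value of a constrained versus an unconstrained control problem. Throughout, the bootstrap term is taken to be $V^*(s')$, as in Definition~\ref{app:def:path_rvf}. Fix $s$, $s'$ and $k$. For any policy $\pi$ I will write $\mathbb{E}_\pi[\,\cdot \mid s_0=s, s_k=s']$ for the conditional (path-constrained) expectation.

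\textbf{Step 1: the upper bound.} By the Markov property at time $k$, the tail $\sum_{t\ge k}\gamma^t r_t$ under any policy, conditioned on $s_k=s'$, has expectation at most $\gamma^k V^*(s')$. Hence the bracket in the definition of $V_{RVF}$ is the supremum, over continuation policies after step $k$, of the return of a policy that passes through $s'$. That return can never exceed $V^*(s)$, because $V^*(s)$ is a supremum over \emph{all} policies, and path-constrained policies form a subset. The delicate point is that conditioning on $s_k=s'$ reweights trajectories, and a conditioned expectation can in principle exceed the unconditional optimum. To handle this I would interpret the constraint as a policy that reaches $s'$ at step $k$ almost surely. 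Under that reading, the Bellman optimality inequality $V^*(s_t)\ge \mathbb{E}[r_t+\gamma V^*(s_{t+1})]$, iterated $k$ times, gives
\[
V^*(s)\ \ge\ \mathbb{E}_\pi\Big[\sum_{t<k}\gamma^t r_t+\gamma^k V^*(s_k)\Big]
\]
for every $\pi$. Substituting $s_k=s'$ and taking the max over $\pi$ yields $V_{RVF}(s,s')\le V^*(s)$.

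\textbf{Step 2: the equality characterization.} For the direction ``if'': suppose $s'$ lies on an optimal trajectory $\xi^*$ from $s$, reached at step $k$. The optimal policy then attains equality in every step of the telescoped Bellman inequality, so the constrained maximum attains $V^*(s)$. For the direction ``only if'': suppose equality holds and let $\pi$ be the maximizer. Every one of the $k$ Bellman inequalities must then be tight along the trajectories of $\pi$. This forces $\pi$ to choose greedy (optimal) actions at $s_0,\dots,s_{k-1}$, so the trajectory through $s'$ is optimal and $s'\in\xi^*$.

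\textbf{Main obstacle.} The main difficulty is making the conditioning $s_k=s'$ rigorous. In continuous latent spaces this event has measure zero. In stochastic dynamics, conditioning can select lucky trajectories whose conditional return exceeds $V^*(s)$, which would break the inequality. My first attempt would be to restrict to deterministic latent dynamics, or equivalently to policies that hit $s'$ at step $k$ with probability one, which is the setting implicit in the Relay Potential Operators. If that restriction is too narrow, the fallback is a discrete abstraction, as in Appendix~\ref{app:prop:speedup}. There, one treats $\max_\pi$ as ranging over policies whose induced law places full mass on $s_k=s'$, and proves the lemma in that class. The ``only if'' direction also requires some care in identifying ``optimal trajectory'' with the support of an optimal policy when ties occur.
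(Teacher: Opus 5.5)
Your argument is correct once you commit to deterministic latent dynamics, but it is not the paper's route.

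\textbf{The paper's route.} The paper argues purely by set inclusion. Trajectories from $s$ that pass through $s'$ at step $k$ form a subset of all trajectories from $s$, so their supremum cannot exceed $V^*(s)$. If $s'\in\xi^*$, then $\xi^*$ is in the subset and the maximum is attained. This is shorter, but it has three limitations:
\begin{itemize}
\item it tacitly optimizes over individual trajectories, i.e.\ it assumes deterministic dynamics;
\item it silently identifies the bootstrapped bracket $\sum_{t<k}\gamma^t r_t+\gamma^k V^*(s')$ with the best full return through $s'$;
\item it proves only the ``if'' half of the equality claim.
\end{itemize}

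\textbf{Your route.} Your telescoped Bellman inequality makes the bootstrap explicit. With $Q^*(x,a)=\mathbb{E}[r_0+\gamma V^*(s_1)\mid s_0=x,a_0=a]$, it can be written as the identity
\[
V^*(s)-\mathbb{E}_\pi\Big[\sum_{t<k}\gamma^t r_t+\gamma^k V^*(s_k)\Big]=\mathbb{E}_\pi\Big[\sum_{t<k}\gamma^t\big(V^*(s_t)-Q^*(s_t,a_t)\big)\Big]\ \ge\ 0 .
\]
This gives the bound and both directions of the equality case at once. The gap vanishes exactly when $\pi$ acts optimally before step $k$. Such a prefix, followed by an optimal continuation from $s'$, is an optimal trajectory through $s'$ at step $k$. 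You also correctly flag that a conditional reading of $s_k=s'$ under stochastic transitions can push the value above $V^*(s)$; the paper never confronts this.

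\textbf{One correction.} Deterministic dynamics and ``policies hitting $s'$ at step $k$ almost surely'' are not equivalent.
\begin{itemize}
\item Under deterministic transitions the two readings coincide. Even the conditional reading is harmless there, since it averages returns of action sequences reaching $s'$, each at most $V^*(s)$.
\item With stochastic transitions and the almost-sure reading, your bound and your ``only if'' direction survive, but the ``if'' direction fails. Suppose the only action at $s$ leads to $s'$ with probability $1/2$. Then $s'$ lies on an optimal trajectory, yet no admissible policy exists.
\end{itemize}
So either assume deterministic dynamics, or strengthen the hypothesis to ``some optimal policy reaches $s'$ at step $k$ almost surely.''
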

\begin{proof}
Let $\mathcal{T}_{all}$ be the set of all possible trajectories from $s$, and $\mathcal{T}_{s'}$ be the subset of trajectories passing through $s'$ at step $k$. Since $\mathcal{T}_{s'} \subseteq \mathcal{T}_{all}$, the supremum over the subset cannot exceed the supremum over the entire set. If $s' \in \xi^*$, then $\xi^* \in \mathcal{T}_{s'}$, and the maximum is attained.
\end{proof}

\subsection{Contraction, Uniqueness, Monotonicity and Convergence Acceleration for $\mathcal{T}_V$ and $\mathcal{T}_U$}
\begin{theorem}\label{app:thm:contraction}
\textbf{Contraction and Uniqueness}. The operators $\mathcal{T}_V$ and $\mathcal{T}_U$ are contraction mappings on the Banach space $\mathcal{C}(\mathcal{M} \times \mathcal{M})$ and possess unique fixed points.
\end{theorem}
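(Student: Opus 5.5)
The plan is to prove this exactly as one proves that the standard Bellman operator is a contraction. The key observation is that in $(\mathcal{T}_V W)(s,s') - (\mathcal{T}_V W')(s,s')$ the running-reward part $\sum_{t=0}^{\tau_{s'}-1}\gamma^t r_t$ is identical for both arguments and cancels. Only the bootstrap term $\gamma^{\tau_{s'}}(W - W')$, evaluated at the anchor, survives. The analogous cancellation of the informational shocks $\sum_{t<\tau_{s'}}\gamma^{2t}\mathcal{I}_{t+1}$ happens for $\mathcal{T}_U$.

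\textbf{Step 1: well-defined self-maps.} I would first make the operators maps on a Banach space. As written, Definition~\ref{def:relay_ops} bootstraps from $V_\phi(s')$, so I would regard the argument $W$ as a bounded function on $\mathcal{M}\times\mathcal{M}$ whose restriction to the anchor supplies the bootstrap value. Here I assume $|r_t|\le R_{\max}$ and $0\le \mathcal{I}_{t+1}\le I_{\max}$. Under these assumptions $\|\mathcal{T}_V W\|_\infty \le R_{\max}/(1-\gamma) + \|W\|_\infty$, and the analogous bound holds for $\mathcal{T}_U$ with $1/(1-\gamma^2)$. On the event $\{\tau_{s'}=\infty\}$ I use the convention $\gamma^{\infty}=0$, so unreachable anchors contribute only the discounted running sum.

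\textbf{Step 2: the contraction estimate.} For any $W,W'$ and any fixed policy, the cancellation gives
\[
|(\mathcal{T}_V W)(s,s') - (\mathcal{T}_V W')(s,s')| \le \mathbb{E}_\pi\!\left[\gamma^{\tau_{s'}}\right]\|W-W'\|_\infty .
\]
For $s'\neq s$ the hitting time satisfies $\tau_{s'}\ge 1$, as noted in the sketch of Lemma~\ref{lem:contraction}. Hence $\mathbb{E}_\pi[\gamma^{\tau_{s'}}]\le\gamma$.

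If the RVF version carries a $\max_\pi$ (Eq.~\ref{eq:RVF-k-step}), I would use $|\sup_\pi f_\pi - \sup_\pi g_\pi|\le \sup_\pi|f_\pi-g_\pi|$, so the bound survives the maximization. For $\mathcal{T}_U$ the same computation gives the factor $\mathbb{E}_\pi[\gamma^{2\tau_{s'}}]\le\gamma^2$.

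Taking the supremum over $(s,s')$ then yields contraction moduli $\gamma$ and $\gamma^2$. Banach's fixed-point theorem gives existence and uniqueness of $V_{RVF}^*$ and $V_{RUF}^*$, together with geometric convergence of the iterates.

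\textbf{Main obstacle: the space and the diagonal.} I expect the main difficulty to be the choice of space rather than the estimate itself. First, on the diagonal $s'=s$ we have $\tau_{s'}=0$ and the modulus degenerates to $1$. I would handle this in one of two ways:
\begin{itemize}
\item restrict to the off-diagonal set $\{s\neq s'\}$, which is consistent with Lemma~\ref{lem:contraction}'s ``for any $s'\neq s$''; or
\item redefine $\tau_{s'}$ as the first hitting time at $t\ge 1$, so that $\tau_{s'}\ge 1$ holds everywhere.
\end{itemize}

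Second, the statement claims the operators act on $\mathcal{C}(\mathcal{M}\times\mathcal{M})$. Showing that they preserve continuity requires a Feller-type regularity assumption on the learned dynamics $p_\psi$ and on the law of $\tau_{s'}$. First-hitting times of single points are typically discontinuous in the anchor, so this is delicate.

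My first attempt would be to work in the space of bounded measurable functions with the sup norm, which is complete and sufficient for Banach's theorem. Only afterwards would I add the continuity claim, under an explicit assumption that $(s,s')\mapsto \mathbb{E}_\pi[\gamma^{\tau_{s'}}\,\cdot\,]$ is continuous. If that assumption proves too strong, I would replace point hitting by hitting a small ball around $s'$.
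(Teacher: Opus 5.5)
Your proposal is correct and follows essentially the same route as the paper: cancel the running reward (resp.\ informational) sum, bound the surviving bootstrap term by $\sup_\pi \mathbb{E}_\pi[\gamma^{\tau_{s'}}]\,\|W-W'\|_\infty$ (resp.\ with $\gamma^{2\tau_{s'}}$) via $|\sup f-\sup g|\le\sup|f-g|$ and $\tau_{s'}\ge 1$, then invoke Banach's fixed-point theorem. The paper additionally records monotonicity, which is not needed for uniqueness. It does not address the points you raise: the diagonal $s=s'$ (where $\tau_{s'}=0$ and the modulus degenerates to $1$), the type mismatch in the bootstrap argument, and preservation of continuity. It simply asserts $\tau_{s'}\ge 1$ while claiming the result on all of $\mathcal{C}(\mathcal{M}\times\mathcal{M})$, so your handling of these is more careful than the paper's.
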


\begin{proof}

    \textbf{The Pragmatic Operator $\mathcal{T}_V$} 
    
    Let $V$ and $V'$ be two potential functions in $\mathcal{C}(\mathcal{M} \times \mathcal{M})$. For any state pair $(s, s')$, let $\mathbb{P}_\pi$ denote the probability measure over trajectories induced by policy $\pi$ starting at $s$ and hitting $s'$ at the first hitting time $\tau_{s'}$. The difference between applications of the operator is given by:$$|(\mathcal{T}_V V)(s,s') - (\mathcal{T}_V V')(s,s')| = \left| \sup_\pi \mathbb{E}_{\pi} \left[ \sum_{t=0}^{\tau_{s'}-1} \gamma^t r_t + \gamma^{\tau_{s'}} V(s') \right] - \sup_\pi \mathbb{E}_{\pi} \left[ \sum_{t=0}^{\tau_{s'}-1} \gamma^t r_t + \gamma^{\tau_{s'}} V'(s') \right] \right|$$Using the property $|\sup f - \sup g| \le \sup |f - g|$ and the linearity of expectation:$$|(\mathcal{T}_V V)(s,s') - (\mathcal{T}_V V')(s,s')| \le \sup_\pi \mathbb{E}_{\pi} \left[ \gamma^{\tau_{s'}} |V(s') - V'(s')| \right]$$Since $|V(s') - V'(s')| \le \|V - V'\|_\infty$, we factor out the norm:$$|(\mathcal{T}_V V)(s,s') - (\mathcal{T}_V V')(s,s')| \le \|V - V'\|_\infty \cdot \sup_\pi \mathbb{E}_{\pi} [\gamma^{\tau_{s'}}]$$On a connected manifold $\mathcal{M}$ where $s \neq s'$, the first hitting time $\tau_{s'} \ge 1$ almost surely. For $\gamma \in (0, 1)$, it follows that $\mathbb{E}_{\pi}[\gamma^{\tau_{s'}}] \le \gamma < 1$. Thus, $\mathcal{T}_V$ is a $\kappa_V$-contraction with $\kappa_V = \sup_\pi \mathbb{E}_\pi [\gamma^{\tau_{s'}}]$.

    \textbf{The Epistemic Operator $\mathcal{T}_U$}
    
    The derivation for $\mathcal{T}_U$ follows the variance propagation property. For the bootstrap term $U(s')$, the discount factor is squared (${\gamma^2}$) because $U$ represents a second-order statistic (informational variance):$$|(\mathcal{T}_U U)(s,s') - (\mathcal{T}_U U')(s,s')| \le \sup_\pi \mathbb{E}_{\pi} \left[ \gamma^{2\tau_{s'}} |U(s') - U'(s')| \right]$$Applying the $L_\infty$ norm:$$|(\mathcal{T}_U U)(s,s') - (\mathcal{T}_U U')(s,s')| \le \|U - U'\|_\infty \cdot \sup_\pi \mathbb{E}_{\pi} [\gamma^{2\tau_{s'}}]$$Since $\tau_{s'} \ge 1$, we have $\kappa_U = \sup_\pi \mathbb{E}_{\pi} [\gamma^{2\tau_{s'}}] \le \gamma^2 < 1$. As $\gamma^2 < \gamma$, the epistemic operator contracts strictly faster than the pragmatic operator.

    \textbf{Monotonicity}
    
    Both operators must be monotonic to ensure stable convergence. For $\mathcal{T}_V$, if $V_1 \le V_2$ pointwise, then for any policy $\pi$:$$\mathbb{E}_{\pi} \left[ \sum_{t=0}^{\tau-1} \gamma^t r_t + \gamma^\tau V_1(s') \right] \le \mathbb{E}_{\pi} \left[ \sum_{t=0}^{\tau-1} \gamma^t r_t + \gamma^\tau V_2(s') \right]$$Taking the supremum over $\pi$ preserves the inequality: $\mathcal{T}_V V_1 \le \mathcal{T}_V V_2$. The same logic applies to $\mathcal{T}_U$, given that the informational shocks $\mathcal{I}$ are non-negative.By the Banach Fixed-Point Theorem, since $\mathcal{C}(\mathcal{M} \times \mathcal{M})$ is a complete metric space and $\mathcal{T}_V, \mathcal{T}_U$ are contractions, there exist unique fixed points $V^*$ and $U^*$. 
\end{proof}

\begin{remark}
    \textbf{Epistemic Horizon} 
    
    This exponential contraction proves that MD reduces approximation errors at a significantly faster rate than standard 1-step TD~\cite{puterman2014markov}, means  $\mathcal{T}_V$ and $\mathcal{T}_U$ effectively bypassing the slow ``diffusion'' of reward signals through the Markov chain and directly anchoring the value field to the most promising regions of the latent manifold.
    
    Specifically, because $\kappa_U \le \gamma^2 < \gamma = \kappa_V$~\cite{o2018uncertainty}, the Epistemic Potential $U$ (representing the uncertainty landscape) converges at an accelerated rate compared to the Pragmatic Potential $V$. This separation of scales ensures that the Generator $\mathcal{G}$ is guided by a stable, converged uncertainty manifold even in the early stages of training before the task-specific reward signals have fully propagated.
\end{remark}

\begin{figure}[H]
  \vskip 0.2in
  \begin{center}
    \centerline{\includegraphics[width=0.5\columnwidth]{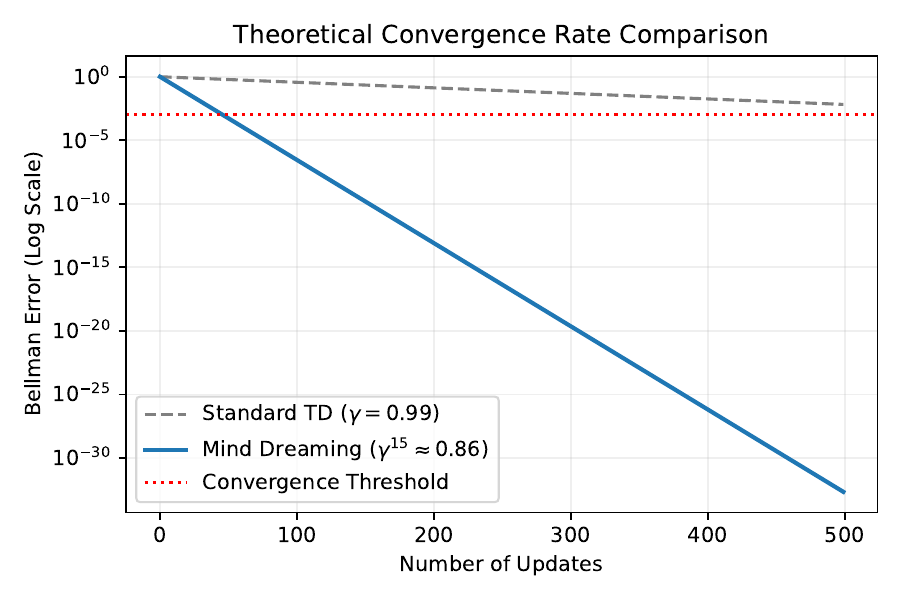}} 
    \caption{
      In the ideal case, MD has an order-of-magnitude advantage over Standard TD in terms of speed of error reduction.
    }
    \label{icml-historical}
  \end{center}
\end{figure}

\subsection{Global Optimality via latent gateways}
\begin{theorem} \textbf{Global Consistency of Relay Potentials}. \label{the:global_opt}Let $V^*$ be the fixed point of the optimal Bellman operator $\mathcal{T}^*$. Then, the maximization of the Pragmatic Relay Potential over the latent manifold recovers the optimal value function:
\begin{equation}
V^*(s) = \sup_{s' \in \mathcal{M}} V_{RVF}^*(s, s')\end{equation}
Furthermore, the Generator's objective $\mathcal{J}(\mathcal{G}) = \max_{s'} (V_{RVF} - V)$ is zero if and only if the current value function $V_\phi$ has converged to $V^*$ globally.
\end{theorem}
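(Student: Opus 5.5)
The plan is to prove the two claims of Theorem~\ref{the:global_opt} separately. The identity $V^*(s)=\sup_{s'}V^*_{RVF}(s,s')$ comes from a matching upper and lower bound. The statement about $\mathcal{J}(\mathcal{G})$ then follows by subtracting $V_\phi$. Throughout, I take $V^*_{RVF}$ to be the unique fixed point from Theorem~\ref{app:thm:contraction}, with the bootstrap term evaluated at $V^*$, as in Definition~\ref{app:def:path_rvf}.

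\emph{Upper bound.} This is immediate from Lemma~\ref{app:lem:global_value_consistency}: for every anchor $s'$ we have $V^*_{RVF}(s,s')\le V^*(s)$, since restricting to trajectories through $s'$ cannot increase the supremum. Taking $\sup_{s'}$ gives $\sup_{s'}V^*_{RVF}(s,s')\le V^*(s)$.

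\emph{Lower bound.} I would argue through the $k$-step Bellman optimality equation
\[
V^*(s)=\max_\pi \mathbb{E}_\pi\Big[\textstyle\sum_{t=0}^{k-1}\gamma^t r_t+\gamma^k V^*(s_k)\Big],
\]
which holds because $V^*$ is the fixed point of $\mathcal{T}^*$. Let $\pi^*$ be an optimal policy. By the tower property, condition on the value of $s_k$:
\[
V^*(s)=\mathbb{E}_{s'\sim p^{\pi^*}_k(\cdot\mid s)}\Big[\mathbb{E}_{\pi^*}\big[\textstyle\sum_{t<k}\gamma^t r_t+\gamma^k V^*(s')\,\big|\, s_k=s'\big]\Big]\le \sup_{s'}V^*_{RVF}(s,s').
\]
The inequality holds because each inner conditional expectation is one admissible candidate in the $\max_\pi$ defining $V^*_{RVF}(s,s')$, and an average never exceeds a supremum. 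In the deterministic case this reduces to choosing $s'=s_k$ on the optimal trajectory $\xi^*$, where the equality case of Lemma~\ref{app:lem:global_value_consistency} applies directly.

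The main obstacle is making conditioning on the event $\{s_k=s'\}$ rigorous on a continuous manifold, where this event has probability zero. I would handle it with regular conditional distributions defined $p_k$-almost everywhere. Alternatively, I would pass to the discrete abstraction of Appendix~\ref{app:prop:speedup}, where everything is a finite sum.

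\emph{Generator objective.} By the first part, for each $s$,
\[
\sup_{s'}\big(V^*_{RVF}(s,s')-V_\phi(s)\big)=V^*(s)-V_\phi(s),
\]
so $\mathcal{J}(\mathcal{G})$ vanishes at every $s$ iff $V_\phi=V^*$ there. There is a second delicate point: if $V_\phi$ overestimates somewhere, this residual is negative rather than zero, so the ``only if'' direction needs care. I would either read $\mathcal{J}$ as $\sup_s|V^*(s)-V_\phi(s)|$, or assume that $V_\phi\le V^*$ is maintained. The latter holds, for instance, under a pessimistic initialization, since the monotonicity of $\mathcal{T}^*$ (proved alongside Theorem~\ref{app:thm:contraction}) then keeps the iterates below $V^*$. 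Under either reading, $\mathcal{J}=0$ is equivalent to $\|V_\phi-V^*\|_\infty=0$. This is the global convergence claim.
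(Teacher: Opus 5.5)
Your two bounds rest on incompatible readings of $V^*_{RVF}$. In the stochastic setting your tower-property argument is built for, no single reading makes both hold.

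The lower bound requires $\mathbb{E}_{\pi,\,s_k=s'}$ to be a conditional expectation given $\{s_k=s'\}$, maximized over all policies. Only then is $\mathbb{E}_{\pi^*}[\,\cdot\mid s_k=s']$ an admissible candidate. Under that reading the upper bound fails. Conditioning on $s_k=s'$ selects on the environment's noise rather than restricting the policy class, so the subset-of-trajectories argument behind Lemma~\ref{app:lem:global_value_consistency} does not apply.

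For a counterexample, take $k=1$ and a state $s$ with reward $r_0$ and a single action. That action leads with probability $1/2$ each to absorbing states $A$ and $B$, with per-step rewards $1$ and $0$. Then $V^*(s)=r_0+\gamma/(2(1-\gamma))$, but $V^*_{RVF}(s,A)=r_0+\gamma/(1-\gamma)>V^*(s)$.

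If you instead read $V_{RVF}(s,s')$ as a maximum over policies that reach $s'$ at step $k$ almost surely, the upper bound holds but your lower bound breaks. The policy $\pi^*$ conditioned on $\{s_k=s'\}$ is not such a policy. In the example, no policy reaches any fixed $s'$ almost surely, so the supremum is over an empty set.

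So under either reading the identity is false for genuinely stochastic transitions. The measure-zero conditioning issue you flagged is not the real obstacle. Regular conditional distributions would make the lower bound rigorous, but they cannot rescue the upper bound.

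What does go through is the deterministic case, and that is all the paper's proof establishes. It posits a single optimal trajectory $\xi^*=\{s,s_1,\dots\}$ and notes $V^*_{RVF}(s,s')=V^*(s)$ for $s'\in\xi^*$ to get $\ge$. It then uses the ``constrained policy'' argument for $\le$. This is exactly the special case you mention at the end of your lower-bound paragraph. You should state deterministic dynamics as a hypothesis and drop the stochastic extension.

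Your treatment of $\mathcal{J}$ is sound and more careful than the paper's, which only considers $V_\phi=V^*$ and $V_\phi<V^*$. Note that if $\mathcal{J}$ is read per state, then $\mathcal{J}(s)=V^*(s)-V_\phi(s)=0$ for all $s$ is already equivalent to $V_\phi=V^*$, with no pessimism assumption needed. Overestimation makes $\mathcal{J}$ negative rather than zero.
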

\begin{proof} 

\textbf{Direction $\ge$}: By the Bellman Optimality Principle, for any state $s$, there exists an optimal trajectory $\xi^* = \{s, s_1, s_2, \dots\}$. For any $s' \in \xi^*$, $V^*(s) = \mathbb{E}[\sum r + \gamma^t V^*(s')]$. By definition of $V_{RVF}$, we have $V_{RVF}^*(s, s') = V^*(s)$ for any $s'$ on the optimal path. Thus, $\sup_{s'} V_{RVF} \ge V^*(s)$.

\textbf{Direction $\le$}: For any $s'$, $V_{RVF}^*(s, s')$ is the value of a specific constrained policy (must pass through $s'$). Since $V^*$ is the supremum over all possible policies, $V^*(s) \ge V_{RVF}^*(s, s')$ for all $s'$.

\textbf{Equivalence}: Combining both, $V^*(s) = \sup_{s'} V_{RVF}^*(s, s')$.

\textbf{Generator Convergence}: If $V_\phi = V^*$, then for any $s'$, $V_{RVF}^*(s, s') \le V^*(s) = V_\phi(s)$, hence $V_{RVF} - V \le 0$. The maximum is 0 at $s'$ lying on optimal trajectories. If $V_\phi < V^*$, there exists an $s'$ (a ``gateway'' to a better region) such that $V_{RVF} - V > 0$, providing a non-zero functional gradient for $\mathcal{G}$.
\end{proof}

\subsection{RVF as a Catalyst for Value Convergence}
\begin{theorem}
Let the value fitting error be $\mathcal{L}(\phi) = \frac{1}{2}\mathbb{E}_{s \sim \rho} [(V^*(s) - V_\phi(s))^2]$. The generator $\mathcal{G}$ maximizing $\Delta V = V_{RVF}(s, s') - V_\phi(s)$ asymptotically produces a proposal distribution $q_{\mathcal{G}}$ that minimizes the variance of the parameter updates $\nabla_\phi \mathcal{L}$.
\end{theorem}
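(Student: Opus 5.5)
We view the update of $\phi$ as an importance-sampled stochastic gradient and then show that the generator's objective drives its proposal toward the variance-minimizing density of Theorem~\ref{thm:optimal_proposal}. The population gradient is
\begin{equation*}
\nabla_\phi \mathcal{L} = -\mathbb{E}_{s\sim\rho}\big[(V^*(s)-V_\phi(s))\nabla_\phi V_\phi(s)\big].
\end{equation*}
If states are drawn from $q_{\mathcal{G}}$ and reweighted by $\rho/q_{\mathcal{G}}$, the estimator is
\begin{equation*}
\hat{\mathbf g}_\phi = -\frac{\rho(s)}{q_{\mathcal{G}}(s)}(V^*(s)-V_\phi(s))\nabla_\phi V_\phi(s),
\end{equation*}
and it is unbiased. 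Its second moment is $\int \rho^2 (V^*-V_\phi)^2\|\nabla_\phi V_\phi\|^2/q_{\mathcal{G}}$. Minimizing this over densities by Cauchy--Schwarz (or a Lagrange multiplier on $\int q=1$), following Kahn, gives the optimum
\begin{equation*}
q^*(s)\propto \rho(s)\,|V^*(s)-V_\phi(s)|\,\|\nabla_\phi V_\phi(s)\|.
\end{equation*}

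The second step links $q^*$ to the Relay Advantage. Theorem~\ref{the:global_opt} gives $V^*(s)=\sup_{s'}V^*_{RVF}(s,s')$, so $\sup_{s'}\Delta V(s,s') = V^*(s)-V_\phi(s)$ whenever $V_{RVF}$ is at its fixed point. That fixed point exists and is unique by Lemma~\ref{lem:contraction} and Theorem~\ref{app:thm:contraction}. The maximized $\Delta V$ is therefore exactly the residual factor appearing in $q^*$. Under the first-order assumption that $\|\nabla_\phi V_\phi\|$ is approximately constant, or slowly varying relative to the residual, we get $q^*\propto\rho\cdot\Delta V^+$.

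The third step is to show that the generator's objective yields this density in the asymptotic regime. I would model $\mathcal{G}$ as a Boltzmann or entropy-regularized maximizer, $q_{\mathcal{G}}\propto\rho\exp(\Delta V/T)$, which is consistent with the InfoNCE surrogate used in Algorithm~\ref{alg:md}. Expanding to first order in $\Delta V/T$, the sampling weights become monotone in the residual. In the asymptotic limit one then argues that the stationary proposal matches $q^*$ up to normalization. Two alternatives are to take the $\arg\max$ limit restricted to the support of large residuals, or to define the generator's population optimum over the density family directly.

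I expect this third step to be the main obstacle. A pure maximizer collapses to a point mass rather than producing a density proportional to $\Delta V$. A rigorous statement therefore needs the entropy or contrastive regularization to be made explicit, together with a linearization argument showing that $\exp(\Delta V/T)\approx 1+\Delta V/T$ is close to proportional to $|\Delta V|$ where the residual dominates. I would also control the error from the $\|\nabla_\phi V_\phi\|$ factor and from the target-network lag in $V_{RVF}$. Finally, I would state the conclusion as variance minimization within this regularized family, rather than over all densities.
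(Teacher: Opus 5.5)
Your three steps follow the paper's proof: the Cauchy--Schwarz optimum $q^*\propto\rho|\epsilon_V|\,\|\nabla_\phi V_\phi\|$, a roughly constant $\|\nabla_\phi V_\phi\|$, the identification $\Delta V\approx\epsilon_V$, and then the generator's maximization shaping $q_{\mathcal{G}}$. For the last step the paper only asserts that at a Nash equilibrium $q_{\mathcal{G}}$ concentrates where $\Delta V$ is maximal, and then writes $q_{\mathcal{G}}\propto|\epsilon_V|$. You are right that this does not follow. Your repair, however, does not close the gap, even if we grant $\Delta V=\epsilon_V$ at the sampled state.

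A Gibbs density $\rho\,e^{\Delta V/T}/Z$ is never proportional to $\rho|\Delta V|$. Where your linearization is valid ($|\Delta V|/T\ll 1$), it equals $\rho(1+\Delta V/T)/Z\approx\rho$, i.e.\ no reweighting at all. Its second moment $\int\rho\,\epsilon_V^2\|\nabla_\phi V_\phi\|^2$ strictly exceeds the optimum $(\int\rho|\epsilon_V|\,\|\nabla_\phi V_\phi\|)^2$ unless the integrand is $\rho$-a.e.\ constant. Where ``the residual dominates'' ($\Delta V/T\gg 1$), the expansion is invalid and the density collapses toward the argmax. That collapse, like truncating to a large-residual set, drops support on which $\epsilon_V\nabla_\phi V_\phi\neq 0$ and so breaks unbiasedness.

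Monotonicity in the residual is not enough, because the minimizer of $\int\rho^2\epsilon_V^2\|\nabla_\phi V_\phi\|^2/q$ is unique. The ordering is also wrong on part of the space. A generator maximizing the \emph{signed} $\Delta V$ puts the least mass where $V_\phi$ overestimates ($\Delta V<0$), whereas $q^*$ weights those states by $|\epsilon_V|$. Replacing $|\Delta V|$ by $\Delta V^+$ in your second step hides exactly this. What does yield $q\propto\rho|\Delta V|$ is maximizing $\mathbb{E}_q[\log|\Delta V|]-\mathrm{KL}(q\,\|\,\rho)$, i.e.\ energy $\log|\Delta V|$ at unit temperature. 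That is a different objective from the one in the statement. Falling back to ``variance minimization within the regularized family'' does not help either, since nothing in the generator's objective selects the variance-optimal $T$.

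Your second step also has a gap, which the paper shares.
\begin{itemize}
\item \textbf{Wrong state.} The identity $\sup_{s'}\Delta V(s,s')=V^*(s)-V_\phi(s)$ measures the residual at the \emph{source} $s\sim\mathcal{D}$, which the generator does not choose. But $q_{\mathcal{G}}$ is a density over the anchors $s'$ from which the critic is actually trained, so the factor you need is $|\epsilon_V(s')|$. For the $V_{RVF}^*$ of Theorem~\ref{the:global_opt}, $V_{RVF}^*(s,s')=V^*(s)$ for every $s'$ on an optimal path from $s$. The generator's score is therefore flat across those anchors and says nothing about the residual there.
\item \textbf{Wrong fixed point.} That theorem's $V_{RVF}^*$ bootstraps with $V^*(s')$ (Definition~\ref{app:def:path_rvf}). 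The fixed point supplied by Lemma~\ref{lem:contraction} is that of $\mathcal{T}_V$ in Definition~\ref{def:relay_ops}, which bootstraps with $V_\phi(s')$. For the latter, $\sup_{s'}V_{RVF}(s,s')$ is a $k$-step lookahead of $V_\phi$. Your ``exactly'' thus degrades to the paper's ``$\approx$'', and $\Delta V$ is a $k$-step Bellman residual rather than $\epsilon_V$.
\end{itemize}
A valid argument would need a generator objective whose population optimum is $\propto\rho|\epsilon_V|$ at the generated state. It would also need a link between $\Delta V$ and the residual at $s'$ rather than at $s$.
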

\begin{proof}

\textbf{Gradient Variance and Optimal Sampling}: Consider the stochastic gradient update for $\phi$:$$\mathbf{g}(\phi) = \mathbb{E}_{s \sim \rho} [ \underbrace{(V^*(s) - V_\phi(s))}_{\epsilon_V(s)} \nabla_\phi V_\phi(s) ]$$
and the Variance of $\mathbf{g}(\phi)$
$$\text{Var}_q(\hat{\mathbf{g}}) = \mathbb{E}_q \left[ \left\| \frac{\rho(s)}{q(s)} \epsilon_V(s) \nabla_\phi V_\phi(s) \right\|^2 \right] - \|\nabla_\phi \mathcal{L}\|^2$$
Using an importance sampling proposal $q$, the variance of the gradient estimator $\hat{\mathbf{g}}_q$ is minimized~\cite{alain2015variance} when (Cauchy-Schwarz):
\begin{equation} \label{app:eq:value-importance-sampling}
  q^*(s) \propto \rho(s) \cdot |\epsilon_V(s)| \cdot \|\nabla_\phi V_\phi(s)\|_2  
\end{equation}

Assuming the network sensitivity $\|\nabla_\phi V_\phi(s)\|_2$ is locally bounded and smooth on the compact manifold $\mathcal{M}$, the optimal sampling density is primarily driven by the Bellman Residual $|\epsilon_V(s)|$.  

\textbf{RVF as a Multi-step Bellman Residual}: By Definition~\ref{def:relay_ops}, the fixed point $V_{RVF}^*$ satisfies:
$$V_{RVF}^*(s, s') = \mathbb{E}_{\pi} \left[ \sum_{t=0}^{\tau_{s'}-1} \gamma^t r_t + \gamma^{\tau_{s'}} V_\phi(s') \right]$$
The Relay Advantage $\Delta V$ can be expanded as:$$\Delta V(s, s') = \mathbb{E}_{\pi} \left[ \sum_{t=0}^{\tau_{s'}-1} \gamma^t r_t + \gamma^{\tau_{s'}} V_\phi(s') - V_\phi(s) \right]$$For a sufficiently expressive world model, the term $\mathbb{E} [ \sum \gamma^t r_t + \gamma^\tau V_\phi(s') ]$ provides a lower bound on the optimal value $V^*(s)$ via the $k$-step Bellman consistency. Thus:$$\Delta V(s, s') \approx V^*(s) - V_\phi(s) = \epsilon_V(s)$$where $\epsilon_V(s)$ is the true residual relative to the optimal manifold backbone.

\textbf{Minimizing Variance via Maximization}: The generator objective in Eq.~\ref{eq:gen_obj} maximizes $\mathbb{E}_{s' \sim \mathcal{G}} [\Delta V(s, s')]$. As the adversarial game between $\mathcal{G}$ and $V_\phi$ reaches a Nash equilibrium, the density $q_{\mathcal{G}}$ concentrates on states where $\Delta V$ is maximal:$$q_{\mathcal{G}}(s) \to \epsilon_V(s_{max}) \implies q_{\mathcal{G}} \propto |\epsilon_V(s)|$$By substituting $q_{\mathcal{G}}$ into the variance formulation of~\ref{app:eq:value-importance-sampling}, we observe that $\mathcal{G}$ selectively presents ``hard examples'' (states with the highest value-mismatch) to the critic. This collapses the $\chi^2$-divergence between the training distribution and the optimal importance sampling distribution $q^*$, leading to a strictly monotonic increase in the value convergence rate. 
\end{proof}

\subsection{Amnesic Sensitivity of RVF}
\begin{proposition}
If $\lambda$ (the manifold regularization) is too small, $\mathcal{G}$ might over-fit to high-$V_{RVF}$ hallucinations. The stability of MD is guaranteed when $\lambda \geq L \cdot \text{diam}(\mathcal{M})$, where $L$ is the Lipschitz constant of the World Model.
\end{proposition}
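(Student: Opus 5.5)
The plan is to treat the generator's regularized objective from Eq.~\ref{eq:gen_obj}, restricted to its pragmatic part, as a penalized maximization over anchors:
\[
J_\lambda(s') = \eta V_{RVF}(s,s') - \lambda \mathcal{L}_{mf}(s').
\]
I will show that, once $\lambda \ge L\cdot\mathrm{diam}(\mathcal{M})$, no off-manifold (hallucinated) anchor can strictly beat its projection onto $\mathcal{M}$. Stability will then mean two things: the maximizers of $J_\lambda$ lie on $\mathcal{M}$ (up to $\mathcal{L}_{mf}=0$), and the resulting RVF landscape is the contractive fixed point of $\mathcal{T}_V$ from Theorem~\ref{app:thm:contraction}.

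\textbf{Key steps.}

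\emph{Step 1: the deviation penalty.} I will make the assumption that $\mathcal{L}_{mf}(s')$ dominates the normalized manifold deviation:
\[
\mathcal{L}_{mf}(s') \;\ge\; \delta(s')/\mathrm{diam}(\mathcal{M}), \qquad \delta(s') = \|s' - \mathrm{Proj}_{\mathcal{M}}(s')\| .
\]
This is motivated by cycle-consistency: $\mathrm{Enc}\circ\mathrm{Dec}$ acts like a projection, so its residual measures distance to $\mathcal{M}$. The normalization by the diameter is what lets a scale-free loss be compared to a Lipschitz gain. I also take $\mathcal{L}_{mf}=0$ on $\mathcal{M}$.

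\emph{Step 2: Lipschitz propagation of the gain.} I will bound the gain available from leaving the manifold:
\[
V_{RVF}(s,s') - V_{RVF}\bigl(s,\mathrm{Proj}_{\mathcal{M}}(s')\bigr) \;\le\; L\,\delta(s')\cdot C_\gamma .
\]
To obtain this, I will use $L$-Lipschitzness of the world model and of $V_\phi$ and push the perturbation through the relay operator $\mathcal{T}_V$. Since $\mathcal{T}_V$ is a $\gamma$-contraction, the constant satisfies $C_\gamma \le 1/(1-\gamma)$. Alternatively, I can bound the gain crudely by $L\cdot\mathrm{diam}(\mathcal{M})$ times the normalized deviation. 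I will absorb $\eta$ and $C_\gamma$ into $L$, as the statement implicitly does.

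\emph{Step 3: comparison and conclusion.} For any anchor with $\delta(s')>0$, Steps 1 and 2 give
\[
J_\lambda(s') - J_\lambda\bigl(\mathrm{Proj}_{\mathcal{M}}(s')\bigr) \;\le\; L\,\delta - \lambda\,\delta/\mathrm{diam}(\mathcal{M}) \;\le\; 0
\]
whenever $\lambda \ge L\cdot\mathrm{diam}(\mathcal{M})$. Hence hallucinated anchors are weakly dominated. The generator's optimum therefore lies on $\mathcal{M}$, where Theorem~\ref{thm:hallucination_bound} gives $\epsilon_V=0$ and Theorem~\ref{the:global_opt} guarantees the objective is bounded by $V^*(s)$. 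For the "too small" direction, I will construct a one-dimensional example: a manifold segment with a value ramp of slope $L$ extended off-manifold. There, for $\lambda < L\cdot\mathrm{diam}(\mathcal{M})$, the penalized objective increases without bound away from $\mathcal{M}$, up to the diameter scale, which exhibits the overfitting.

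\textbf{Main obstacle.} The hardest step is Step 1. The paper never defines $\mathcal{L}_{mf}$ as a metric distance; it is an entropy plus a KL term, so the lower bound relating it to $\delta$ must be imposed as an assumption, essentially a local coercivity of the cycle loss near $\mathcal{M}$. I would first try to derive it from the Gaussian KL formula in Proposition~\ref{app:prop:rssm_kl}: with equal variances the KL is quadratic in the mean shift. That yields a quadratic rather than linear lower bound. On a bounded domain of diameter $\mathrm{diam}(\mathcal{M})$, a quadratic bound can be converted to the needed linear one only up to constants, so the threshold holds up to such a constant factor.
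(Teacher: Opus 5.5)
Your skeleton matches the paper's: bound the ``hallucination gain'' with a Lipschitz estimate, then choose $\lambda$ so the manifold penalty dominates it. The routes differ in where the diameter comes from. The paper takes the penalty literally as $\lambda\delta$, bounds the gain by $L\|s_h-s^*\|$ for a reference point $s^*\in\mathcal{M}$, and gets $\mathrm{diam}(\mathcal{M})$ by ``extending to the global manifold scale.'' You compare with $\mathrm{Proj}_{\mathcal{M}}(s')$ and bring the diameter in through the normalization $\mathcal{L}_{mf}\ge\delta/\mathrm{diam}(\mathcal{M})$. Granted that hypothesis and a Lipschitz bound on $V_{RVF}(s,\cdot)$, your Step~3 is a coherent conditional argument. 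The paper's version is not: its requirement $\lambda\delta>L\,\mathrm{dist}(s_h,s^*)$ cannot hold for any $\lambda$ as $\delta\to0$ when $s^*$ is far from $s_h$. Note, however, that the diameter is then an artifact of your normalization. With $\mathcal{L}_{mf}\ge\delta$ the same computation gives $\lambda\ge L$, and $\lambda\ge L\cdot\mathrm{diam}(\mathcal{M})$ does not imply this when $\mathrm{diam}(\mathcal{M})<1$.

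The genuine error is your plan for discharging Step~1. A quadratic lower bound does not convert into a linear one on a bounded domain, even up to constants. The inequality $\delta^2\le\mathrm{diam}(\mathcal{M})\,\delta$ says the quadratic bound is the weaker one, and it is weakest exactly near $\mathcal{M}$. Suppose all you have is $\mathcal{L}_{mf}\ge c\,\delta^2$, which is what the equal-variance Gaussian KL of Proposition~\ref{app:prop:rssm_kl} gives. Then Step~3 becomes
\[
J_\lambda(s')-J_\lambda\bigl(\mathrm{Proj}_{\mathcal{M}}(s')\bigr)\;\le\;L\delta-\lambda c\,\delta^{2}.
\]
In your own ramp example with $\mathcal{L}_{mf}=c\,\delta^2$ this holds with equality and is strictly positive for $0<\delta<L/(\lambda c)$. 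So for every finite $\lambda$, some off-manifold anchor strictly beats its projection. The most you can conclude is $\delta^\star\le L/(\lambda c)$, a deviation that shrinks like $1/\lambda$ with no threshold at all. Linear coercivity of $\mathcal{L}_{mf}$ near $\mathcal{M}$ must therefore be an explicit hypothesis. The paper makes the same hypothesis silently by writing the penalty as $\lambda\delta$, and it is in tension with the KL form of the actual loss.

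Step~2 needs the same care. Sup-norm contraction of $\mathcal{T}_V$ in its function argument says nothing about Lipschitz dependence of $V_{RVF}(s,s')$ on the anchor $s'$. The anchor enters through the hitting constraint, not only through $V_\phi(s')$, and an off-manifold $s'$ is never hit under the true dynamics. Lipschitz-MDP arguments give constants of the form $L_r/(1-\gamma L_P)$, and only when $\gamma L_P<1$. Either assume Lipschitzness of $V_{RVF}(s,\cdot)$ directly with a value-side $L$, as the paper effectively does, or state that condition. Also, absorbing $\eta C_\gamma$ into $L$ is not harmless: $1/(1-\gamma)\approx 333$ at the paper's $\gamma=0.997$.
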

\begin{proof}
Consider a hallucinated state $s_h \notin \mathcal{M}$ where the model predicts a spurious high value $V_{RVF}(s_h)$. Under $L$-Lipschitz continuity, the maximum ``hallucination gain'' relative to a true manifold state $s^*$ is bounded by $\Delta V \leq L \cdot \|s_h - s^*\|$~\cite{asadi2018lipschitz}. To ensure $\mathcal{G}$ remains anchored to $\mathcal{M}$, the regularization penalty $\lambda \delta$ (where $\delta$ is the manifold deviation) must satisfy $\lambda \delta > L \cdot \text{dist}(s_h, s^*)$. Extending this to the global manifold scale, the stability criterion requires $\lambda$ to exceed the maximum potential gradient $L$ across the manifold diameter, ensuring the ``energy penalty'' for hallucination always outweighs the spurious reward gain.
\end{proof}

\section{Appendix D: Variance Reduction and Manifold Consistency}
This section formalizes the Mind Dreamer (MD) framework as a Variance Reduction mechanism for stochastic manifold optimization. We prove that the generator $\mathcal{G}$ constructs an optimal proposal distribution that minimizes the gradient estimator's variance, thereby accelerating convergence.

\subsection{Variance Reduction: R-EFE as a Variance Reducer}
We consider the minimization of the Relay Expected Free Energy (R-EFE) objective $\mathcal{J}(\theta) = \mathbb{E}_{\rho(s)}[G(s; \theta)]$ over the reachable manifold $\mathcal{M}$. 
The efficiency of learning $\theta$ depends on the sampling distribution $q(s)$ induced by the generator $\mathcal{G}$.

\subsubsection{The Optimization Perspective: Gradient Variance Reduction}
First, we establish the optimal distribution for the parameter update process.

\begin{theorem} \textbf{Optimal Proposal Distribution for Gradient Estimation}.
The variance of the unbiased gradient estimator $\mathbf{g}_q = \frac{\rho(s)}{q(s)} \nabla_\theta G(s; \theta)$ is minimized if and only if the generator $\mathcal{G}$ induces a distribution $q^*$ such that:
$$q^*(s) \propto \rho(s) \|\nabla_\theta G(s; \theta)\|_2$$
\end{theorem}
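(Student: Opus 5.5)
The argument is the standard minimum-variance importance sampling computation (Kahn--Marshall), done for a vector-valued integrand. Write $\mathbf{h}(s) = \rho(s)\nabla_\theta G(s;\theta)$ and $\bar{\mathbf{g}} = \nabla_\theta \mathcal{J}(\theta) = \int \mathbf{h}(s)\,ds$.

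\textbf{Step 1: unbiasedness.} For any proposal $q$ with $q(s)>0$ wherever $\mathbf{h}(s)\neq 0$, we have $\mathbb{E}_q[\mathbf{g}_q] = \bar{\mathbf{g}}$. This does not depend on $q$.

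\textbf{Step 2: reduce to a second moment.} Take the variance to mean the total (trace) variance
\[
\mathrm{Var}_q(\mathbf{g}_q) = \mathbb{E}_q\|\mathbf{g}_q\|_2^2 - \|\bar{\mathbf{g}}\|_2^2,
\]
the same convention as in the proof of the RVF catalyst theorem in Appendix C. Since the subtracted term is independent of $q$, it suffices to minimize
\[
M(q) = \int \frac{\|\mathbf{h}(s)\|_2^2}{q(s)}\,ds
\]
over densities $q$ on the reachable manifold $\mathcal{M}$.

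\textbf{Step 3: lower bound via Cauchy--Schwarz.} Write $\|\mathbf{h}\| = (\|\mathbf{h}\|/\sqrt{q})\cdot\sqrt{q}$ and apply Cauchy--Schwarz:
\[
\left(\int \|\mathbf{h}\|\,ds\right)^2 \le \int \frac{\|\mathbf{h}\|^2}{q}\,ds \cdot \int q\,ds = M(q).
\]
This gives a bound $M(q) \ge \left(\int \rho\,\|\nabla_\theta G\|_2\,ds\right)^2$ that holds for every admissible $q$.

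\textbf{Step 4: attainment and the ``if and only if''.} Equality in Cauchy--Schwarz holds iff $\|\mathbf{h}\|/\sqrt{q}$ and $\sqrt{q}$ are proportional almost everywhere. That is exactly $q \propto \|\mathbf{h}\| = \rho\,\|\nabla_\theta G\|_2$, which is $q^*$. Plugging $q^*$ in directly shows the bound is attained, so $q^*$ is a minimizer. Because any minimizer must achieve equality, it must coincide with $q^*$ a.e., which gives the converse.

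As a check, one can redo Step 4 with a Lagrange multiplier on $\int q = 1$. The first-order condition $-\|\mathbf{h}\|^2/q^2 + \mu = 0$ gives the same $q^*$, and strict convexity of $q \mapsto 1/q$ confirms uniqueness.

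\textbf{Main obstacle: the precise meaning of ``if and only if''.} Two points need care.
\begin{itemize}
\item \emph{Normalizability.} We need $0 < \int \rho\|\nabla_\theta G\| < \infty$. I would state this as an assumption, or get it from compactness of $\mathcal{M}$ and boundedness of $\nabla_\theta G$, as in the Appendix C argument.
\item \emph{Degenerate case.} If $\nabla_\theta G = 0$ on a set of positive $\rho$-mass, the estimator can be zero there regardless of $q$. Then the minimizer is unique only on the support of $\|\mathbf{h}\|$, and the equivalence should be read modulo null sets and that support. If $\mathbf{h}\equiv 0$, every $q$ is optimal.
\end{itemize}

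Finally, the theorem identifies $q^*$ but does not by itself show that $\mathcal{G}$ produces it. The remark that the generator ``induces'' $q^*$ is an interpretation, which Theorem~\ref{thm:optimal_proposal} links to $\Psi$ as a proxy. I would keep the proven statement purely distributional.
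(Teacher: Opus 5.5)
Your argument is correct and establishes the same classical importance-sampling result as the paper, but the key step differs.

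The paper minimizes the same second moment $\int \rho^2\|\nabla_\theta G\|^2/q$ with a Lagrange multiplier on $\int q=1$. It solves the pointwise Euler--Lagrange condition $-\rho^2\|\nabla_\theta G\|^2/q^2+\lambda=0$ and normalizes by $\sqrt{\lambda}=\int\rho\|\nabla_\theta G\|$; this is exactly your ``check.'' That route is shorter and carries over unchanged to the zeroth-order variant $q\propto\rho|G|$. On its own, however, it only exhibits a stationary point. It does not show that this point is a global minimizer, and it does not give the ``only if'' direction. It also tacitly assumes an interior solution: where $\rho\|\nabla_\theta G\|=0$, the stationarity equation has no solution with $q>0$, so KKT with $q\ge 0$ is really needed. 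Your Cauchy--Schwarz bound holds for every admissible $q$, and its equality case gives both directions of the ``if and only if'' at once. It is therefore the argument that actually justifies the theorem as stated.

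One correction to your degenerate-case caveat: you proved more than you claim. The equality case on $\{q>0\}$ forces $\|\mathbf{h}\|=c\,q$ there, with $c=\int\|\mathbf{h}\|>0$. Hence $q$ must vanish wherever $\mathbf{h}=0$. Equivalently, if $q$ puts mass $m>0$ on $\{\mathbf{h}=0\}$, Cauchy--Schwarz on $\{\mathbf{h}\neq 0\}$ gives $M(q)\ge(\int\|\mathbf{h}\|)^2/(1-m)$, which is strictly worse than the optimum. So under $0<\int\rho\|\nabla_\theta G\|_2<\infty$, the minimizer is unique a.e.\ on all of $\mathcal{M}$, not merely on the support of $\|\mathbf{h}\|$. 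Non-uniqueness occurs only when $\mathbf{h}\equiv 0$. Similarly, strict convexity of $q\mapsto\|\mathbf{h}\|^2/q$ holds only where $\mathbf{h}\neq 0$; off that set, it is complementary slackness (with $\mu>0$) that forces $q$ to zero.
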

\begin{proof}
The variance of the estimator $\mathbf{g}_q$ is defined as $\text{Var}_q[\mathbf{g}_q] = \mathbb{E}_q[\|\mathbf{g}_q\|^2] - \|\mathbb{E}_q[\mathbf{g}_q]\|^2$. To minimize the variance, we minimize the second moment term subject to $\int q(s) ds = 1$. This is a classical result in importance sampling for stochastic gradient descent~\cite{alain2015variance}:
$$\mathcal{L}(q, \lambda) = \int_{\mathcal{M}} \frac{\rho(s)^2 \|\nabla_\theta G(s)\|^2}{q(s)} ds + \lambda \left( \int_{\mathcal{M}} q(s) ds - 1 \right)$$
By Euler-Lagrange optimality condition w.r.t. $q(s)$ and setting it to zero:
$$- \frac{\rho(s)^2 \|\nabla_\theta G(s)\|^2}{q(s)^2} + \lambda = 0 \implies q^*(s) = \frac{\rho(s) \|\nabla_\theta G(s)\|}{\sqrt{\lambda}}$$
Normalizing by $\sqrt{\lambda} = \int \rho(s) \|\nabla_\theta G(s)\| ds$ yields the optimal distribution. 
\end{proof}

\subsubsection{The Statistical Perspective: Value Importance Sampling}
Alternatively, viewed as a Monte Carlo estimation of the global energy landscape, we obtain a Zeroth-Order optimality condition.

\begin{theorem} \textbf{Optimal Proposal Distribution}.
The variance of the Monte Carlo estimator $\hat{\mathcal{J}}_q$ is minimized if and only if the generator $\mathcal{G}$ induces a sampling distribution $q_{\mathcal{G}}^*(s) \propto \rho(s) |G(s)|$.
\end{theorem}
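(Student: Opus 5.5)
The plan is to treat $\hat{\mathcal{J}}_q$ as the standard importance-sampling estimator and mirror the argument of the preceding theorem for the gradient estimator, with the scalar integrand $G(s)$ in place of $\nabla_\theta G(s;\theta)$. Concretely, for a proposal $q$ induced by $\mathcal{G}$ with $q(s)>0$ wherever $\rho(s)G(s)\neq 0$, set
\[
\hat{\mathcal{J}}_q = \frac{\rho(s)}{q(s)}\,G(s), \qquad s\sim q .
\]
Then $\mathbb{E}_q[\hat{\mathcal{J}}_q]=\int_{\mathcal{M}}\rho(s)G(s)\,ds=\mathcal{J}$, independent of $q$. Hence
\[
\mathrm{Var}_q[\hat{\mathcal{J}}_q]=\int_{\mathcal{M}}\frac{\rho(s)^2G(s)^2}{q(s)}\,ds-\mathcal{J}^2 ,
\]
and minimizing the variance is equivalent to minimizing the second moment over densities $q$ on $\mathcal{M}$.

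For the lower bound I would use Cauchy--Schwarz rather than the Euler--Lagrange computation, since it gives the ``if and only if'' directly. Write
\[
\int \rho|G| = \int \frac{\rho|G|}{\sqrt{q}}\sqrt{q}.
\]
Then
\[
\Big(\int \rho|G|\Big)^2\le \int\frac{\rho^2G^2}{q}\cdot\int q=\int\frac{\rho^2G^2}{q}.
\]
Equality holds iff $\rho|G|/\sqrt{q}$ is proportional to $\sqrt{q}$ $q$-a.e., i.e. iff $q(s)=\rho(s)|G(s)|/Z$ with $Z=\int\rho|G|$. Substituting shows this $q^*$ attains the bound, so the minimal variance is $Z^2-\mathcal{J}^2$. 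That minimum is zero exactly when $G$ has constant sign on the support of $\rho$, which is worth noting as the classical zero-variance case of Kahn and Marshall. Uniqueness follows from the strictness of the Cauchy--Schwarz equality condition, up to $q$-null sets.

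The main obstacle is the measure-theoretic bookkeeping, not the calculus. We need $Z<\infty$ (integrability of $\rho|G|$ on $\mathcal{M}$, e.g.\ by boundedness of $G$ on a compact manifold) and $Z>0$. We also need the support condition, so that $\hat{\mathcal{J}}_q$ is unbiased for every admissible $q$; the ``only if'' holds only up to modifications of $q$ on sets where $\rho G=0$. I would state these as standing assumptions.

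Finally, I would remark that the claim concerns the optimal proposal. Whether the generator $\mathcal{G}$ actually realizes $q^*$ is a separate modelling assumption, namely that its family of induced distributions is rich enough. That question is not settled by this argument, and it connects to the proxy role of $\Psi$ in Theorem~\ref{thm:optimal_proposal}.
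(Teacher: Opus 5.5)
Your proposal is correct and follows the same route as the paper. Both write the importance-sampling variance as $\int \rho^2 G^2/q - \mathcal{J}^2$ and minimize the second-moment term over normalized $q$ via Cauchy--Schwarz; the paper only invokes ``Cauchy--Schwarz or Lagrange multipliers'' and Kahn--Marshall, without writing out the equality case. Your version is sharper, since it makes the equality condition and support assumptions explicit and notes that the minimal variance $Z^2-\mathcal{J}^2$ vanishes only when $G$ has constant sign, whereas the paper loosely calls the optimum a ``zero-variance'' estimator.
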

\begin{proof}
The variance of the importance sampling estimator is given by:
$$\text{Var}_q[\hat{\mathcal{J}}_q] = \frac{1}{N} \left[ \int_{s \in \mathcal{M}} \frac{(\rho(s) G(s))^2}{q(s)} ds - \mathcal{J}_{global}^2 \right]$$
To find the optimal $q$, we minimize the integral term subject to $\int q(s) ds = 1$. Using the Cauchy-Schwarz inequality or Lagrange multipliers, the optimal zero-variance estimator is achieved when the proposal is proportional to the magnitude of the function being integrated~\cite{kahn1953methods}. Thus, $q(s) \propto \rho(s) |G(s)|$. By training $\mathcal{G}$ to maximize R-EFE, the agent acts as an optimal importance sampler for world model refinement.
\end{proof}

\subsection{ Synthesis: The Potential Field as a Dual-Objective Proxy} 
The efficacy of the Mind Dreamer objective, $\eta V_{RVF} + \beta V_{RUF}$, stems from its ability to bridge the Statistical Perspective ($q_{stat}^*$) and the Optimization Perspective ($q_{opt}^*$). Rather than choosing one, MD performs a form of Fisher-Weighted Importance Sampling, where the two potentials play complementary roles in the learning loop~\cite{schaul2015prioritized}.

\subsubsection{$V_{RUF}$ as the Proxy for Gradient Sensitivity ($q_{opt}^*$)}
From the optimization perspective, the ideal generator targets regions with the highest parameter sensitivity $\|\nabla_\theta G\|$. In our framework, $V_{RUF}$ acts as this proxy. Under the Bernstein-von Mises correspondence (Section A.5), the epistemic uncertainty is tied to the Fisher Information Matrix $\mathcal{F}(\theta)$.
\begin{itemize}
    \item \textbf{Mechanism}: Regions where $V_{RUF}$ is maximal are high-curvature zones of the latent manifold. These are areas where the model's parameters $\theta$ are most ``unstable'' and thus provide the steepest gradients for world model refinement.
    \item \textbf{Role}: $V_{RUF}$ ensures Optimization Efficiency by identifying where the model has the most to learn.
\end{itemize}

\subsubsection{$V_{RVF}$ as the Proxy for Statistical Magnitude ($q_{stat}^*$)}
From the statistical perspective, the ideal generator targets regions with the highest objective magnitude $|G(s)|$. Here, $V_{RVF}$ acts as the proxy.
\begin{itemize}
    \item \textbf{Mechanism}: $V_{RVF}$ represents the discounted path-integral of rewards and pragmatic value. Maximizing $V_{RVF}$ identifies High-stakes regions of the environment—states that are critical to the task's success but potentially under-sampled by the current policy.
    \item \textbf{Role}: $V_{RVF}$ ensures Global Fidelity by anchoring the generator to states with significant physical and pragmatic impact, preventing the agent from wandering into ``epistemic noise'' that has no task relevance.
\end{itemize}

\subsubsection{The Unified ``Fisher-Pragmatic'' Curriculum}
The combined objective $\Psi$ allows MD to bypass the requirement for second-order derivatives while satisfying both optimality criteria:
$$q_{\mathcal{G}}(s) \propto \underbrace{\eta V_{RVF}(s)}_{\text{Statistical (Where is the value?)}} + \underbrace{\beta V_{RUF}(s)}_{\text{Optimization (Where is the gradient?)}}$$
\begin{remark} 
\textbf{Why this hybrid is superior to pure gradient maximization}. 
Directly maximizing $\|\nabla_\theta G\|$ often leads to ``Gradient Vanishing or Explosion'' in latent space, where the generator traps the model in numerically unstable but informatively poor regions. By using path-integrated potentials ($V_{RVF}, V_{RUF}$), MD prioritizes Structural Reliability over instantaneous noise. This transforms the exploration problem from a local random walk into a Global Curriculum for manifold repair, ensuring that every latent intervention jump is both informatively dense and task-relevant.
\end{remark}

\subsection{Efficiency Gain and Convergence Rate}
We quantify the speedup $\nu$ afforded by MD's \textbf{latent jumps} relative to trajectory-based sampling $q_{traj}$, which is constrained by Markovian continuity.

\begin{proposition} \textbf{Speedup and $\chi^2$-Divergence}. \label{app:prop:speedup}
Let $\sigma_q^2$ be the variance of the gradient estimator. The convergence speedup $\nu$ of Mind Dreamer over traditional MBRL is:$$\nu = \frac{\sigma_{traj}^2}{\sigma_{q^*}^2} = 1 + \chi^2(q^* \| q_{traj})$$\end{proposition}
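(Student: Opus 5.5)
The identity follows from the standard variance formula for importance sampling, evaluated at the two proposals $q_{traj}$ and $q^*$. Let $f(s) := \rho(s)\|\nabla_\theta G(s;\theta)\|_2$ and write the target gradient as $\mathbf{g} = \mathbb{E}_\rho[\nabla_\theta G]$. For a proposal $q$, the single-sample estimator $\mathbf{g}_q = \frac{\rho(s)}{q(s)}\nabla_\theta G(s;\theta)$ is unbiased. Its variance is the second moment minus $\|\mathbf{g}\|^2$, and the second moment is
\[
M(q) = \int_{\mathcal{M}} \frac{f(s)^2}{q(s)}\,ds .
\]
By the Optimal Proposal Distribution theorem above, the minimizer is $q^*(s) = f(s)/Z$ with $Z = \int f\,ds$, and $M(q^*) = Z^2$.

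The key step is to rewrite $M(q_{traj})$ in terms of $q^*$. Substituting $f = Z q^*$ gives
\[
M(q_{traj}) = Z^2 \int \frac{q^*(s)^2}{q_{traj}(s)}\,ds = Z^2\bigl(1+\chi^2(q^*\|q_{traj})\bigr),
\]
using the definition $\chi^2(p\|q) = \int p^2/q - 1$. The ratio of second moments is therefore exactly $1+\chi^2(q^*\|q_{traj})$.

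The identity is exact only for second moments, not for centered variances. The centered variances are $M(q)-\|\mathbf{g}\|^2$, so the variance ratio equals $1+\chi^2$ only if $\|\mathbf{g}\|^2$ is negligible relative to $Z^2$. Since $\|\mathbf{g}\| \le Z$ by Jensen's inequality, this holds, for example, near stationarity where $\mathbf{g}\to 0$, or in the noise-dominated regime. I would therefore either define $\sigma_q^2$ as the second moment (the trace of the noise covariance, which is what governs SGD rates), or state the result as an approximation, $\nu \approx 1+\chi^2$, which is how the main text phrases Proposition~\ref{thm:conductance}.

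To read $\nu$ as a speedup, I would invoke the standard SGD bound for smooth objectives. To reach accuracy $\varepsilon$, the iteration count scales linearly in $\sigma_q^2$ in the variance-dominated term. The ratio of iteration counts therefore equals the variance ratio, which gives the claimed $\nu$.

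The main obstacle is this mismatch between centered and uncentered variance, together with the implicit assumption that the generator actually attains $q^*$. I would handle the first by working with second moments throughout, as above. For the second, I would note that if $q_{\mathcal{G}} \neq q^*$, the same computation gives $\nu = (1+\chi^2(q^*\|q_{traj}))/(1+\chi^2(q^*\|q_{\mathcal{G}}))$. This reduces to the stated formula in the idealized limit.
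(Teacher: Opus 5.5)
Your argument is correct and follows the same route as the paper. You use the $\mathcal{O}(\sigma_q^2/N)$ stochastic-approximation rate to turn the speedup into a variance ratio, then divide $\int \rho^2\|\nabla_\theta G\|^2/q\,ds$ by $Z^2$ to get the likelihood-ratio second moment $\int (q^*)^2/q_{traj}\,ds = 1+\chi^2(q^*\|q_{traj})$.

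You are right that this identity is exact only for uncentered second moments: the paper's ``normalizing the ratio'' step silently drops the $-\|\nabla\mathcal{J}\|^2$ term. One correction: $M(q)=\mathrm{tr}\,\mathbb{E}[\mathbf{g}_q\mathbf{g}_q^\top]$ is not the trace of the noise covariance, which is the centered quantity $M(q)-\|\mathbf{g}\|^2$. So state your fix as follows: since $M(q)\ge\sigma_q^2$, you may plug $M(q)$ into the SGD bound in place of $\sigma_q^2$, and the ratio of the resulting bounds is exactly $1+\chi^2(q^*\|q_{traj})$.
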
\begin{proof}According to stochastic approximation theory, the convergence rate is $\mathbb{E}[\mathcal{J}(\theta_N) - \mathcal{J}^*] = \mathcal{O}(\frac{\sigma_q^2}{N})$. The speedup $\nu$ is the ratio of samples needed to reach error $\epsilon$. Substituting $q^*$ into the variance definition:$$\sigma_q^2 = \int \frac{(p \|\nabla G\|)^2}{q} ds - \|\nabla \mathcal{J}\|^2$$Normalizing the ratio leads to the second moment of the likelihood ratio $w(s) = q^*/q_{traj}$. By definition:$$\mathbb{E}_{q_{traj}} [(\frac{q^*}{q_{traj}})^2] = \int \frac{(q^*)^2}{q_{traj}} ds = \chi^2(q^* \| q_{traj}) + 1$$In environments with OOD regions (where $q_{traj}(s) \to 0$ in high-gradient regions), $\chi^2 \to \infty$, illustrating that MD collapses the sample complexity from polynomial to logarithmic scales.\end{proof}

\begin{lemma} \textbf{Conductance and Hitting Time}. \label{app:lem:conductance_hitting}
While trajectory-based exploration is bounded by the manifold conductance $\Phi$ (with hitting times $\tau \approx \mathcal{O}(\Phi^{-2})$), Mind Dreamer creates ``latent bridges.'' This effectively transforms the manifold's spectral gap, reducing the hitting time to critical states from $\mathcal{O}(e^{D})$ to $\mathcal{O}(D)$, where $D$ is the manifold diameter.\end{lemma}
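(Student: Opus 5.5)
The plan is to work entirely in the discrete abstraction referenced in Proposition~\ref{thm:conductance}. We model the latent manifold as a finite reversible Markov chain $P$ on a graph of cells with stationary distribution $\pi$. We take $D$ to be the graph distance from the starting cell to the target (bottleneck) cell, and $\Phi = \min_{\pi(S)\le 1/2} Q(S,S^c)/\pi(S)$ to be the conductance, where $Q(x,y)=\pi(x)P(x,y)$. The lemma has two halves, and I would prove them separately: first that trajectory-bound exploration is slow in terms of $\Phi$, then that the interventional kernel of MD is fast in terms of $D$. Because the lemma claims $\mathcal{O}(e^{D})$ without stating a drift assumption, I would make the ``attractor'' structure of the Three-Ring case study explicit. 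Concretely, I assume that along the path to the target the chain has a restoring drift, moving outward with probability $p$ and inward with probability $q>p$. This assumption is what makes the exponential claim true rather than vacuous.

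\textbf{Step 1 (trajectory lower bound).} Let $S$ be the cut separating the current attractor basin from the frontier. The standard bottleneck argument gives $\mathbb{E}[\tau] \ge \pi(S)/(2Q(S,S^c)) \gtrsim 1/\Phi$. Cheeger's inequality, $\Phi^2/2 \le 1-\lambda_2 \le 2\Phi$, then ties the spectral gap to $\Phi$, and the upper estimate $\tau \lesssim \Phi^{-2}\log(1/\pi_{\min})$ matches the ``$\tau\approx\mathcal{O}(\Phi^{-2})$'' phrasing. For the drift chain, the gambler's-ruin formula gives a hitting time of order $(q/p)^D$. The same stationary computation gives $\pi(\text{frontier}) \propto (p/q)^D$, hence $\Phi = e^{-\Theta(D)}$. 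Together these yield the $\mathcal{O}(e^{D})$ baseline.

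\textbf{Step 2 (MD upper bound).} I would model imagination under MD as the mixture kernel $P_{MD} = (1-\alpha)P + \alpha K$. Here $K(x,\cdot)=p_{gen}(\cdot\mid x)$ is the generator's jump kernel, and $\alpha$ is the fraction of rollouts initialized from generated anchors. The key assumption, which Theorem~\ref{thm:optimal_proposal} and the relay-potential maximization are meant to justify, is a relay property. It requires a chain of anchors $x_0, x_1,\dots,x_D$ toward the target such that $K(x_i, x_{i+1}) \ge \mu > 0$, i.e.\ the generator places non-negligible mass on the next frontier cell; this is the ``high RUF/RVF gateway'' picture. Each stage then succeeds after a geometric number of trials with mean at most $1/(\alpha\mu)$. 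Summing over the $D$ stages via the strong Markov property gives $\mathbb{E}[\tau_{MD}] \le D/(\alpha\mu) = \mathcal{O}(D)$. In parallel, the conductance of $P_{MD}$ is at least $\alpha \mu\,\pi_{MD}$-weighted across every cut on the relay chain, so it no longer decays in $D$. By Cheeger, this gives the spectral-gap expansion the lemma describes, consistent with the ratio $\nu$ in Proposition~\ref{app:prop:speedup}.

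\textbf{Main obstacle.} The hard step is justifying the relay property $K(x_i,x_{i+1})\ge\mu$ uniformly in $D$ from the generator objective. The earlier results only show that the \emph{optimal} proposal concentrates where the potentials are large; they do not show that a learned $\mathcal{G}$ puts a constant fraction of mass on each successive frontier cell. My first attempt would be to invoke Theorem~\ref{the:global_opt}: with the target reachable, $V_{RVF}$ is maximized exactly on optimal-path cells, so a generator whose sampled anchors achieve near-maximal $\Psi$ must put mass on those cells. I would then assume a softmax/Boltzmann form $q_{\mathcal{G}}\propto \exp(\Psi/T)$ to extract an explicit $\mu$. A secondary gap must also be stated honestly: the bound concerns hitting times \emph{in imagination}, and transferring it to real-environment sample complexity needs an additional assumption that the world model is accurate on the anchors, which is controlled via Theorem~\ref{thm:hallucination_bound}.
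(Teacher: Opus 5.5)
The paper gives no proof of this lemma; the nearest quantitative argument is the two-region $\chi^2$ computation giving $\nu=1/\epsilon$, which involves neither $\Phi$ nor $D$. Your Markov-chain route is therefore your own, but each of its two steps has a concrete gap.

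\textbf{Step 1.} In the one-sided drift chain you chose, $\Phi$ is \emph{not} $e^{-\Theta(D)}$. Conductance divides the boundary flow by $\pi(S)$, so a tiny $\pi(\text{frontier})$ does not make it small. With $\pi(i)\propto r^i$ and $r=p/q<1$, an upper tail $S=\{k+1,\dots,D\}$ has $Q(S,S^c)/\pi(S)=q/\sum_{j=0}^{D-k-1}r^j\ge q-p$. One checks that every admissible set ($\pi(S)\le 1/2$) has ratio at least $q-p$. So the conductance, and by Cheeger the spectral gap, are bounded below uniformly in $D$.

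Your bottleneck inequality does give an exponential lower bound when applied to the basin $\{0,\dots,k\}$. But that set has $\pi>1/2$, so its flow ratio is not the $\Phi$ that enters Cheeger. Likewise, $\Phi^{-2}\log(1/\pi_{\min})=O(D)$ is a mixing-time bound. It sits far below the true hitting time $\Theta((q/p)^D)$, which is governed by $1/\pi(\text{target})$.

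Hence your ``$\tau\approx\Phi^{-2}$'' half and your ``$e^{D}$'' half are argued in incompatible models. In your model the speedup cannot come from expanding a gap that is already of constant order. What the jumps actually change is the stationary mass near the target, i.e.\ the paper's $\chi^2$ mechanism with $\epsilon\asymp(p/q)^D$. If you want the conductance narrative, use a ridge of length $D$ between two basins of non-negligible mass. There $\Phi=e^{-\Theta(D)}$, and the crossing time is of order $1/\Phi$ up to polynomial factors. Cheeger only places the relaxation time between $1/(2\Phi)$ and $2/\Phi^2$, so $\tau\approx\Phi^{-2}$ is not a general fact.

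\textbf{Step 2.} The bound $D/(\alpha\mu)$ treats the attempts at stage $i$ as repeated independent trials from $x_i$. But a failed attempt moves the chain, and your own restoring drift pushes it back toward the basin. The relay level then performs a birth--death walk with up-probability about $(1-\alpha)p+\alpha\mu$ and down-probability about $(1-\alpha)q$, more if failed $K$-jumps can land deep in the basin. So the hitting time of level $D$ is $O(D)$ only if $\alpha\mu>(1-\alpha)(q-p)$. Otherwise it is still exponential in $D$, with base roughly $(1-\alpha)q/((1-\alpha)p+\alpha\mu)$, and $\mu>0$ alone buys nothing.

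You need one of two repairs:
\begin{itemize}
\item state that dominance condition as an explicit hypothesis; or
\item use a model in which progress is stored outside the chain's current state. For example, imagination restarts from anchors that $\mathcal{G}$ proposes from buffer states, with the relay condition imposed per generator call. Then the stage-wise geometric sum is legitimate, but the object is no longer a hitting time of $P_{MD}$, and the spectral-gap claim becomes only interpretive.
\end{itemize}

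Two further points. With directed jumps, $P_{MD}$ is not reversible, so the Cheeger inequality you quote needs a non-reversible variant (e.g.\ via the additive reversibilization, which has the same conductance). The obstacle you flag, deriving $K(x_i,x_{i+1})\ge\mu$ from the generator objective, is genuine, and nothing in the paper supplies it.
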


\subsection{Quantitative Analysis of the Speedup Ratio} \label{app:sec:speedup_derivation}
In this section, we provide a concrete derivation of the efficiency gain $\nu$ using a simplified ``OOD Region'' model. This case study demonstrates how Mind Dreamer (MD) bypasses the topological bottlenecks that constrain traditional trajectory-based RL.

\subsubsection{Case Study: The OOD Region Model} \label{app:sec:island_model}
Consider a latent manifold $\mathcal{M}$ partitioned into two disjoint regions: a Mastered Region $\mathcal{A}$ (where the agent has dense experience but zero information gain) and an OOD Region $\mathcal{B}$ (a distal region containing high task-relevant rewards or high model uncertainty).

\textbf{Distribution Definitions}:
Optimal Proposal ($q^*$): Following Theorem 4.1, the optimal sampling distribution peaks where the Expected Free Energy (EFE) is maximal. Assuming $G(s)$ is localized in $\mathcal{B}$, we define:
$$q^*(s) = \begin{cases} \frac{1}{|\mathcal{B}|}, & s \in \mathcal{B} \\ 0, & s \in \mathcal{A} \end{cases}$$

Trajectory Distribution ($q_{traj}$): Traditional agents are constrained by physical transitions. Let $\epsilon$ be the probability that a trajectory-based explorer reaches $\mathcal{B}$ through local diffusion. For sparse-reward environments, $\epsilon \to 0$:
$$q_{traj}(s) = \begin{cases} \frac{\epsilon}{|\mathcal{B}|}, & s \in \mathcal{B} \\ \frac{1-\epsilon}{|\mathcal{A}|}, & s \in \mathcal{A} \end{cases}$$

\textbf{Derivation of Speedup $\nu$}:
By Proposition 4.1, the speedup ratio is $\nu = 1 + \chi^2(q^* \| q_{traj})$. We expand the Pearson $\chi^2$-divergence as follows:
$$\chi^2(q^* \| q_{traj}) = \int_{\mathcal{M}} q_{traj}(s) \left( \frac{q^*(s)}{q_{traj}(s)} - 1 \right)^2 ds$$
Breaking the integral into regions $\mathcal{A}$ and $\mathcal{B}$:
In Region $\mathcal{A}$: Since $q^*(s)=0$, the term contributes $\int_{\mathcal{A}} q_{traj}(s) (-1)^2 ds = 1 - \epsilon$.
In Region $\mathcal{B}$: The likelihood ratio is $\frac{q^*}{q_{traj}} = \frac{1}{\epsilon}$. The contribution is:
$$\int_{\mathcal{B}} \frac{\epsilon}{|\mathcal{B}|} \left( \frac{1}{\epsilon} - 1 \right)^2 ds = \epsilon \left( \frac{1}{\epsilon^2} - \frac{2}{\epsilon} + 1 \right) = \frac{1}{\epsilon} - 2 + \epsilon$$
Summing the terms:
$$\chi^2 = (1 - \epsilon) + \left( \frac{1}{\epsilon} - 2 + \epsilon \right) = \frac{1}{\epsilon} - 1$$
Substituting back into the speedup formula:
$$\nu = 1 + \left( \frac{1}{\epsilon} - 1 \right) = \frac{1}{\epsilon}$$

\textbf{Significance}: This result implies that in environments with ``hard'' exploration barriers, traditional MBRL requires $\mathcal{O}(1/\epsilon)$ more samples to reach the same error bound as Mind Dreamer. MD's ability to ``jump'' directly to the manifold's frontier effectively linearizes the sample complexity relative to the mixing time of the environment, conceptually aligning with the benefits of ``Deep Exploration'' strategies~\cite{osband2016deep}.

\subsubsection{Beyond the Upper Bound: Practical Constraints on Sample Efficiency}
While Theorem 4.3 establishes a theoretical speedup of $\nu \approx 1/\epsilon$, empirical RL performance is constrained by the following systemic factors:
\begin{itemize}
    \item \textbf{The Estimator Bias Bottleneck}: The generator $\mathcal{G}$ relies on learned Relay Potentials ($V_{RVF}, V_{RUF}$) to identify high-EFE regions. In early training, overestimation bias can cause $\mathcal{G}$ to target Illusory Informational Peaks~\cite{thrun1993issues}. Consequently, the sampling distribution aligns with noise rather than true manifold frontiers, reducing the effective $\nu$.
    \item \textbf{The Hallucination-Fidelity Trade-off}: To reach the $1/\epsilon$ limit, $\mathcal{G}$ must jump into unknown regions. However, if these regions are far out-of-distribution (OOD), the world model may produce ``hallucinations''—physically impossible transitions~\cite{talvitie2014model}. While our manifold constraints ($\mathcal{L}_{cycle}, \mathcal{L}_{sig}$) mitigate this, they also impose a ``safety trust-region'' that naturally caps the maximum achievable exploration depth~\cite{janner2019when}.
    \item \textbf{Fragmented Policy Gradient}: Trajectory-based exploration ensures a continuous gradient signal. In contrast, \textbf{latent jump} provides sparse, high-variance anchors. If the sampling curriculum outpaces the policy's adaptation rate, the agent may fail to learn the smooth transition logic between these anchors, leading to ``Continuity Collapse'' during real-world execution.
    \item \textbf{Sample vs. Compute Complexity}: MD transforms environment interactions into latent imagination. In ``easy'' tasks (large $\epsilon$), the adversarial optimization overhead of the generator may outweigh the savings in sample complexity. The $1/\epsilon$ advantage is a strategic trade-off, most beneficial when environment interaction is significantly more expensive than latent synthesis~\cite{sutton1991dyna}.
\end{itemize}

\subsection{Monotonic Improvement via Interventional Anchoring}
We quantify the impact of training on generated anchors $s'$ on the agent's real-world performance, proving that MD accelerates policy improvement while maintaining stability.
\begin{theorem} \textbf{Generalized Policy Improvement}. \label{app:thm:gpi}
Let $\pi_{old}$ be the current policy. If the generator $\mathcal{G}$ identifies an anchor $s'$ with high potential $\Psi(s')$ and the policy is updated on $s'$, the resulting policy $\pi_{new}$ satisfies:
$$J(\pi_{new}) \geq J(\pi_{old}) - \mathcal{O}(\epsilon)$$
where $\epsilon$ denotes the world model's approximation error.
\end{theorem}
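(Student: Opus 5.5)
The plan is to treat this as a simulation-lemma argument combined with a conservative policy-improvement step. The improvement comes from the imagined model $\hat M$, in which $\pi_{new}$ is improved at the anchor $s'$ and at states reachable from it. The loss term is the gap between the model and the real environment, controlled by $\epsilon$. Concretely, $\epsilon$ is a bound on the one-step model error, $\sup_{s,a}\, D_{TV}\big(p_\psi(\cdot|s,a),\,p(\cdot|s,a)\big) \le \epsilon$, together with a reward error of the same order. Let $J$ and $\hat J$ denote returns under the true and learned dynamics.

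The starting decomposition is
\begin{equation*}
J(\pi_{new}) - J(\pi_{old}) = \big[J(\pi_{new}) - \hat J(\pi_{new})\big] + \big[\hat J(\pi_{new}) - \hat J(\pi_{old})\big] + \big[\hat J(\pi_{old}) - J(\pi_{old})\big].
\end{equation*}

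The steps run as follows.

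\begin{enumerate}
\item \textbf{Bound the outer brackets.} Apply the standard simulation lemma, in the telescoping form $\hat J - J = \frac{\gamma}{1-\gamma}\mathbb{E}_{d^\pi}\big[\mathbb{E}_{\hat p}V - \mathbb{E}_{p}V\big]$. With $\|V\|_\infty \le R_{\max}/(1-\gamma)$, each outer bracket is at most $\frac{2\gamma R_{\max}\epsilon}{(1-\gamma)^2} = \mathcal{O}(\epsilon)$.

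\item \textbf{Show the middle bracket is nonnegative.} Since $\pi_{new}$ is obtained by a greedy or actor update on rollouts started from $s'$, use the performance-difference lemma in $\hat M$:
\begin{equation*}
\hat J(\pi_{new}) - \hat J(\pi_{old}) = \tfrac{1}{1-\gamma}\,\mathbb{E}_{d^{\pi_{new}}_{\hat M}}\big[\hat A^{\pi_{old}}(s,\pi_{new})\big].
\end{equation*}
The advantage is nonnegative at every state where $\pi_{new}$ is greedy with respect to $\hat Q^{\pi_{old}}$. This uses the Generalized Policy Improvement principle: acting greedily with respect to the $\max$ over the old value and the relay value $V_{RVF}$ cannot decrease value. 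Lemma~\ref{app:lem:global_value_consistency} and Theorem~\ref{the:global_opt} ensure the relay value never exceeds $V^*$, so the target is not optimistic beyond the model's own error.

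\item \textbf{Account for the anchor being synthesized.} The anchor $s'$ may lie slightly off-manifold, so its value inherits the deviation $\delta$. Invoke Theorem~\ref{thm:hallucination_bound} to absorb this term, $L\delta/(1-\gamma^n)$, into the $\mathcal{O}(\epsilon)$ remainder. This step needs the assumption that $\mathcal{L}_{mf}$ keeps $\delta = \mathcal{O}(\epsilon)$.
\end{enumerate}

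The main obstacle is step 2. The update is performed only on anchors drawn from $q_{\mathcal G}$, not on the state distribution $d^{\pi_{new}}_{\hat M}$. Improvement at $s'$ therefore does not automatically give nonnegative advantage everywhere, and changes elsewhere could hurt. My first approach would be to require a local/trust-region update, so that $\pi_{new}$ agrees with $\pi_{old}$ outside the neighbourhood affected by anchor training. The advantage is then zero off that region and nonnegative on it. Any residual policy shift would be charged as an $\mathcal{O}(\epsilon)$ term, via the TRPO-style bound with the total-variation shift between the two policies. If that fails, I would weaken the claim to hold in expectation under $q_{\mathcal G}$ and use the $\chi^2$ coverage quantity from Proposition~\ref{app:prop:speedup} to transfer it to $d^{\pi_{new}}$.
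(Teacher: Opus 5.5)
Your main line is sound once the hypothesis you isolate at the end is made part of the statement. It takes a different and more explicit route than the paper. The paper applies the performance-difference lemma directly in the real MDP and argues that $\mathcal{G}$ puts training mass where $A^{\pi_{old}}$ is large. It then asserts that $d_{\mathcal{G}}$ covers the support of $d_{\pi^*}$, and hands the $-\mathcal{O}(\epsilon)$ loss to the cited monotonic-improvement result of Luo et al.\ under $\|\hat P-P\|_1\le\delta$. Your chain $J(\pi_{new})\ge\hat J(\pi_{new})-\mathcal{O}(\epsilon)\ge\hat J(\pi_{old})-\mathcal{O}(\epsilon)\ge J(\pi_{old})-\mathcal{O}(\epsilon)$ is essentially that citation unpacked. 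It buys three things:
\begin{itemize}
\item It gives an explicit constant, which shows the horizon factor $(1-\gamma)^{-2}$ hidden in $\mathcal{O}(\epsilon)$.
\item It pins down exactly what is needed: $\hat A^{\pi_{old}}(s,\pi_{new})\ge 0$ for every $s$. Your local update, greedy w.r.t.\ $\hat Q^{\pi_{old}}$ on the affected region and equal to $\pi_{old}$ elsewhere, delivers this. It is an idealisation, since a neural actor step changes $\pi$ everywhere, but ``updated on $s'$'' is too vague to prove anything without some such hypothesis. The paper's coverage claim stands in for this hypothesis without supplying it: support coverage of $d_{\pi^*}$ says nothing about the sign of a $d_{\pi_{new}}$-weighted advantage.
\item It shows that the high-$\Psi(s')$ hypothesis plays no role in the lower bound. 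That hypothesis only affects how large the improvement is, which neither argument quantifies.
\end{itemize}

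Several side steps should be dropped rather than repaired.
\begin{itemize}
\item \textbf{The GPI appeal to $\max(V^{\pi_{old}},V_{RVF})$.} Lemma~\ref{app:lem:global_value_consistency} and Theorem~\ref{the:global_opt} do not justify it. An upper bound $V_{RVF}\le V^*$ is not what GPI needs; GPI needs each maximised value to be (a lower bound on) the return of a behaviour the agent can actually execute. A value defined by conditioning on $s_k=s'$ and bootstrapping on $V_\phi(s')$ is not that in general. Moreover, in Algorithm~\ref{alg:md} the actor is trained by ordinary imagination from $z_{s'}$, and $V_{RVF}$ feeds only the generator, so plain policy improvement w.r.t.\ $\hat Q^{\pi_{old}}$ is all you need.
\item \textbf{Charging a residual policy shift to $\mathcal{O}(\epsilon)$.} This does not follow. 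The TRPO/CPI penalty is governed by $\max_s D_{TV}\big(\pi_{new}(\cdot|s),\pi_{old}(\cdot|s)\big)$ and $\max|\hat A^{\pi_{old}}|$, not by model error. It must be assumed away or carried as a separate term, and the same holds for critic error if $V_\phi\neq\hat V^{\pi_{old}}$.
\item \textbf{The $\chi^2$ fallback.} It points the wrong way. Transferring from $q_{\mathcal{G}}$ to $d^{\pi_{new}}_{\hat M}$ needs $\chi^2(d^{\pi_{new}}_{\hat M}\,\|\,q_{\mathcal{G}})$ to be small. Proposition~\ref{app:prop:speedup} concerns $\chi^2(q^*\|q_{traj})$ and is about it being large, and MD deliberately pushes $q_{\mathcal{G}}$ away from on-policy occupancy.
\item \textbf{Step 3.} It is superfluous, because no anchor-dependent term appears in your decomposition once step 2 holds. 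An off-manifold anchor can matter only through whether the update is a genuine improvement step, i.e.\ through the step-2 hypothesis. Assuming $\delta=\mathcal{O}(\epsilon)$ also conflates manifold deviation with dynamics error.
\end{itemize}
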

\begin{proof}
Applying the Performance Difference Lemma~\cite{kakade2002approximately, schulman2015trust}:
$$J(\pi_{new}) - J(\pi_{old}) = \mathbb{E}_{s \sim d_{\pi_{new}}} \mathbb{E}_{a \sim \pi_{new}(\cdot|s)} [A^{\pi_{old}}(s, a)]$$
Standard trajectory-based exploration is limited by the mixing time of $\pi_{old}$, often failing to sample regions where the advantage $A^{\pi_{old}}$ is large. Mind Dreamer bypasses this by utilizing $\mathcal{G}$ to proactively sample anchors $s'$ where model gaps (high $V_{RUF}$) or value gradients (high $V_{RVF}$) suggest significant untapped advantage. By optimizing $\pi$ on the distribution $d_{\mathcal{G}}$, the agent performs policy improvement on a support that pre-emptively covers the support of the optimal future occupancy $d_{\pi^*}$. As long as the simulation error $\|\hat{P} - P\|_1 \leq \delta$ is bounded, training on $\mathcal{G}$ leads to monotonic improvement~\cite{luo2018algorithmic}.
\end{proof}

\begin{remark} \textbf{Adversarial Equilibrium and Learning Rates}.
The interaction between the Generator $\mathcal{G}$ and the World Model $\mathcal{WM}$ constitutes a non-stationary game. To ensure the stability of the policy improvement in Theorem~\ref{app:thm:gpi}, we employ a \textbf{Time-Scale Separation} strategy: the generator's update frequency $\tau_{\mathcal{G}}$ is set such that $\tau_{\mathcal{G}} > \tau_{\mathcal{WM}}$. This ensures that $\mathcal{G}$ provides a quasi-static curriculum for the world model, preventing ``catastrophic forgetting'' where the generator outpaces the model's ability to repair the identified structural gaps, consistent with Two-Time-Scale Update Rules (TTUR) in adversarial learning~\cite{heusel2017gans}.
\end{remark}

\subsection{Global Landscape Consistency: The Distributed Estimator}
We prove that the Mind Dreamer objective, $\mathcal{J}(\mathcal{G}) = \eta V_{RVF} + \beta V_{RUF}$, is a mathematically principled proxy for the gradient of the Relay Expected Free Energy (R-EFE).
\begin{theorem} \textbf{Stochastic Functional Gradient Ascent}. \label{app:thm:func_grad}
Let $\mathcal{J}_{global} = \int_{s \in \mathcal{M}} \rho(s) G(s) ds$ be the global EFE objective over the manifold $\mathcal{M}$. The gradient of $\mathcal{J}_{global}$ with respect to the generator parameters $\theta$ is effectively approximated by maximizing the potential fields $V_{RVF}$ and $V_{RUF}$ at the generated anchors.
\end{theorem}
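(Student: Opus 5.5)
The plan is to write the generator's objective as an expectation over generated anchors, differentiate it with respect to $\theta$, and then show that the integrand is, up to controlled error, the path-integrated potential $\Psi(s,s')$. Let $q_\theta(s')$ be the density induced by $\mathcal{G}_\theta(s,\epsilon)$ with $s\sim\mathcal{D}$. We replace the global objective $\mathcal{J}_{global}=\int\rho(s)G(s)\,ds$ by its importance-weighted form $\mathbb{E}_{s'\sim q_\theta}[\tfrac{\rho(s')}{q_\theta(s')}G(s')]$, which is justified by the variance-optimal proposal results of this appendix. We then differentiate with respect to $\theta$. The reparameterisation $s'=\mathcal{G}_\theta(s,\epsilon)$ turns the derivative into a pathwise gradient:
\[
\nabla_\theta\mathcal{J} \approx \mathbb{E}_{s,\epsilon}\big[\nabla_{s'}\widetilde{G}(s')^{\top}\nabla_\theta\mathcal{G}_\theta(s,\epsilon)\big].
\]
Here $\widetilde{G}$ is the effective per-anchor contribution.

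The central step is to identify $\widetilde{G}(s')$. Because each generated anchor seeds an imagination rollout, the contribution of $s'$ is the whole discounted future free energy along trajectories passing through $s'$, not just the local value $G(s')$. This is exactly the R-EFE $\Psi(s,s')$ of Eq.~\ref{eq:R-EFE}. We split $G$ using Lemma~\ref{app:lem:efe_decomp} into an epistemic part and a pragmatic part.

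For the pragmatic part, we use Lemma~\ref{app:lem:pragmatic_equiv} with the instrumental prior $p(o|C)\propto\exp(V^*/\lambda)$. This gives $-\eta\,\mathbb{E}[\ln p(o)] = -\tfrac{\eta}{\lambda}V + \text{const}$ per step. Summing along the path and conditioning on hitting $s'$ reproduces the fixed point $V_{RVF}^*$ of $\mathcal{T}_V$, which is unique by Theorem~\ref{app:thm:contraction}.

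For the epistemic part, we use Lemma~\ref{app:lem:epistemic_variance} together with Theorems~\ref{app:thm:gamma2} and~\ref{app:thm:relay_epistemic_recursion}. These identify the path-integrated information gain with $V_{RUF}$, including its $\gamma^2$ discount. Combining the two parts gives
\[
-\Psi(s,s') = \tfrac{\eta}{\lambda}V_{RVF}(s,s') + \beta V_{RUF}(s,s') + C.
\]
Minimising EFE is therefore maximising $\eta V_{RVF}+\beta V_{RUF}$ after absorbing $\lambda$ into $\eta$, and the pathwise gradient becomes the gradient of the generator objective in Eq.~\ref{eq:gen_obj}.

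The final step treats the learned networks as plug-in estimators. We replace $V_{RVF},V_{RUF}$ by their target-network approximations and bound the resulting gradient error by $\|\nabla_{s'}(\hat V - V^*)\|$. This term vanishes when the relay operators converge. A single sampled $(s,\epsilon)$ then gives an unbiased stochastic estimate of the functional gradient, which is the sense of \emph{stochastic functional gradient ascent}.

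The main obstacle is the mismatch between the two discounts: the pragmatic sum uses $\gamma$, while the epistemic sum uses $\gamma^2$. As a result, $\Psi$ in Eq.~\ref{eq:R-EFE}, which has a single $\gamma^k$, does not literally decompose into $V_{RVF}+V_{RUF}$. I would first resolve this by redefining the epistemic contribution in $\Psi$ as the variance of the discounted epistemic shock sum, following Theorem~\ref{app:thm:gamma2}. That definition yields the $\gamma^2$ weighting automatically and makes the decomposition exact, at the cost of the claim holding only under this variance interpretation of the epistemic term. If this fails, the fallback is to state the result as an approximation whose error is of order $\sum_t(\gamma^t-\gamma^{2t})\mathcal{I}_t$.
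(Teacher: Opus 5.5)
Your skeleton is the same as the paper's. Its proof applies the reparameterization trick, $\nabla_\theta\mathcal{J}=\mathbb{E}_\epsilon[\nabla_s G(s)|_{s=\mathcal{G}_\theta(\epsilon)}\cdot\nabla_\theta\mathcal{G}_\theta(\epsilon)]$, and then declares $\eta V_{RVF}+\beta V_{RUF}$ to be the path-integral potential of $G$ by definition. It never addresses the $\gamma$ versus $\gamma^2$ mismatch. Your diagnosis of that mismatch is right, and reading the epistemic part of $\Psi$ as the variance of the discounted shock sum matches how Theorems~\ref{app:thm:gamma2} and~\ref{app:thm:relay_epistemic_recursion} produce $V_{RUF}$. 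However, two of the steps you add to make the identification explicit fail as written.

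\emph{The importance-weighted rewriting.} Whenever $q_\theta$ covers the support of $\rho G$, we have $\mathbb{E}_{s'\sim q_\theta}[\rho(s')G(s')/q_\theta(s')]=\int\rho G$ for every $\theta$. Its total $\theta$-derivative is therefore exactly zero: the $\theta$-dependence of the weight cancels that of the reparameterized sample. Any nonzero pathwise expression you obtain comes from freezing or dropping the weight, i.e.\ from changing the objective. The optimal-proposal theorems of Appendix D cannot justify this step. They concern the variance of an estimator of a fixed, proposal-independent integral, which is exactly why the target carries no $\theta$-dependence. What the paper's formula actually differentiates is $\mathbb{E}_\epsilon[G(\mathcal{G}_\theta(\epsilon))]=\int q_\theta G$; that is, $\rho$ is silently replaced by the generator's pushforward. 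You should state that replacement explicitly as the meaning of the claim and discard the weight.

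\emph{The pragmatic identification.} Under Lemma~\ref{app:lem:pragmatic_equiv}'s prior $p(o|C)\propto\exp(V^*/\lambda)$, the per-step pragmatic term is $-\tfrac{\eta}{\lambda}V^*(s_k)$, so the path sum is $\sum_k\gamma^{k}V^*(s_k)$. Each $V^*(s_k)$ already contains the discounted future, so on a deterministic optimal path a reward at step $m$ is counted $\min(m,H)$ times. This is not $\sum_{t<\tau_{s'}}\gamma^{t} r_t+\gamma^{\tau_{s'}}V^*(s')$. To obtain $V_{RVF}$ you need three changes:
\begin{itemize}
\item use the reward-Boltzmann prior $p(o_t)\propto\exp(r_t/\alpha)$ of Appendix~\ref{app:sec:maxent} on the pre-hitting steps, with an infinite or bootstrapped horizon;
\item collapse the post-hitting tail to $\gamma^{\tau_{s'}}V^*(s')$ via the strong Markov property and the Bellman equation;
\item obtain the $\max_\pi$ from minimizing R-EFE over policies, not from the fixed-policy expectation inside $\Psi$.
\end{itemize}

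A smaller point concerns your error term. The $L_\infty$ contraction of Theorem~\ref{app:thm:contraction} controls values, not $\nabla_{s'}\hat V$. Your claim that the gradient-error term vanishes therefore needs an extra assumption, such as uniform convergence of derivatives. Likewise, the single-sample estimate is unbiased only for the exact potentials.
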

\begin{proof}
The functional gradient of the global objective $\mathcal{J}$ with respect to the generator $\mathcal{G}$ follows the reparameterization trick~\cite{kingma2013auto}:
$$\nabla_\theta \mathcal{J} = \mathbb{E}_{\epsilon \sim \mathcal{N}(0, I)} [ \nabla_{s} G(s) \mid_{s = \mathcal{G}_\theta(\epsilon)} \cdot \nabla_\theta \mathcal{G}_\theta(\epsilon) ]$$
By defining $V_{RVF}$ and $V_{RUF}$ as the path-integral representations of the pragmatic and epistemic components of $G(s)$ (see Section B.1), we establish that the combined potential field $\Psi(s) = \eta V_{RVF}(s) + \beta V_{RUF}(s)$ serves as a smooth potential whose local gradient $\nabla_s \Psi(s)$ provides the directional derivative of the expected free energy at $s$. Thus, maximizing $\Psi$ at sampled anchors allows $\mathcal{G}$ to perform Stochastic Functional Gradient Ascent on the global landscape, identifying regions of maximal free energy reduction without explicit integration over the entire manifold.
\end{proof}

\subsection{Robustness to Hallucinations: The Safety Bound} 
To address the ``Hallucination Paradox,'' we quantify the safety margin provided by the Structural Self-Consistency loss $\mathcal{L}_{cycle}$ and the Lipschitz properties of the latent manifold.
\begin{theorem}
\label{app:thm:hallucination_bound}\textbf{Hallucination Error Bound}. Let $\delta(s') = \|s' - \text{Proj}_{\mathcal{M}}(s')\|$ be the distance of a generated anchor from the true manifold $\mathcal{M}$, as measured by the cycle-reconstruction error $\mathcal{L}_{cycle}$. If the optimal value function $V^*$ is $L$-Lipschitz, the value estimation error at $s'$ is bounded by:
$$\|V_{RVF}(s') - V^*(s')\| \leq \frac{L \cdot \delta(s')}{1 - \gamma}$$
\end{theorem}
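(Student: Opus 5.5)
The plan is to treat the generated anchor $s'$ as a perturbation of its manifold projection $\bar s \triangleq \text{Proj}_{\mathcal{M}}(s')$, where $\|s'-\bar s\| = \delta(s')$. We then propagate this one-step perturbation through the relay recursion of Definition~\ref{def:relay_ops}, using the contraction established in Theorem~\ref{app:thm:contraction}. Concretely, we split the error with the triangle inequality:
\[
\|V_{RVF}(s') - V^*(s')\| \le \|V_{RVF}(s') - V_{RVF}(\bar s)\| + \|V_{RVF}(\bar s) - V^*(\bar s)\| + \|V^*(\bar s) - V^*(s')\|.
\]
The last term is at most $L\,\delta(s')$ directly by the $L$-Lipschitz assumption on $V^*$. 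The middle term vanishes, or is absorbed, for on-manifold points, because by Theorem~\ref{the:global_opt} and Lemma~\ref{app:lem:global_value_consistency} the relay fixed point agrees with $V^*$ along the optimal backbone on $\mathcal{M}$.

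The substantive term is the first one, the off-manifold sensitivity of the relay potential itself. Here the plan is to write $V_{RVF}$ as the fixed point of $\mathcal{T}_V$. Each application of the operator sees the perturbed anchor only through the bootstrap $\gamma^{\tau_{s'}}V(\cdot)$ and through the one-step rewards and transitions issued from the perturbed state. Assuming the induced per-step discrepancy is at most $L\delta$, the standard simulation-lemma argument applies: unroll the recursion and bound the difference after $n$ backups by $\sum_{j=0}^{n-1}\gamma^{j} L\delta$. Letting $n\to\infty$, or invoking the $\kappa_V\le\gamma$ contraction from Theorem~\ref{app:thm:contraction} together with the usual fixed-point perturbation bound $\|x^*-y^*\|\le \|Tx^*-Sx^*\|/(1-\kappa)$, gives the geometric factor $1/(1-\gamma)$. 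Combining the pieces yields $L\delta(s')/(1-\gamma)$, after absorbing the constant from the direct Lipschitz term (or noting it is dominated, since $1 \le 1/(1-\gamma)$).

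The main obstacle is justifying that the per-step discrepancy between the perturbed and projected recursions is controlled by $L\delta$ alone. The assumption as stated only makes $V^*$ Lipschitz, not the reward and transition kernel of the world model $p_\psi$, so the errors could in principle compound through the dynamics. I would first try to use the bootstrap structure directly: in $\mathcal{T}_V$ the anchor enters only through $V(s')$, so a Lipschitz bootstrap suffices at each backup. I would regard the dynamics-coherence and cycle-consistency terms of $\mathcal{L}_{mf}$ as supplying the needed Lipschitz control on $p_\psi$ near $\mathcal{M}$, in the spirit of~\cite{asadi2018lipschitz}. If that fails, I would state an additional assumption that rewards and transitions are $L$-Lipschitz in the Wasserstein sense, absorbing the constants into $L$.
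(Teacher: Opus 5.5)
\paragraph{A gap in how your three terms are combined.} Your triangle-inequality decomposition does not reach the stated constant, for two concrete reasons.

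\emph{The middle term does not vanish.} The results you cite do not make $\|V_{RVF}(\bar s)-V^*(\bar s)\|$ zero. Lemma~\ref{app:lem:global_value_consistency} and Theorem~\ref{the:global_opt} only give $V_{RVF}(s,\bar s)\le V^*(s)$, with equality if and only if $\bar s$ lies on an optimal trajectory from $s$. They also compare with $V^*$ at the start state $s$, not at the anchor $\bar s$. For a generic projected anchor this relay gap is strictly positive and unrelated to $\delta$, so no bound of the form $L\delta$ times a constant can control it. It must be excluded by hypothesis, which the paper does tacitly by attributing the entire error to the off-manifold displacement.

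\emph{The final combination double-counts.} Even if the middle term were zero, your pieces sum to
\[
\frac{L\delta}{1-\gamma}+L\delta=\frac{(2-\gamma)\,L\delta}{1-\gamma}.
\]
The observation $1\le 1/(1-\gamma)$ only shows that the extra $L\delta$ is dominated, which gives $2L\delta/(1-\gamma)$. A nonnegative summand cannot be ``absorbed'' without changing the constant. The source of the excess is that the $j=0$ term of your series $\sum_j\gamma^jL\delta$ already is the direct Lipschitz discrepancy at the anchor, so adding the third term counts it twice.

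\paragraph{How the paper avoids this, and how to fix your route.} The paper does not split the error at all. It takes the single estimate $|V^*(s')-V^*(s_{true})|\le L\delta$ at the bootstrap anchor and lets that same per-step error propagate through the recursion:
\[
L\delta+\gamma L\delta+\gamma^2L\delta+\dots=\frac{L\delta}{1-\gamma}.
\]
The direct Lipschitz gap is the first summand, not an extra addend.

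You can recover exactly this inside your own framework by applying your perturbation inequality $\|x^*-y^*\|\le\|Tx^*-Sx^*\|/(1-\kappa)$ once:
\begin{itemize}
\item Take $x^*=V^*$ as the fixed point of an on-manifold relay recursion $T$. This is precisely the hypothesis that rules out the relay gap above.
\item Take $S$ to be the perturbed recursion anchored at $s'$, with modulus $\kappa_V\le\gamma$ from Theorem~\ref{app:thm:contraction}.
\item Bound the residual by $\|TV^*-SV^*\|\le L\delta$. Since the residual is evaluated at $V^*$, only the Lipschitz property of $V^*$ is used.
\end{itemize}
This gives $L\delta/(1-\gamma)$ with no extra term.

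\paragraph{On the obstacle you flagged.} You are right that a per-backup discrepancy of at most $L\delta$ needs Lipschitz control of rewards and transitions, not just of $V^*$. The paper's proof does not resolve this either: its step ``the error propagates over the discounted horizon'' simply assumes it. Stating it as an explicit assumption, as you propose, is the honest version.
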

\begin{proof}
Let $s_{true} \in \mathcal{M}$ be the closest physically realizable point to $s'$. By Lipschitz continuity, $|V^*(s') - V^*(s_{true})| \leq L \|s' - s_{true}\| = L \delta$. Since $s'$ is used as a bootstrapping target in the Bellman recursion, the error propagates over the discounted horizon:
$$\text{Total Error} \leq L\delta + \gamma L\delta + \gamma^2 L\delta + \dots = \frac{L \delta}{1 - \gamma}$$
In our architecture, the $\mathcal{L}_{cycle}$ constraint $D_{KL}[\text{Enc}(\text{Dec}(s')) \| s']$ directly minimizes $\delta$, thus constraining the value error.
\end{proof}
\begin{remark}
This bound mirrors the Simulation Lemma in traditional MBRL~\cite{kearns2002near} but generalizes it to non-continuous support $d_{\mathcal{G}}$. The factor $(1-\gamma)^{-1}$ represents the effective horizon over which a hallucinated anchor can corrupt the value field.
\end{remark}

\begin{corollary} \textbf{Gradient Stability}. \label{app:cor:grad_stability}
Under the conditions of Theorem~\ref{app:thm:hallucination_bound}, the variance of the policy gradient $\nabla_\theta J$ induced by generated anchors is bounded by $L_{\nabla V} \cdot \delta$, where $L_{\nabla V}$ is the Lipschitz constant of the value gradient~\cite{asadi2018lipschitz}.
\end{corollary}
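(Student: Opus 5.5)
Starting from the bound in Theorem~\ref{app:thm:hallucination_bound}, I will propagate the value error at a generated anchor $s'$ to the policy gradient. Write the gradient in its standard score-function / critic-based form, $\nabla_\theta J = \mathbb{E}_{s\sim d_{\mathcal{G}},\,a\sim\pi_\theta}[\nabla_\theta \ln \pi_\theta(a|s)\,\hat A(s,a)]$. The advantage estimate $\hat A$ bootstraps from $V_{RVF}(s')$ at anchors drawn from $\mathcal{G}$. Since $V_{RVF}(s') = V^*(s') + e(s')$ with $|e(s')| \le L\delta(s')/(1-\gamma)$, the gradient estimator splits into an \emph{ideal} term, evaluated on the projected point $\text{Proj}_{\mathcal{M}}(s')$, and a \emph{perturbation} term driven by the displacement $s' - \text{Proj}_{\mathcal{M}}(s')$.

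\textbf{Step 1: pointwise perturbation bound.} Assume a bounded score, $\|\nabla_\theta \ln\pi_\theta\|\le B$, and $L_{\nabla V}$-Lipschitz value gradients. Apply the mean value theorem along the segment from $\text{Proj}_{\mathcal{M}}(s')$ to $s'$. Each component of the gradient then moves by at most $L_{\nabla V}\|s' - \text{Proj}_{\mathcal{M}}(s')\| = L_{\nabla V}\delta$, once the score and discount factors are absorbed into the constant. This is the step that turns the value-level bound of Theorem~\ref{app:thm:hallucination_bound} into a gradient-level bound, and it is the reason $L_{\nabla V}$ replaces $L$.

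\textbf{Step 2: variance decomposition.} Let $X$ be the ideal estimator and $Y$ the perturbation. Use $\text{Var}(X+Y) \le \text{Var}(X) + \text{Var}(Y) + 2\sqrt{\text{Var}(X)\text{Var}(Y)}$. The ideal term is the trajectory-based (manifold-supported) baseline variance. From Step 1, $\text{Var}(Y) \le \mathbb{E}\|Y\|^2 \le (L_{\nabla V}\delta)^2$ up to constants. The excess variance attributable to generated anchors is therefore $\mathcal{O}(L_{\nabla V}\delta)$ in the standard-deviation sense. This gives the stated ``$L_{\nabla V}\cdot\delta$ stable'' bound, after noting that $\mathcal{L}_{mf}$, through $\mathcal{L}_{cycle}$, keeps $\delta$ uniformly small over the support of $\mathcal{G}$.

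\textbf{Main obstacle.} The hard part is making precise which quantity is linear in $\delta$. A naive variance bound is quadratic, $(L_{\nabla V}\delta)^2$, plus a cross term that is linear in $\delta$ but scaled by the baseline standard deviation. I would first state the result as a bound on the deviation of the gradient's second moment (equivalently, its standard deviation) from that of the manifold-projected estimator. This recovers linearity in $\delta$ and uses only the uniform bounds above. A further subtlety is that $\delta$ varies over anchors, which I would handle by taking $\sup_{s'\in\text{supp}(\mathcal{G})}\delta(s')$.
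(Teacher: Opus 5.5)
The paper has no proof to compare against. The corollary is only asserted, with a citation to Asadi et al., and the main-text pointer to ``details'' leads to the same restatement plus a Significance remark. Your argument therefore has to stand on its own.

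\textbf{What works.} Your Step~2 and the ``main obstacle'' discussion are sound. Read literally, $\mathrm{Var}\le L_{\nabla V}\delta$ cannot hold: it would force zero variance at $\delta=0$. Read as the excess due to anchors, it equals $2\sigma_X\sigma_Y+\sigma_Y^2$, which is linear in $\delta$ only with a constant carrying the baseline standard deviation. The right repair is the triangle inequality for standard deviations, $|\sigma_{X+Y}-\sigma_X|\le\sigma_Y\le\sup\|Y\|$. This is linear in $\delta$ once you have a pointwise bound on $Y$. You also need a coupling of the action and transition noise at $s'$ and at $\text{Proj}_{\mathcal{M}}(s')$, which you should state explicitly.

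\textbf{The gap: Step~1 does not produce $L_{\nabla V}$.} This step is supposed to supply the constant, but it does not come out of the estimator you wrote. In $\nabla_\theta\ln\pi_\theta(a|s)\hat A(s,a)$ the state enters in three ways: through values of $V$ inside $\hat A$, through the score, and through reward and dynamics. The mean value inequality applied to $V$ gives $\sup\|\nabla_s V\|$, i.e.\ $L$, not the Lipschitz constant of $\nabla_s V$. The error from Theorem~\ref{app:thm:hallucination_bound} also enters $\hat A$ directly as $L\delta/(1-\gamma)$. What this route honestly gives is $\|Y\|\le C\delta$, with $C$ built from $B$, $L$, $(1-\gamma)^{-1}$ and the state-Lipschitz constants of the score, reward and dynamics. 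You have not listed those assumptions. The result is a linear-in-$\delta$ stability bound, but with no $L_{\nabla V}$ in it.

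\textbf{The alternative route has its own problems.} To make $L_{\nabla V}$ appear, you would need a pathwise (backpropagation-through-dynamics) policy gradient, where $\nabla_s V$ multiplies the Jacobian of imagined states with respect to the policy parameters. There, $\|\nabla_s\hat V(s')-\nabla_s\hat V(\text{Proj}_{\mathcal{M}}(s'))\|\le L_{\nabla V}\delta$ holds by definition. However:
\begin{itemize}
\item the state dependence of the Jacobian adds a term proportional to $\sup\|\nabla_s\hat V\|$;
\item Theorem~\ref{app:thm:hallucination_bound} becomes useless, because a sup-norm bound on $V_{RVF}-V^*$ gives no control of $\nabla_s V_{RVF}-\nabla_s V^*$.
\end{itemize}
On that route you must directly assume the critic in the estimator has an $L_{\nabla V}$-Lipschitz gradient and bound only the displacement. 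The result is then not really a consequence of the theorem.

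On either route, ``absorbing the score and discount factors into the constant'' hides exactly the step that needs an argument.

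\textbf{Uniform $\delta$.} A uniform bound $\sup_{s'}\delta(s')$ is an extra hypothesis, not something $\mathcal{L}_{mf}$ delivers. The cycle term is a soft penalty in expectation, and it is measured against the learned encoder/decoder rather than the true manifold $\mathcal{M}$.
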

Significance: This unified analysis proves that as long as the world model maintains structural self-consistency (minimizing $\delta$), the latent jumps are mathematically guaranteed to remain anchored to physical reality. This prevents the ``hallucination loops'' common in adversarial model-based RL and ensures stable convergence toward the global manifold backbone.

\newpage
\section{Appendix E: Implementation Details}

\subsection{Detailed Experimental Results}
\begin{figure}[H]
\vskip 0.2in
\begin{center}
\centerline{\includegraphics[width=0.8\columnwidth]{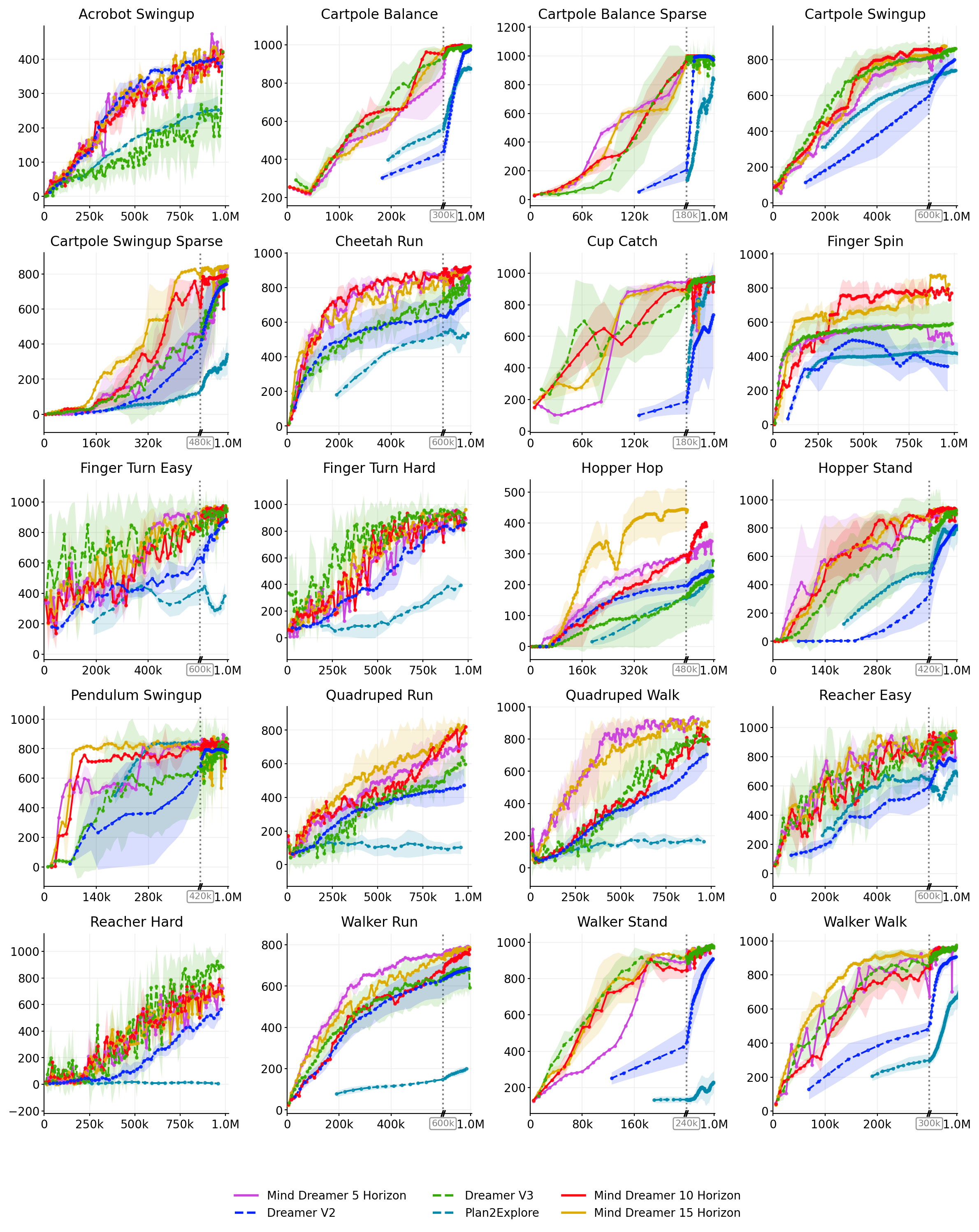}}
\caption{
      \textbf{Comparative Evaluation on DeepMind Control (DMC) Benchmarks.} 
      We present the full training curves of Mind Dreamer variants—15 Horizon (Gold), 10 Horizon (Red), and 5 Horizon (Purple)—against state-of-the-art baselines: DreamerV3 (Green Dashed), DreamerV2 (Blue Dashed), and Plan2Explore (Cyan Dashed). 
      Mind Dreamer consistently demonstrates \textbf{superior sample efficiency} and \textbf{higher asymptotic performance} across diverse continuous control tasks. 
      Notably, in sparse-reward and bottleneck environments such as \textit{Hopper Hop} and \textit{Quadruped Run}, our method (especially the 15 Horizon variant) significantly outperforms trajectory-tethered baselines by employing Active Causal Intervention (ACI) to synthesize goal-directed latent jumps. 
      On average, Mind Dreamer recovers 90\% of peak performance with a $1.67\times$ speedup compared to DreamerV3. 
      Shaded regions indicate the standard deviation over 5 random seeds.
    }
    \label{results}
  \end{center}
\end{figure}

\begin{figure}[H]
  \vskip 0.2in
  \begin{center}
    \centerline{\includegraphics[width=1.0\columnwidth]{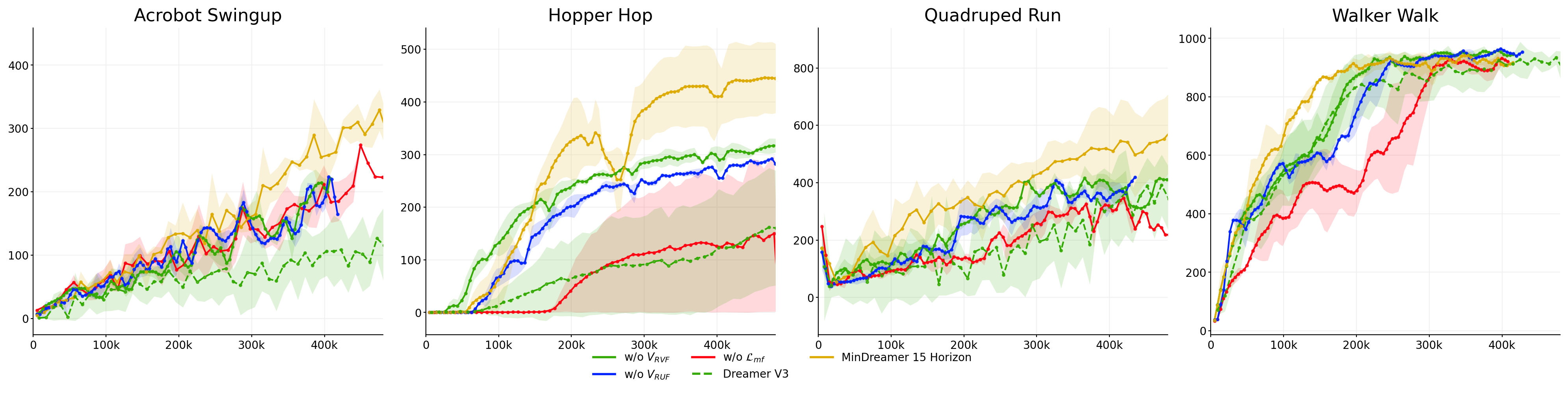}} 
    \caption{
  \textbf{Ablation Study of Mind Dreamer.} We evaluate the contribution of the Pragmatic ($V_{RVF}$) and Epistemic ($V_{RUF}$) relay functions on four representative tasks: \textit{Acrobot Swingup}, \textit{Hopper Hop}, \textit{Quadruped Run}, and \textit{Walker Walk}. 
  \textbf{(1) w/o $V_{RUF}$ (Green):} Without epistemic guidance, the generator lacks the ``curiosity'' to bridge manifold discontinuities, leading to stagnation in exploration-heavy tasks (e.g., \textit{Hopper Hop}).
  \textbf{(2) w/o $V_{RVF}$ (Blue):} Without pragmatic guidance, the generator produces high-entropy but task-irrelevant latent jumps, resulting in slower convergence despite high exploration.
  \textbf{(3) Full Mind Dreamer (Orange):} The synergistic combination ensures directed exploration toward high-value, high-uncertainty regions, significantly outperforming the component-ablated variants and the DreamerV3 baseline (Dashed).
}
    \label{results}
  \end{center}
\end{figure}

\begin{table}[H]
\caption{Performance comparison on 20 DMC Vision tasks. Scores are reported as Mean $\pm$ Std. \textbf{Bold} indicates the best performance, and \underline{underlined} indicates the second-best. N/A denotes that the algorithm failed to run or produced invalid results.}
\label{table-performance-comparison}
\begin{center}
\scriptsize
\begin{sc}
\begin{tabular}{lcccccc}
\toprule
Task & \textbf{Dreamer V2} & \textbf{Dreamer V3} & \shortstack{\textbf{Mind Dreamer} \\ 10 Horizon} & \shortstack{\textbf{Mind Dreamer} \\ 15 Horizon} & \shortstack{\textbf{Mind Dreamer} \\ 5 Horizon} & \textbf{Plan2Explore} \\
\midrule
Acrobot Swingup & 401.2 $\pm$ 4.6 & 423.2 $\pm$ 0.0 & 426.3 $\pm$ 44.9 & \underline{437.0 $\pm$ 0.0} & \textbf{474.2 $\pm$ 0.0} & 298.1 $\pm$ 112.4 \\
Cartpole Balance & 975.2 $\pm$ 6.5 & 996.0 $\pm$ 0.3 & \textbf{998.8 $\pm$ 0.0} & 996.9 $\pm$ 0.0 & \underline{998.5 $\pm$ 0.0} & 902.2 $\pm$ 80.2 \\
Cartpole Balance Sparse & 999.2 $\pm$ 0.9 & 999.0 $\pm$ 1.3 & \textbf{1000.0 $\pm$ 0.0} & \underline{1000.0 $\pm$ 0.0} & 1000.0 $\pm$ 0.0 & 815.4 $\pm$ 136.6 \\
Cartpole Swingup & 798.9 $\pm$ 40.7 & 862.2 $\pm$ 9.4 & 861.1 $\pm$ 0.0 & \textbf{876.9 $\pm$ 0.0} & \underline{869.6 $\pm$ 0.0} & 826.5 $\pm$ 26.6 \\
Cartpole Swingup Sparse & 743.8 $\pm$ 22.3 & 773.9 $\pm$ 34.0 & 795.5 $\pm$ 0.0 & \textbf{846.1 $\pm$ 0.0} & \underline{839.6 $\pm$ 0.0} & 403.9 $\pm$ 19.3 \\
Cheetah Run & 734.1 $\pm$ 74.5 & 866.5 $\pm$ 35.7 & \textbf{921.4 $\pm$ 0.0} & 894.1 $\pm$ 0.0 & \underline{913.4 $\pm$ 0.0} & 416.7 $\pm$ 10.5 \\
Cup Catch & 737.2 $\pm$ 338.3 & \underline{975.5 $\pm$ 8.4} & \textbf{978.7 $\pm$ 0.0} & 971.3 $\pm$ 0.0 & 972.8 $\pm$ 0.0 & 953.3 $\pm$ 12.0 \\
Finger Spin & 495.7 $\pm$ 97.5 & 593.7 $\pm$ 136.5 & \underline{802.4 $\pm$ 0.0} & \textbf{878.1 $\pm$ 0.0} & 582.3 $\pm$ 1.1 & 901.3 $\pm$ 44.3 \\
Finger Turn Easy & 886.8 $\pm$ 32.3 & \textbf{977.9 $\pm$ 18.3} & \underline{976.7 $\pm$ 0.0} & 973.4 $\pm$ 0.0 & 967.2 $\pm$ 3.5 & 959.1 $\pm$ 7.4 \\
Finger Turn Hard & 848.3 $\pm$ 58.9 & \textbf{965.1 $\pm$ 16.4} & 954.8 $\pm$ 0.0 & \underline{962.9 $\pm$ 0.0} & 960.4 $\pm$ 0.0 & 959.8 $\pm$ 1.7 \\
Hopper Hop & 245.3 $\pm$ 37.2 & 279.1 $\pm$ 90.4 & \underline{401.2 $\pm$ 0.0} & \textbf{446.0 $\pm$ 67.1} & 344.3 $\pm$ 0.0 & 141.3 $\pm$ 51.9 \\
Hopper Stand & 817.4 $\pm$ 28.4 & 926.1 $\pm$ 13.1 & \textbf{946.0 $\pm$ 0.0} & \underline{945.8 $\pm$ 0.0} & 940.7 $\pm$ 0.0 & 383.7 $\pm$ 33.3 \\
Pendulum Swingup & 796.6 $\pm$ 20.5 & 871.4 $\pm$ 49.3 & 864.6 $\pm$ 45.7 & \underline{880.7 $\pm$ 5.8} & \textbf{896.8 $\pm$ 0.0} & 797.5 $\pm$ 39.3 \\
Quadruped Run & 474.5 $\pm$ 99.0 & 639.4 $\pm$ 96.1 & \underline{821.4 $\pm$ 0.0} & \textbf{833.1 $\pm$ 49.7} & 718.0 $\pm$ 135.0 & 418.5 $\pm$ 22.2 \\
Quadruped Walk & 706.5 $\pm$ 85.3 & 827.8 $\pm$ 121.8 & 869.8 $\pm$ 0.0 & \underline{926.0 $\pm$ 3.9} & \textbf{939.3 $\pm$ 0.0} & 958.3 $\pm$ 6.8 \\
Reacher Easy & 790.7 $\pm$ 66.0 & 962.1 $\pm$ 25.8 & \underline{974.0 $\pm$ 0.0} & \textbf{978.8 $\pm$ 0.0} & 973.7 $\pm$ 0.0 & 972.9 $\pm$ 9.5 \\
Reacher Hard & 567.4 $\pm$ 73.9 & \textbf{925.7 $\pm$ 35.2} & \underline{788.0 $\pm$ 55.4} & 748.9 $\pm$ 19.9 & 748.0 $\pm$ 60.6 & 954.9 $\pm$ 4.7 \\
Walker Run & 684.6 $\pm$ 88.2 & 691.6 $\pm$ 109.5 & 777.7 $\pm$ 0.0 & \underline{788.5 $\pm$ 0.0} & \textbf{793.3 $\pm$ 4.3} & 436.1 $\pm$ 6.3 \\
Walker Stand & 910.3 $\pm$ 26.5 & \textbf{986.5 $\pm$ 8.7} & \underline{981.8 $\pm$ 0.0} & 969.8 $\pm$ 0.0 & 967.6 $\pm$ 0.0 & 958.6 $\pm$ 7.3 \\
Walker Walk & 906.6 $\pm$ 22.2 & \textbf{974.6 $\pm$ 0.0} & 963.1 $\pm$ 0.0 & 941.4 $\pm$ 17.5 & \underline{964.6 $\pm$ 0.0} & 956.4 $\pm$ 4.0 \\
\midrule
\textbf{AVG. Score} & 668.3 & 780.3 & \underline{820.5} & \textbf{831.1} & 801.2 & 720.7 \\
\bottomrule
\end{tabular}
\end{sc}
\end{center}
\vskip -0.1in
\end{table}

\begin{table}[H]
\caption{Sample Efficiency: Number of environment steps (in thousands, k) required to reach \textbf{90\%} of DreamerV3's maximum return. Results are Mean $\pm$ Std. \textbf{Bold} indicates the fastest convergence (fewest steps), and \underline{underlined} is the second fastest. N/A indicates the threshold was not reached.}
\label{table-threshold-90}
\begin{center}
\scriptsize
\begin{sc}
\begin{tabular}{lcccccc}
\toprule
Task & \textbf{Dreamer V2} & \textbf{Dreamer V3} & \shortstack{\textbf{Mind Dreamer} \\ 10 Horizon} & \shortstack{\textbf{Mind Dreamer} \\ 15 Horizon} & \shortstack{\textbf{Mind Dreamer} \\ 5 Horizon} & \textbf{Plan2Explore} \\
\midrule
Acrobot Swingup & 763.5k & 985.4k & \underline{724.9k} & \textbf{715.6k} & 864.9k & N/A \\
Cartpole Balance & 732.9k & \underline{271.9k} & \textbf{259.1k} & 277.4k & 320.7k & N/A \\
Cartpole Balance Sparse & 400.0k & 180.2k & \textbf{174.4k} & \underline{178.5k} & 182.1k & N/A \\
Cartpole Swingup & 895.2k & 446.3k & \textbf{379.4k} & \underline{394.5k} & 484.5k & 2110.0k \\
Cartpole Swingup Sparse & 804.5k & 810.5k & \underline{452.6k} & \textbf{403.0k} & 769.8k & N/A \\
Cheetah Run & N/A & 698.2k & \textbf{245.7k} & 392.6k & \underline{295.0k} & N/A \\
Cup Catch & N/A & 219.7k & 165.0k & \underline{132.5k} & \textbf{112.3k} & 2023.3k \\
Finger Spin & N/A & 257.0k & \underline{245.0k} & \textbf{112.1k} & 343.3k & 1656.7k \\
Finger Turn Easy & 975.8k & 552.6k & 624.9k & \textbf{462.7k} & \underline{465.0k} & 1433.3k \\
Finger Turn Hard & N/A & \textbf{579.9k} & \underline{593.2k} & 694.9k & 784.9k & 1806.7k \\
Hopper Hop & N/A & 984.1k & \underline{411.6k} & \textbf{168.3k} & 351.7k & 416.7k \\
Hopper Stand & N/A & 652.0k & \textbf{285.3k} & \underline{317.8k} & 380.4k & N/A \\
Pendulum Swingup & 522.6k & 760.7k & 374.6k & \textbf{86.1k} & \underline{265.8k} & 246.7k \\
Quadruped Run & N/A & 925.0k & 745.7k & \textbf{485.0k} & \underline{694.9k} & N/A \\
Quadruped Walk & N/A & 806.6k & 844.9k & \underline{455.0k} & \textbf{386.3k} & 713.3k \\
Reacher Easy & N/A & 503.7k & 441.5k & \textbf{373.1k} & \underline{405.0k} & 636.7k \\
Reacher Hard & N/A & \textbf{767.2k} & N/A & N/A & N/A & 1780.0k \\
Walker Run & 589.8k & 523.3k & 504.9k & \underline{353.9k} & \textbf{285.0k} & N/A \\
Walker Stand & 895.2k & \textbf{170.6k} & 184.9k & 181.3k & \underline{180.0k} & 846.7k \\
Walker Walk & 768.4k & 266.5k & 325.6k & \textbf{174.8k} & \underline{244.3k} & 1000.0k \\
\bottomrule
\end{tabular}
\end{sc}
\end{center}
\vskip -0.1in
\end{table}

\begin{table}[H]
\caption{Sample Efficiency: Number of environment steps (in thousands, k) required to reach \textbf{85\%} of DreamerV3's maximum return. Results are Mean $\pm$ Std. \textbf{Bold} indicates the fastest convergence (fewest steps), and \underline{underlined} is the second fastest. N/A indicates the threshold was not reached.}
\label{table-threshold-85}
\begin{center}
\scriptsize
\begin{sc}
\begin{tabular}{lcccccc}
\toprule
Task & \textbf{Dreamer V2} & \textbf{Dreamer V3} & \shortstack{\textbf{Mind Dreamer} \\ 10 Horizon} & \shortstack{\textbf{Mind Dreamer} \\ 15 Horizon} & \shortstack{\textbf{Mind Dreamer} \\ 5 Horizon} & \textbf{Plan2Explore} \\
\midrule
Acrobot Swingup & \textbf{589.8k} & 985.4k & 714.9k & \underline{645.6k} & 794.9k & N/A \\
Cartpole Balance & 661.8k & \underline{252.2k} & \textbf{251.4k} & 262.3k & 305.3k & N/A \\
Cartpole Balance Sparse & 381.7k & \underline{170.4k} & \textbf{166.3k} & 178.5k & 174.4k & N/A \\
Cartpole Swingup & 785.1k & \underline{367.4k} & \textbf{332.6k} & 377.5k & 443.1k & 1646.7k \\
Cartpole Swingup Sparse & 730.0k & 780.9k & \underline{405.0k} & \textbf{393.1k} & 769.8k & N/A \\
Cheetah Run & N/A & 609.5k & \textbf{195.6k} & 295.7k & \underline{245.0k} & N/A \\
Cup Catch & N/A & 180.2k & 155.0k & \underline{112.4k} & \textbf{112.3k} & 1570.0k \\
Finger Spin & N/A & \underline{168.1k} & 235.0k & \textbf{102.3k} & 204.0k & 1503.3k \\
Finger Turn Easy & 879.3k & \textbf{166.4k} & 504.9k & 422.9k & \underline{415.0k} & 1193.3k \\
Finger Turn Hard & 840.7k & \textbf{372.8k} & \underline{533.4k} & 554.9k & 624.9k & 1493.3k \\
Hopper Hop & 821.4k & 984.1k & 376.9k & \textbf{158.4k} & \underline{313.2k} & 286.7k \\
Hopper Stand & 900.8k & 474.0k & \textbf{275.2k} & \underline{302.2k} & 372.5k & N/A \\
Pendulum Swingup & 465.8k & 454.2k & \underline{108.3k} & \textbf{76.6k} & 265.8k & 230.0k \\
Quadruped Run & N/A & 875.7k & 685.6k & \textbf{415.0k} & \underline{614.9k} & N/A \\
Quadruped Walk & 975.8k & 727.8k & 844.9k & \underline{445.0k} & \textbf{386.3k} & 666.7k \\
Reacher Easy & N/A & 394.9k & 431.8k & \textbf{293.6k} & \underline{295.0k} & 573.3k \\
Reacher Hard & N/A & \textbf{698.2k} & \underline{967.8k} & N/A & N/A & 1516.7k \\
Walker Run & 512.6k & 414.3k & 494.9k & \underline{324.0k} & \textbf{235.0k} & N/A \\
Walker Stand & 730.1k & \textbf{141.0k} & 175.4k & \underline{173.7k} & 180.0k & 656.7k \\
Walker Walk & 560.4k & 197.3k & 294.3k & \textbf{145.8k} & \underline{164.5k} & 706.7k \\
\bottomrule
\end{tabular}
\end{sc}
\end{center}
\vskip -0.1in
\end{table}

\begin{table}[H]
\caption{Sample Efficiency: Number of environment steps (in thousands, k) required to reach \textbf{80\%} of DreamerV3's maximum return. Results are Mean $\pm$ Std. \textbf{Bold} indicates the fastest convergence (fewest steps), and \underline{underlined} is the second fastest. N/A indicates the threshold was not reached.}
\label{table-threshold-80}
\begin{center}
\scriptsize
\begin{sc}
\begin{tabular}{lcccccc}
\toprule
Task & \textbf{Dreamer V2} & \textbf{Dreamer V3} & \shortstack{\textbf{Mind Dreamer} \\ 10 Horizon} & \shortstack{\textbf{Mind Dreamer} \\ 15 Horizon} & \shortstack{\textbf{Mind Dreamer} \\ 5 Horizon} & \textbf{Plan2Explore} \\
\midrule
Acrobot Swingup & \underline{531.9k} & 985.4k & 694.9k & \textbf{525.5k} & 704.9k & 3256.7k \\
Cartpole Balance & 608.5k & \textbf{222.7k} & \underline{243.7k} & 254.7k & 289.9k & 1903.3k \\
Cartpole Balance Sparse & 363.3k & \underline{160.5k} & \textbf{158.3k} & 171.3k & 166.7k & 986.7k \\
Cartpole Swingup & 693.4k & \textbf{308.3k} & \underline{317.0k} & 326.7k & 377.0k & 1273.3k \\
Cartpole Swingup Sparse & 692.7k & 751.3k & \underline{395.5k} & \textbf{393.1k} & 760.0k & N/A \\
Cheetah Run & 821.4k & 520.7k & \textbf{165.5k} & 269.3k & \underline{185.0k} & N/A \\
Cup Catch & N/A & 170.4k & 145.0k & \underline{105.7k} & \textbf{104.7k} & 1320.0k \\
Finger Spin & 399.3k & \underline{128.6k} & 205.0k & \textbf{102.3k} & 134.3k & 1276.7k \\
Finger Turn Easy & 763.5k & \textbf{77.3k} & 465.0k & \underline{412.9k} & 415.0k & 1096.7k \\
Finger Turn Hard & 782.8k & \textbf{333.4k} & \underline{473.5k} & 544.9k & 624.9k & 1376.7k \\
Hopper Hop & 647.7k & 915.1k & 376.9k & \textbf{153.5k} & \underline{284.3k} & 186.7k \\
Hopper Stand & 825.2k & 414.7k & \underline{265.2k} & \textbf{263.1k} & 341.3k & N/A \\
Pendulum Swingup & 446.9k & 454.2k & \underline{98.5k} & \textbf{76.6k} & 265.8k & 206.7k \\
Quadruped Run & N/A & 875.7k & 675.6k & \textbf{375.0k} & \underline{554.9k} & N/A \\
Quadruped Walk & 937.2k & 708.0k & 774.9k & \textbf{335.0k} & \underline{358.4k} & 616.7k \\
Reacher Easy & 749.5k & 385.0k & 412.4k & \underline{283.6k} & \textbf{205.0k} & 526.7k \\
Reacher Hard & N/A & \textbf{698.2k} & \underline{877.5k} & 917.6k & 964.9k & 1320.0k \\
Walker Run & 435.4k & 394.5k & 375.0k & \underline{324.0k} & \textbf{235.0k} & N/A \\
Walker Stand & 638.4k & \textbf{131.1k} & 165.9k & \underline{135.3k} & 180.0k & 566.7k \\
Walker Walk & 503.6k & 187.4k & 224.0k & \textbf{125.1k} & \underline{164.5k} & 566.7k \\
\bottomrule
\end{tabular}
\end{sc}
\end{center}
\vskip -0.1in
\end{table}

\begin{table}[H]
\caption{Sample Efficiency: Number of environment steps (in thousands, k) required to reach \textbf{75\%} of DreamerV3's maximum return. Results are Mean $\pm$ Std. \textbf{Bold} indicates the fastest convergence (fewest steps), and \underline{underlined} is the second fastest. N/A indicates the threshold was not reached.}
\label{table-threshold-75}
\begin{center}
\scriptsize
\begin{sc}
\begin{tabular}{lcccccc}
\toprule
Task & \textbf{Dreamer V2} & \textbf{Dreamer V3} & \shortstack{\textbf{Mind Dreamer} \\ 10 Horizon} & \shortstack{\textbf{Mind Dreamer} \\ 15 Horizon} & \shortstack{\textbf{Mind Dreamer} \\ 5 Horizon} & \textbf{Plan2Explore} \\
\midrule
Acrobot Swingup & 512.6k & 985.4k & 574.9k & \underline{475.4k} & \textbf{475.0k} & 3020.0k \\
Cartpole Balance & 555.2k & \textbf{203.1k} & 243.7k & \underline{239.6k} & 266.8k & 1003.3k \\
Cartpole Balance Sparse & 345.0k & \textbf{150.7k} & \underline{158.3k} & 171.3k & 166.7k & 896.7k \\
Cartpole Swingup & 656.8k & \textbf{278.8k} & \underline{309.2k} & 309.8k & 360.5k & 1123.3k \\
Cartpole Swingup Sparse & 636.9k & 573.6k & \underline{385.9k} & \textbf{383.1k} & 750.2k & N/A \\
Cheetah Run & 686.3k & 451.7k & \textbf{145.4k} & \underline{181.2k} & 185.0k & N/A \\
Cup Catch & 968.5k & 150.7k & 135.0k & \underline{105.7k} & \textbf{104.7k} & 1200.0k \\
Finger Spin & 361.7k & \textbf{89.1k} & 165.0k & \underline{92.6k} & 114.5k & 1163.3k \\
Finger Turn Easy & 724.9k & \textbf{77.3k} & 465.0k & \underline{253.8k} & 415.0k & 1020.0k \\
Finger Turn Hard & 763.5k & \textbf{333.4k} & \underline{473.5k} & 544.9k & 514.9k & 1293.3k \\
Hopper Hop & 570.5k & 717.9k & 351.0k & \textbf{153.5k} & \underline{236.1k} & 156.7k \\
Hopper Stand & 730.6k & 345.5k & \textbf{245.2k} & \underline{255.2k} & 333.4k & N/A \\
Pendulum Swingup & 428.0k & 414.7k & \underline{98.5k} & \textbf{76.6k} & 256.0k & 193.3k \\
Quadruped Run & N/A & 777.1k & 585.5k & \textbf{345.0k} & \underline{465.0k} & N/A \\
Quadruped Walk & 879.3k & 619.3k & 684.9k & \underline{335.0k} & \textbf{330.5k} & 573.3k \\
Reacher Easy & 711.7k & 345.5k & 354.2k & \underline{253.8k} & \textbf{205.0k} & 490.0k \\
Reacher Hard & N/A & \textbf{589.7k} & \underline{707.0k} & 827.4k & 804.9k & 1196.7k \\
Walker Run & 377.5k & 335.0k & 335.0k & \underline{244.3k} & \textbf{205.0k} & N/A \\
Walker Stand & 546.7k & \underline{121.3k} & 147.0k & \textbf{120.0k} & 171.7k & 510.0k \\
Walker Walk & 446.9k & 167.6k & 184.9k & \textbf{116.8k} & \underline{164.5k} & 490.0k \\
\bottomrule
\end{tabular}
\end{sc}
\end{center}
\vskip -0.1in
\end{table}

\begin{table}[H]
\centering
\caption{Comprehensive list of mathematical symbols and notations.}
\label{tab:symbols}
\renewcommand{\arraystretch}{1.3} 
\begin{tabular}{@{}lp{10cm}l@{}}
\toprule
\textbf{Symbol} & \textbf{Description} & \textbf{First Appearance} \\
\midrule
\multicolumn{3}{@{}l}{\textit{\textbf{State Spaces and World Model}}} \\
$\mathcal{X}, o$ & High-dimensional observation space and raw observation $o \in \mathcal{X}$ & Sec. 3.1 \\
$\mathcal{M}, s$ & Low-dimensional latent manifold and latent state $s \in \mathcal{M}$ & Sec. 3.1 \\
$e_\psi, p_\psi$ & Observation encoder and transition world model parameterized by $\psi$ & Sec. 3.1 \\
$a_t, r_t$ & Action and extrinsic reward at time step $t$ & Sec. 3.1, Eq. 3 \\
$V_\phi, V^*$ & Parametric scalar value function field and the optimal value field & Sec. 3.1 \\
$U_{\phi_u}$ & Parametric Bellman uncertainty function & Eq. 4 \\

\midrule
\multicolumn{3}{@{}l}{\textit{\textbf{Active Causal Intervention (ACI)}}} \\
$s'$ & Synthesized interventional anchor (destination of a latent jump) & Def. 3.1 \\
$p_{gen}$ & Learned generator distribution for initializing imagination at $s'$ & Def. 3.1 \\
$\mathcal{G}_\theta$ & Adversarial state generator parameterized by $\theta$ & Sec. 3.2 \\
$\epsilon$ & Standard Gaussian noise vector for the generator $\epsilon \sim \mathcal{N}(0, I)$ & Eq. 5, Alg. 1 \\
$\xi$ & An imagined trajectory or rollout path & Eq. 2 \\

\midrule
\multicolumn{3}{@{}l}{\textit{\textbf{Expected Free Energy and Objectives}}} \\
$G(\pi, t)$ & Local Expected Free Energy (EFE) for policy $\pi$ at step $t$ & Def. 3.2, Eq. 1 \\
$\Psi(s, s')$ & Relay Expected Free Energy (R-EFE) functional bridging $s$ to $s'$ & Eq. 2 \\
$\mathcal{P}, \mathcal{E}$ & Pragmatic Value (goal-alignment) and Epistemic Value (info-gain) & Eq. 1 \\
$\mathcal{I}$ & Mutual Information quantifying epistemic uncertainty reduction & Eq. 1, Eq. 4 \\
$V_{RVF}$ & Relay Value Function (Pragmatic Proxy) & Eq. 3 \\
$V_{RUF}$ & Relay Uncertainty Function (Epistemic Proxy) & Eq. 4 \\
$\mathcal{L}_{mf}$ & Manifold structural consistency constraint (Cycle/Dynamics coherence) & Eq. 5 \\
$\beta, \eta$ & Weighting coefficients for Epistemic (RUF) and Pragmatic (RVF) potentials & Eq. 1, Eq. 5 \\
$\gamma$ & Pragmatic temporal discount factor & Eq. 3 \\
$\gamma^2$ & Quadratic epistemic discount factor establishing the Epistemic Horizon & Eq. 4 \\

\midrule
\multicolumn{3}{@{}l}{\textit{\textbf{Theoretical Analysis and Operators}}} \\
$\tau_{s'}$ & First hitting time to the synthesized anchor $s'$ & Def. 4.1 \\
$\mathcal{T}_V, \mathcal{T}_U$ & Pragmatic and Epistemic Relay Operators & Def. 4.1 \\
$q_{\mathcal{G}}, q^*$ & Sampling distribution induced by $\mathcal{G}$ and optimal proposal distribution & Thm. 4.4 \\
$\mathcal{F}(\theta)$ & Fisher Information Matrix of the world model parameters $\theta$ & Prop. 4.5 \\
$\Delta V(s, s')$ & Variational Bellman Residual (Relay Advantage); $\Delta V(s) = \sup_{s'} \Delta V(s, s')$ & Thm. 4.6 \\
$\epsilon_V(s)$ & Value residual $V^*(s) - V_\phi(s)$ (Bellman error) & Thm. 4.6 \\
$\Phi$ & Conductance of the latent manifold $\mathcal{M}$ & Thm. 4.8 \\
$\nu$ & Convergence speedup ratio relative to trajectory-bound sampling & Thm. 4.8 \\
$\delta(s')$ & Hallucination error / Euclidean deviation from the true manifold $\mathcal{M}$ & Thm. 4.9 \\
$L$ & Lipschitz constant of the optimal value field $V^*$ & Thm. 4.9 \\
\bottomrule
\end{tabular}
\end{table}

\begin{table}[H]
\caption{Hyperparameters for Mind Dreamer and DreamerV3 Baseline. We retain the default parameters of DreamerV3 to ensure a fair comparison, while introducing MD-specific parameters for the generator and relay potentials.}
\label{tab:hyperparameters}
\vskip 0.15in
\begin{center}
\begin{small}
\begin{sc}
\begin{tabular}{llc}
\toprule
\textbf{Category} & \textbf{Hyperparameter} & \textbf{Value} \\
\midrule
\multicolumn{3}{c}{\textit{DreamerV3 Base (DMC Vision)}} \\
\midrule
Environment & Action Repeat & 2 \\
& Image Size & $64 \times 64$ \\
\midrule
World Model & RSSM Deterministic / Stochastic & 512 / 32 \\
& Discrete Latent Categories & 32 \\
& Encoder/Decoder CNN Depth & 32 \\
& MLP Layers / Units & 5 / 1024 \\
& KL Free Nats / Dyn Scale / Rep Scale & 1.0 / 0.5 / 0.1 \\
\midrule
Behavior & Actor / Critic Layers & 2 \\
& Imagination Horizon ($H$) & 15 \\
& Discount Factor ($\gamma$) / Lambda ($\lambda$) & 0.997 / 0.95 \\
& Actor Entropy Coefficient & $3 \times 10^{-4}$ \\
\midrule
Optimization & Batch Size $\times$ Length & $16 \times 64$ \\
& Model / Actor / Critic Learning Rate & $1 \times 10^{-4}$ / $3 \times 10^{-5}$ / $3 \times 10^{-5}$ \\
& Gradient Clipping & 1000.0 \\
\midrule
\multicolumn{3}{c}{\textit{Mind Dreamer Specific (Ours)}} \\
\midrule
Generator $\mathcal{G}$ & MLP Layers (Enc / Dyn) & 2 / 2 \\
& InfoNCE Temperature & 0.1 \\
& InfoNCE Scale & 1.0 \\
& Cycle Consistency Scale ($\lambda_{mf}$) & 0.1 \\
& Dynamics Consistency Scale & 0.01 \\
\midrule
InfoNCE Weights & Real Buffer Negatives ($w_{real}$) & 1.0 \\
& Generated Negatives ($w_g$) & 0.4 \\
& Cross-Batch Negatives ($w_{cross}$) & 0.2 \\
& Elite Pool Negatives ($w_{elite}$) & 1.2 \\
\midrule
Relay Potentials & $k$-Horizon & 15 \\
(RVF / RUF) & EMA Target Update Freq. / Fraction & 100 / 0.05 \\
& Pragmatic (SVF) Weight ($w_v$) & 1.0 \\
& Epistemic (SUF) Weight ($w_u$) & 1.0 \\
\bottomrule
\end{tabular}
\end{sc}
\end{small}
\end{center}
\vskip -0.1in
\end{table}

\end{document}